
\documentclass{article}

\usepackage{microtype}
\usepackage{graphicx, caption, subcaption}
\usepackage{booktabs} 
\usepackage{ragged2e}



\usepackage[colorlinks,linkcolor=black,citecolor=blue, pagebackref=true]{hyperref}
\renewcommand*{\backrefalt}[4]{%
    \ifcase #1 \footnotesize{(Not cited.)}%
    \or        \footnotesize{(Cited on page~#2.)}%
    \else      \footnotesize{(Cited on pages~#2.)}%
    \fi}
\usepackage{bookmark}

\usepackage[accepted]{icml2024}

\usepackage{amsmath}
\usepackage{amssymb}
\usepackage{mathtools}
\usepackage{amsthm}

\usepackage[capitalize,noabbrev]{cleveref}

\theoremstyle{plain}
\newtheorem{theorem}{Theorem}[section]
\newtheorem{proposition}[theorem]{Proposition}
\newtheorem{lemma}[theorem]{Lemma}

\theoremstyle{definition}
\newtheorem{definition}[theorem]{Definition}

\theoremstyle{remark}

\usepackage[textsize=tiny]{todonotes}

\usepackage{eqnarray,amsmath}
\usepackage[utf8]{inputenc} 
\usepackage[T1]{fontenc}    
\usepackage{booktabs}       
\usepackage{amsfonts}       
\usepackage{nicefrac}       
\usepackage{microtype}      


\usepackage{caption}
\usepackage{subcaption}

\usepackage{epsf}
\usepackage{epsfig}
\usepackage{fancyhdr}
\usepackage{graphics}
\usepackage{graphicx}
\usepackage{psfrag}

\usepackage{url}

\usepackage{color}

\usepackage{amsthm}
\usepackage{amsmath}
\usepackage{amssymb,bbm}
\usepackage[justification=centering]{caption}
\usepackage{algorithmic}
\usepackage{algorithm}
\usepackage{textcomp}
\usepackage{siunitx}
\usepackage{wrapfig}
\usepackage{algorithmic}
\usepackage{algorithm}
\usepackage{multirow}
\usepackage{multicol}




\newcommand{\bbP}{\mathbb{P}}
\newcommand{\bbE}{\mathbb{E}}

\DeclareMathOperator*{\argmax}{arg\,max}
\DeclareMathOperator*{\argmin}{arg\,min}


\def\wrt{w.r.t.~}

\newcommand{\dboijn}{\Delta\beta_{1ij}^{n}}
\newcommand{\daijnl}{\Delta a_{ij\ell}^{n}}
\newcommand{\dbijnl}{\Delta b_{ij\ell}^{n}}

\newcommand{\bzin}{\beta^{n}_{0i}}
\newcommand{\boin}{\beta^{n}_{1i}}
\newcommand{\ain}{a_{i}^n}
\newcommand{\bin}{b_{i}^n}

\newcommand{\bzj}{\beta_{0j}^{*}}
\newcommand{\boj}{\beta_{1j}^{*}}
\newcommand{\aj}{a_{j}^{*}}
\newcommand{\bj}{b_{j}^{*}}

\newcommand{\zerod}{\mathbf{0}_d}


\newcommand{\dint}{\mathrm{d}}
\newcommand{\gmod}{\widetilde{g}}
\newcommand{\tmod}{\widetilde{T}}
\newcommand{\umod}{\widetilde{u}}
\newcommand{\vmod}{\widetilde{v}}
\newcommand{\rmod}{\widetilde{R}}

\newcommand{\softmax}{\mathrm{Softmax}}

\icmltitlerunning{A General Theory for Softmax Gating Multinomial Logistic Mixture of Experts}

\begin{document}

\twocolumn[
\icmltitle{A General Theory for Softmax Gating Multinomial Logistic Mixture of Experts}



\icmlsetsymbol{equal}{*}

\begin{icmlauthorlist}
\icmlauthor{Huy Nguyen}{sds}
\icmlauthor{Pedram Akbarian}{ece}
\icmlauthor{TrungTin Nguyen}{uq,uga}
\icmlauthor{Nhat Ho}{sds}
\end{icmlauthorlist}

\icmlaffiliation{sds}{Department of Statistics and Data Sciences}
\icmlaffiliation{ece}{Department of Electrical and Computer Engineering, The University of Texas at Austin}
\icmlaffiliation{uga}{Univ. Grenoble Alpes, Inria, CNRS, Grenoble INP, LJK}
\icmlaffiliation{uq}{School of Mathematics and Physics, The University of Queensland}

\icmlcorrespondingauthor{Huy Nguyen}{huynm@utexas.edu}

\icmlkeywords{Machine Learning, ICML}

\vskip 0.3in
]



\printAffiliationsAndNotice{}  

\begin{abstract}
    Mixture-of-experts (MoE) model incorporates the power of multiple submodels via gating functions to achieve greater performance in numerous regression and classification applications. From a theoretical perspective, while there have been previous attempts to comprehend the behavior of that model under the regression settings through the convergence analysis of maximum likelihood estimation in the Gaussian MoE model, such analysis under the setting of a classification problem has remained missing in the literature. We close this gap by establishing the convergence rates of density estimation and parameter estimation in the softmax gating multinomial logistic MoE model. Notably, when part of the expert parameters vanish, these rates are shown to be slower than polynomial rates owing to an inherent interaction between the softmax gating and expert functions via partial differential equations. To address this issue, we propose using a novel class of modified softmax gating functions which transform the input before delivering them to the gating functions. As a result, the previous interaction disappears and the parameter estimation rates are significantly improved.
\end{abstract}
\section{Introduction}
\label{sec:introduction}
Mixture of experts (MoE) \cite{Jacob_Jordan-1991,Jordan-1994} is a statistical machine learning model which aggregates several submodels represented by expert networks associated with standard softmax gating functions. 
Due to its modular and flexible structure, there has been a surge of interests in leveraging the MoE model in various fields, including large language models \cite{shazeer2017sparse,lepikhin_gshard_2021,zhou2022expertchoice,du2022glam,fedus2022swtich,zhou2023brainformers,do_hyperrouter_2023}, computer vision \cite{deleforge_high_dimensional_2015,Riquelme2021scalingvision,dosovitskiy_image_2021,bao_vlmo_2022,liang_m3vit_2022}, speech recognition \cite{peng_bayesian_1996,gulati_conformer_2020,You_Speech_MoE_2}, reinforcement learning \cite{ren2021probabilistic,chow_mixture_expert_2023} and multi-task learning \cite{hazimeh2021multitask,gupta2022sparsely,chen2023modsquad}. Despite its recent progress in the aforementioned applications, the theoretical comprehension of the MoE model has been found relatively restricted in the literature. 

In general, that model can be formulated as either a regression problem, namely when the distribution of MoE outputs is continuous, or a classification problem, i.e., when the MoE outputs follow a discrete distribution. Under the setting of a regression problem, there are some previous works attempting to theoretically understand the MoE model through the convergence analysis of maximum likelihood estimation (MLE) by focusing on Gaussian MoE equipped with different types of gating functions. First, \cite{ho2022gaussian} established the convergence rates of parameter estimation in the covariate-free gating Gaussian MoE model using the generalized Wasserstein loss function \cite{Villani-03,Villani-09}. In that paper, they discovered that those parameter estimation rates were significantly slow owing to an interaction between expert parameters via some partial differential equations (PDEs). Next, \cite{nguyen2024gaussian} considered the Gaussian density gating Gaussian MoE model. Since that gating function depended on the covariates, there was another interaction between the parameters of the gating function and the expert function. To this end, the authors proposed Voronoi loss functions to capture those interactions and demonstrate that the convergence rates of parameter estimation were determined by the solvability of a system of polynomial equations. Subsequently, the Gaussian MoE models with softmax gating function and top-K sparse softmax gating function were investigated in \cite{nguyen2023demystifying} and \cite{nguyen2024statistical}, respectively. Due to the non-linearity and sophisticated structures of those gating functions, the parameter estimation rates were shown to vary with the solvability of a more complex system of polynomial equations than that in \cite{nguyen2024gaussian}, and became substantially slow when the number of fitted experts increased. 

On the other hand, such theoretical analysis of the MoE model under the setting of a classification problem has remained poorly understood, to the best of our knowledge. Therefore, the main objective of our paper is to provide new insights on the convergence behavior of MLE in the softmax gating multinomial logistic MoE model \cite{chen1999multiclass,yuksel2010vmoe,huynh_estimation_2019,pham_functional_2022}. Before going into further details, it is necessary to introduce the formulation of that model and associated assumptions.

\textbf{Problem Setup.} In this paper, we assume that the output $Y\in\{1,2,\ldots,K\}$ is a discrete response variable, where $K\in\mathbb{N}$, while $X\in\mathcal{X}$ is an covariate vector having an effect on $Y$, in which $\mathcal{X}$ is a compact subset of $\mathbb{R}^d$. Next, the data points $(X_1,Y_1),(X_2,Y_2),\ldots,(X_n,Y_n)$ are independently drawn from the standard softmax gating multinomial logistic mixture of experts of order $k_*$, which admits the conditional probability function $g_{G_{*}}(Y=s|X)$ defined for any $s\in\{1,2,\ldots,K\}$ as follows:
\begin{align}
    &\sum_{i=1}^{k_*}~\softmax((\beta^*_{1i})^{\top}X+\beta^*_{0i})\times f(Y=s|X;a^*_{i},b^*_{i})
    \nonumber\\
    &:=\sum_{i=1}^{k_*}~\frac{\exp((\beta^*_{1i})^{\top}X+\beta^*_{0i})}{\sum_{j=1}^{k_*}\exp((\boj)^{\top}X+\bzj)}\nonumber\\
    \label{eq:conditional_density}
    &\hspace{2.2cm}\times\frac{\exp(a^*_{is}+(b^*_{is})^{\top}X)}{\sum_{\ell=1}^{K}\exp(a^*_{i\ell}+(b^*_{i\ell})^{\top}X)}.
\end{align}
Here, each expert $f(\cdot|X;a^*_i,b^*_i)$ is a multinomial logistic regression with parameters $a^*_i:=(a^*_{i1},\ldots,a^*_{iK})\in\mathbb{R}^K$ and $b^*_i:=(b^*_{i1},\ldots,b^*_{iK})\in\mathbb{R}^{d\times K}$. Meanwhile, $\beta^*_{1i}\in\mathbb{R}^d$ and $\beta^*_{0i}\in\mathbb{R}$ are referred to as gating parameters. Additionally,  $G_*:=\sum_{i=1}^{k_*}\exp(\beta^*_{0i})\delta_{(\beta^*_{1i},a^*_{i},b^*_{i})}$ denotes a true yet unknown \emph{mixing measure}, that is, a combination of Dirac measures $\delta$ associated with true parameters $(\beta^*_{0i},\beta^*_{1i},a^*_{i},b^*_{i})\in\Theta$, where $\Theta$ is a bounded subset of $\mathbb{R}\times\mathbb{R}^d\times\mathbb{R}^{ K}\times\mathbb{R}^{d\times K}$. Lastly, we define for any vector $v=(v_1,\ldots,v_{k_*})\in\mathbb{R}^{k_*}$ that $\softmax(v_i):={\exp(v_i)}/{\sum_{j=1}^{k_*}\exp(v_j)}$.

It is worth noting that if we translate $\beta^*_{1i}$ to $\beta^*_{1i}+t_1$ and $\beta^*_{0i}$ to $\beta^*_{0i}+t_0$, then the values of the standard softmax gating function do not change. This implies that gating parameters $\beta^*_{1i},\beta^*_{0i}$ are only identifiable up to some translation. To alleviate this problem, we assume without loss of generality (WLOG) that $\beta^*_{1k_*}=\zerod$ and $\beta^*_{0k_*}=0$. Similarly, we also assume that $a^*_{iK}=0$ and $b^*_{iK}=\zerod$ for any $i\in\{1,2,\ldots,k_*\}$. Furthermore, at least one among $\beta^*_{11},\beta^*_{12},\ldots,\beta^*_{1k_*}$ must be non-zero so that the softmax gating function depends on the covariate $X$. Lastly, we let $(a^*_{i},b^*_{i})$, for $i\in\{1,2,\ldots,k_*\}$, be pairwise distinct to ensure that the epxert functions are different from each other. 

\textbf{Maximum Likelihood Estimation.} Regarding the parameter estimation problem in the standard softmax gating multinomial logistic MoE, we propose using the maximum likelihood method in this work. However, as the true number of experts $k_*$ is generally unknown in practice, it is necessary to consider an \emph{over-specified} setting where we fit the true model with a mixture of $k$ multinomial logistic experts, where $k>k_*$. In particular, the maximum likelihood estimation (MLE) of the true mixing measure $G_*$ is given by:
\begin{align}
    \label{eq:MLE}
    \widehat{G}_n\in\argmax_{G\in\mathcal{O}_k(\Theta)}\sum_{i=1}^{n}\log(g_{G}(Y_i|X_i)),
\end{align}
where $\mathcal{O}_k(\Theta)$ denotes the set of all mixing measures with at most $k$ components of the form $G=\sum_{i=1}^{k'}\exp(\beta_{0i})\delta_{(\beta_{1i},a_{i},b_{i})}$, in which $1\leq k'\leq k$ and $(\beta_{0i},\beta_{1i},a_{i},b_{i})\in\Theta$.

\textbf{Main Challenges.} To characterize the parameter estimation rates, we need to decompose the difference  $g_{\widehat{G}_n}(Y|X)-g_{G_*}(Y|X)$ into a linear combination of linearly independent terms using Taylor expansions. Then, when the density estimation $g_{\widehat{G}_n}(Y|X)$ converges to the true density $g_{G_*}(Y|X)$, all the associated coefficients, which involve the discrepancies between true parameters and their estimations, also tend to zero. Consequently, we achieve our desired parameter estimation rates based on the density estimation rate. However, when part of the expert parameters vanishes, there is an interaction between the numerator of the standard softmax gating function and the expert function, which induces several challenges in the density decomposition. In particular, let us denote $u(Y=s|X;\beta_{1i},a_{i},b_{i}):=\exp((\beta_{1i})^{\top}X)f(Y=s|X;a_i,b_i)$. If there exists an index $i\in[k_*]$ such that $b^*_{i\ell}=\zerod$ for any $\ell\in[K]$, then the aforementioned interaction is expressed via the following PDE:
\begin{align}
    &\frac{\partial u}{\partial\beta_{1i}}(Y=s|X;\beta^*_{1i},a^*_{i},b^*_{i})\nonumber\\
    \label{eq:PDE}
    &\hspace{1.5cm}=C_{a^*_i}\cdot\frac{\partial u}{\partial b_{is}}(Y=s|X;\beta^*_{1i},a^*_{i},b^*_{i}),
\end{align}
where $C_{a^*_i}>0$ is a constant depending only on $a^*_i$. The above PDE shows that there are a number of linearly dependent derivative terms in the Taylor expansion. Then, we have to incorporate these terms by taking the summation of their associated coefficients in order to form a linear combination of linearly independent terms. Therefore, the structure of resulting coefficients becomes complex, which makes the parameter estimation rates slower than polynomial rates. This finding indicates that the standard softmax gating function might hurt the performance of the model in equation~$\eqref{eq:conditional_density}$ despite its widespread use in the literature \cite{Jacob_Jordan-1991,Jordan-1994,nguyen2023demystifying}. 

\begin{table*}[!ht]
\centering
\caption{Summary of density estimation rates and parameter estimation rates in the multinomial logistic MoE model with standard and modified softmax gating functions. Here, we refer to exact-specified parameters $\beta^*_{1j},a^*_j,b^*_j$ as those fitted by exactly one component (i.e. $|\mathcal{C}_j|=1$), while over-specified parameters are approximated by more than one component (i.e. $|\mathcal{C}_j|>1$), where $\mathcal{C}_j$ is a Voronoi cell defined in Section~\ref{sec:preliminaries}.}
\vspace{0.5em}
\begin{tabular}{ | m{4em} | m{5em}| c | c |c|} 
\hline
\textbf{Gating} & \textbf{Parameter Setting} & \textbf{Density} &  \textbf{Exact-specified Parameters} & \textbf{Over-specified Parameters} \\
\hline 
\multirow{2}{4em}{Standard Softmax} & Regime 1 & $\widetilde{\mathcal{O}}(n^{-1/2})$ & $\widetilde{\mathcal{O}}(n^{-1/2})$ & $\widetilde{\mathcal{O}}(n^{-1/4})$ \\  \cline{2-5}
 & Regime 2 & $\widetilde{\mathcal{O}}(n^{-1/2})$ & \multicolumn{2}{c|}{slower than $\widetilde{\mathcal{O}}(n^{-1/2r})$ for any $ r\geq 1$}  \\ 
\hline
Modified Softmax & Any regime & $\widetilde{\mathcal{O}}(n^{-1/2})$ & $\widetilde{\mathcal{O}}(n^{-1/2})$ & $\widetilde{\mathcal{O}}(n^{-1/4})$ \\  
\hline
\end{tabular}
\label{table:parameter_rates}
\end{table*}

\textbf{Contribution.} Following the above challenge discussion, we construct a generic Voronoi-based metric $\mathcal{D}_r$ among parameters in equation~\eqref{eq:Voronoi_loss_form} to capture the convergence behavior of the MLE $\widehat{G}_n$ to the true mixing measure $G_*$ in the standard softmax gating multinomial logistic MoE model. Moreover, we also design a novel class of modified softmax gating functions which improves the convergence rate of the MLE. Our contributions are two-fold and can be summarized as follows (see also Table~\ref{table:parameter_rates}):

\textbf{1. Standard Softmax Gating Function.} With the standard softmax gating function, we demonstrate that the density estimation $g_{\widehat{G}_n}$ converges to the true density $g_{G_*}$ under the Hellinger distance $h$ at the parametric rate of order $\widetilde{\mathcal{O}}(n^{-1/2})$. Next, we consider the parameter estimation problem under two following complement regimes of the expert parameters $b^*_i$:

\textbf{(i) Regime 1:} For any $i\in\{1,2,\ldots,k_*\}$, we can find an index $\ell\in[K-1]$ that satisfies $b^*_{i\ell}\neq\zerod$. Under this regime, the PDE in equation~\eqref{eq:PDE} does not hold and there is no interaction between the standard softmax gating and the expert functions. By deriving the Hellinger lower bound $\bbE_X[h(g_{G}(\cdot|X),g_{G_*}(\cdot|X))]\gtrsim\mathcal{D}_2(G,G_*)$ for any mixing measure $G\in\mathcal{O}_k(\Theta)$ in Theorem~\ref{theorem:standard_one}, we obtain that the rates for estimating over-specified parameters $\beta^*_{1i}$, $a^*_{i}$ and $b^*_{i}$, i.e. those whose Voronoi cells have more than one element, are identical of order $\widetilde{\mathcal{O}}(n^{-1/4})$. At the same time, the estimation rates for their exact-specified parameters, namely those whose Voronoi cells have exactly one element, are substantially faster of order $\widetilde{\mathcal{O}}(n^{-1/2})$. 

\textbf{(ii) Regime 2:} There exists an index $i\in\{1,2,\ldots,k_*\}$ such that $b^*_{i\ell}=\zerod$ for any $\ell\in[K-1]$. Since the PDE~\eqref{eq:PDE} holds true under this regime, an interaction between the standard softmax gating  and the expert functions occurs. Then, we demonstrate in Theorem~\ref{theorem:standard_two} that the minimax lower bound $\inf_{\overline{G}_n\in\mathcal{O}_{k}(\Theta)}\sup_{G\in\mathcal{O}_{k}(\Theta)\setminus\mathcal{O}_{k_*-1}(\Theta)}\bbE_{p_{G}}[\mathcal{D}_r(\overline{G}_n,G)]\gtrsim n^{-1/2}$ holds for any $r\geq 1$. This bound together with the formulation of $\mathcal{D}_r$ indicate that true parameters $\beta^*_{1i}$, $a^*_i$ and $b^*_i$ enjoy much worse rates than those in Regime 1. Remarkably, those for over-specified parameters are even slower than polynomial rates.  

\textbf{2. Modified Softmax Gating Function.} To resolve the aforementioned downsides of the standard softmax gating function towards the convergence rates of parameter estimation, we propose using the following modified softmax gating functions: 
\begin{align*}
    \softmax(&(\beta^*_{1i})^{\top}M(X)+\beta^*_{0i})\\
    &:=\frac{\exp((\beta^*_{1i})^{\top}M(X)+\beta^*_{0i})}{\sum_{j=1}^{k_*}\exp((\boj)^{\top}M(X)+\bzj)},
\end{align*}
for any $i\in\{1,2,\ldots,k_*\}$ where $M:\mathbb{R}^d\to\mathbb{R}^d$ is a bounded function such that the set $\Big\{X^p[M(X)]^{q}: p,q\in\mathbb{N}^d, \ 0\leq|p|+|q|\leq 2\Big\}$ is linearly independent for almost surely $X$. These assumptions guarantee that the PDE in equation~\eqref{eq:PDE} does not hold for any values of expert parameters $b^*_i$, and therefore, the previous interaction between the gating and expert functions disappears. As a consequence, we show in Theorem~\ref{theorem:modified_softmax} that the parameter estimations under the modified softmax gating multinomial logistic MoE model share the same convergence behavior as those in Regime 1 regardless of the values of expert parameters $b^*_i$. 

\textbf{Practical Implications.} Below are three practical implications from our convergence analysis of the softmax gating multinomial logistic MoE:

\textbf{1. Expert parameter collapse:} It is worth noting that the parameter estimation rates when the expert parameters collapse are significantly slow, and could be of order $O(1/\log(n))$ (see Table 1). Therefore, during the training process, if ones observe that the model convergence becomes abnormally slow, or the updated loss values almost remain unchanged, then it is highly likely that the expert parameter collapse occurs.

\textbf{2. Model design:} Despite the widespread use of the standard softmax gate in the practical applications of MoE models, the insights from our theory indicates that this gate is not always beneficial for the model performance, particularly when the expert parameter collapse happens. Therefore, our analysis suggests using a novel modified softmax gate which helps stabilize the training process regardless of the parameter collapse. 

\textbf{3. Expert selection:} The practical problem of selecting important experts can benefit from the modified softmax gate. In particular, in many applications of MoE models, there are some experts which do not play an essential role in learning, and even become redundant. In such scenario, we would like the gate to put more weights on the important experts. However, if the input magnitude is huge, then the weight distribution will become uniform, which is undesirable. To address this issue, practitioners can use an input normalization function, i.e. $M(X)=\frac{X}{||X||}$, in the softmax weights so that the input magnitude remains unchanged of one. Another possible option is $M(X)=sigmoid(X)$, which allows the magnitude of the input to vary between 0 and 1.

\textbf{Outline.} The remainder of the paper is organized in the following way. In Section~\ref{sec:preliminaries}, we verify the identiability of the standard softmax gating multinomial logistic MoE model and establish the parametric density estimation rate under that model prior to introducing a novel class of Voronoi loss functions used for the parameter estimation problem. Subsequently, we characterize the parameter estimation rates under the multinomial logistic MoE model equipped with the standard softmax gating function and the modified softmax gating function in Section~\ref{sec:standard} and Section~\ref{sec:modified}, respectively. Finally, we conclude the paper in Section~\ref{sec:conclusion}. Meanwhile, full proofs, additional results and a simulation study are relegated to the Appendices. 

\textbf{Notations.} First, we let $[n]$ stand for the set $\{1,2,\ldots,n\}$ for any number $n\in\mathbb{N}$. Next, for any vector $u,v\in\mathbb{N}^d$, we denote $|v|:=v_1+v_2+\ldots+v_d$, $v^u:=v_1^{u_1}v_2^{u_2}\ldots v_d^{u_d}$ and $v!:=v_1!v_2!\ldots v_d!$, whereas $\|v\|$ represents for its $2$-norm value. Meanwhile, for any set $S$, we denote $|S|$ as its cardinality. Additionally, for any two probability density functions $p,q$ dominated by the Lebesgue measure $\mu$, we define $V(p,q):=\frac{1}{2}\int|p-q|\dint \mu$ as the Total Variation distance between them, and $h(p,q):=\Big(\frac{1}{2}\int|\sqrt{p}-\sqrt{q}|^2\dint \mu\Big)^{1/2}$ as their Hellinger distance. Lastly, for any two sequences $(s_n)$ and $(t_n)$ in $\mathbb{R}_+$, the notations $s_n=\mathcal{O}(t_n)$ and $s_n\lesssim t_n$ means there exists some constant $C>0$ independent of $n$ such that $s_n\leq Ct_n$ for sufficiently large $n\in\mathbb{N}$, while the notation $s_n=\widetilde{\mathcal{O}}(t_n)$ suggests that the aforementioned inequality might hold up to some logarithmic function of $n$.
\section{Preliminaries}
\label{sec:preliminaries}
In this section, we study the identifiability of the standard softmax gating multinomial logistic MoE model and the convergence behavior of density estimation under that model. Then, we further highlight the necessity for Voronoi-based loss functions to determine parameter estimation rates accurately.

Firstly, we show that the standard softmax gating multinomial logistic MoE model is identifiable.
\begin{proposition}[Identifiability]
    \label{prop:identifiability}
    Given two mixing measures $G$ and $G'$ in $\mathcal{O}_k(\Theta)$, if $g_{G}(Y|X)=g_{G'}(Y|X)$ holds true for almost surely $(X,Y)$, then $G\equiv G'$. 
\end{proposition}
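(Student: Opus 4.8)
The idea is to turn the functional identity $g_{G}(\cdot\,|\,X)\equiv g_{G'}(\cdot\,|\,X)$ into an identity among exponential functions of the covariate, and then exploit linear independence of distinct exponentials together with an induction that strips off one mixture component at a time.

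Write $G=\sum_{i=1}^{k_1}\exp(\beta_{0i})\,\delta_{(\beta_{1i},a_i,b_i)}$ and $G'=\sum_{i=1}^{k_2}\exp(\beta'_{0i})\,\delta_{(\beta'_{1i},a'_i,b'_i)}$, with distinct atoms within each measure and with the (WLOG) normalization already built into $\Theta$ that removes the softmax translation invariance and the expert reference-class ambiguity, exactly as for the true measure $G_*$; the goal is then to deduce $k_1=k_2$ and that the two lists of (mass, atom) pairs coincide. Since $g_{G}(Y=s\,|\,X)=g_{G'}(Y=s\,|\,X)$ for almost every $(X,Y)$ and all conditional probabilities are strictly positive, this holds for a.e. $X$ and every $s\in[K]$. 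Fix such an $s$; cross-multiplying by the two softmax denominators $h(X):=\sum_i e^{\beta_{0i}+\beta_{1i}^{\top}X}$ and $h'(X):=\sum_j e^{\beta'_{0j}+(\beta'_{1j})^{\top}X}$ gives
\[
  h'(X)\sum_{i=1}^{k_1}e^{\beta_{0i}+\beta_{1i}^{\top}X}\,f(Y=s\,|\,X;a_i,b_i)
  = h(X)\sum_{j=1}^{k_2}e^{\beta'_{0j}+(\beta'_{1j})^{\top}X}\,f(Y=s\,|\,X;a'_j,b'_j).
\]
Multiplying further by $\prod_{i=1}^{k_1}E_i(X)\prod_{j=1}^{k_2}E'_j(X)$, where $E_i(X):=\sum_{\ell=1}^{K}e^{a_{i\ell}+b_{i\ell}^{\top}X}$ is the $i$-th expert denominator, clears every remaining denominator and leaves, for each $s$, an identity $\sum_{\alpha}c_\alpha\,e^{\gamma_\alpha^{\top}X}=\sum_{\alpha'}c'_{\alpha'}\,e^{(\gamma'_{\alpha'})^{\top}X}$ with all $c_\alpha,c'_{\alpha'}>0$; since both sides are real-analytic in $X$ and the covariate distribution is non-degenerate, this identity extends to all $X\in\mathbb{R}^d$.

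Next I would invoke linear independence of $\{X\mapsto e^{\gamma^{\top}X}:\gamma\in\mathbb{R}^d\}$ to match, for each exponent vector $\gamma$, the aggregated coefficients on the two sides. To pull parameters out of this exponential-polynomial identity I would pick a generic direction $v\in\mathbb{R}^d$, off the finitely many hyperplanes $\{v:(\gamma-\gamma')^{\top}v=0\}$ spanned by the finitely many distinct exponent vectors that occur, so that along the ray $X=tv$, $t\to+\infty$, each side has a unique leading exponential. Equating leading terms (and, for finer information, successively lower-order terms) forces the extremal gating slope and mass of $G$ together with the associated expert parameters to coincide with those of some component of $G'$; here one uses as a sub-step the classical identifiability of a single multinomial logistic regression (again via linear independence of exponentials) to pass from equality of the expert function $f(\cdot\,|\,X;a_i,b_i)$ to equality of $(a_i,b_i)$. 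Deleting the matched component from both sides and repeating --- each pass strictly lowers the number of remaining components --- yields $k_1=k_2$ and $G\equiv G'$. The normalization built into $\Theta$ is precisely what makes this final conclusion an equality of mixing measures rather than merely of conditional densities.

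I expect the crux to be the bookkeeping in this matching/induction step. Clearing the expert denominators introduces the products $\prod_{i'\neq i}E_{i'}(X)$, so the exponential-polynomial identity carries many terms whose exponents are long sums of parameter vectors; one must argue --- leaning on the genericity of $v$ to forbid accidental coincidences among these exponents, and on the positivity of the coefficients to preclude cancellation --- that each pass of the induction removes exactly one component cleanly. The remaining ingredients (analytic continuation from the support of $X$, linear independence of exponentials, identifiability of one multinomial logistic expert) are routine.
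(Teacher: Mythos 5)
Your approach is genuinely different from the paper's. The paper does not re-derive identifiability of the ``outer'' mixture from scratch: it invokes a known identifiability theorem for finite mixtures of multinomial logit models (Gr\"un and Leisch, 2008) to conclude immediately that the two mixtures have the same number of components and the same multiset of softmax gating functions, which (together with the normalization $\beta_{1k}=\beta'_{1k}=\zerod$, $\beta_{0k}=\beta'_{0k}=0$) pins down $\beta_{1i}=\beta'_{1i}$, $\beta_{0i}=\beta'_{0i}$ for all $i$. The remaining work in the paper is then to match the expert parameters $(a_i,b_i)$ to $(a'_i,b'_i)$, which it does by grouping components with equal gating weight and matching within each group. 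Your plan instead rebuilds everything from first principles by clearing denominators and appealing to linear independence of exponentials. That is a legitimate and more self-contained route, and the ingredients you name (analytic continuation, linear independence of $e^{\gamma^\top X}$, identifiability of a single multinomial logistic regression) are all standard.

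That said, the step you yourself flag as the crux is exactly where the argument is not yet a proof. After you multiply through by $h'(X)\prod_i E_i(X)\prod_j E'_j(X)$, every surviving exponent is a \emph{sum} of one $\beta$-slope and one $b$-slope from \emph{each} factor, so the leading exponent along a generic ray aggregates contributions from all $k_1+k_2$ components simultaneously; matching leading coefficients therefore does not directly single out the parameters of one component of $G$ against one component of $G'$. You need an extra argument (e.g.\ first matching the pure gating exponentials by specializing to the sum over $s$, or by a more careful extremal analysis) to disentangle which slopes belong to which factor before you can ``peel off'' a component. The deletion step also has a hidden subtlety: once you delete a component from each softmax mixture, the softmax weights of the remaining components change, so the reduced problem is not literally the same identity with one term removed; you must first establish that the full gating-parameter sets coincide (so that the two softmax denominators are equal and cancel), after which the residual identity becomes a clean linear-in-exponentials statement that your induction can attack. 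These are fixable, but as written the plan does not yet deliver the conclusion.
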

The proof of Proposition~\ref{prop:identifiability} is in Appendix~\ref{appendix:identifiability}. This result ensures the convergence of the MLE $\widehat{G}_n$ to the true mixing measure $G_*$ as long as the conditional density $g_{\widehat{G}_n}$ converges to $g_{G_*}$ for almost surely $(X,Y)$. Next, we characterize the density estimation rate under the Hellinger distance in the following proposition:

\begin{proposition}[Density Estimation Rate]
    \label{prop:density_estimation}
    With the MLE $\widehat{G}_n$ defined in equation~\eqref{eq:MLE}, the convergence rate of the density estimation $g_{\widehat{G}_n}$ to the true density $g_{G_*}$ is given by:
    \begin{align}
        \label{eq:density_rate}
        \mathbb{P}\Big(\bbE_X[h(g_{\widehat{G}_n}(\cdot|X),g_{G_*}(\cdot|X))]&>C_0\sqrt{\log(n)/n}\Big)\nonumber\\
        &\lesssim\exp(-c_0\log(n)),
    \end{align}
    where $C_0$ and $c_0$ are universal positive constants that depend only on $\Theta$.
\end{proposition}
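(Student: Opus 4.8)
The plan is to treat this as a standard density-estimation-by-MLE problem and to invoke the classical deviation bounds for maximum likelihood estimators expressed in terms of bracketing entropy (as in van de Geer's empirical process theory, or Wong--Shen), so that the whole argument reduces to controlling the bracketing entropy of the conditional density class. Let $f_X$ denote the fixed marginal density of the covariate $X$ on the compact set $\mathcal{X}$, and for each $G\in\mathcal{O}_k(\Theta)$ form the joint density $p_G(x,y):=g_G(y|x)\,f_X(x)$ on $\mathcal{X}\times[K]$. Then $\widehat{G}_n$ is precisely the MLE over the class $\mathcal{P}_k(\Theta):=\{p_G:G\in\mathcal{O}_k(\Theta)\}$, and a direct computation gives $\bbE_X\big[h(g_G(\cdot|X),g_{G_*}(\cdot|X))\big]\leq \big(\bbE_X[h^2(g_G(\cdot|X),g_{G_*}(\cdot|X))]\big)^{1/2}=h(p_G,p_{G_*})$; hence it suffices to establish the stated deviation bound for $h(p_{\widehat{G}_n},p_{G_*})$.

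First I would record the uniform boundedness of the density family. Since $\Theta$ is bounded and $\mathcal{X}$ is compact, every linear predictor $(\beta_{1i})^{\top}x+\beta_{0i}$ and $a_{i\ell}+(b_{i\ell})^{\top}x$ lies in a fixed compact interval, so both the softmax gating weights and the multinomial-logistic expert probabilities stay bounded away from $0$ and $1$ by constants depending only on $\Theta$ and $\mathcal{X}$. Therefore there exist $0<m_0\leq M_0<\infty$ with $m_0\leq g_G(y|x)\leq M_0$ for all $G\in\mathcal{O}_k(\Theta)$, all $x\in\mathcal{X}$, and all $y\in[K]$. This makes the log-likelihood ratios uniformly bounded, so the regularity hypotheses of the MLE deviation theorems are automatically satisfied, and it renders the sup-norm, $L^1$, $L^2$, and Hellinger metrics mutually equivalent on $\mathcal{P}_k(\Theta)$.

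Next I would bound the bracketing entropy $H_B(\epsilon,\mathcal{P}_k(\Theta),h)$. Because the number of fitted components is at most $k$ and each parameter tuple lies in the bounded set $\Theta\subset\bbR\times\bbR^d\times\bbR^K\times\bbR^{d\times K}$, the class is parametrized by a bounded subset of a Euclidean space of fixed dimension $N=N(k,d,K)$, and the map from parameters to $g_G(\cdot|\cdot)$ is Lipschitz in sup norm with constant depending only on $\Theta$ and $\mathcal{X}$ (the exponentials and divisions by sums bounded below are Lipschitz on the relevant compact range). A standard covering argument yields an $\epsilon$-net of $\mathcal{P}_k(\Theta)$ in sup norm of cardinality $\mathcal{O}(\epsilon^{-N})$, which after the usual construction of brackets of width $\sim\epsilon$ (using the metric equivalences above) gives $H_B(\epsilon,\mathcal{P}_k(\Theta),h)\lesssim N\log(1/\epsilon)$ for $\epsilon\in(0,1/2)$.

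Finally, I would feed this logarithmic entropy bound into the MLE rate theorem with $\delta_n\asymp\sqrt{\log(n)/n}$: the entropy integral satisfies $\int_{c\delta_n^2}^{\delta_n}\sqrt{H_B(t,\mathcal{P}_k(\Theta),h)}\,dt\lesssim\delta_n\sqrt{\log(1/\delta_n)}\lesssim\sqrt{n}\,\delta_n^2$, which is exactly the condition needed to conclude $\bbP\big(h(p_{\widehat{G}_n},p_{G_*})>C_0\delta_n\big)\lesssim\exp(-c_0 n\delta_n^2)$; since $n\delta_n^2\asymp\log n$, the right-hand side is $\exp(-c_0\log n)$, and the reduction from the first paragraph then gives \eqref{eq:density_rate}. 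I expect the main obstacle to be bookkeeping rather than conceptual: carefully verifying the uniform lower bound $m_0>0$ on $g_G$ (which genuinely needs both the boundedness of $\Theta$ and the compactness of $\mathcal{X}$, and underpins everything else) and tracking the Lipschitz constants through the softmax and multinomial-logistic compositions so that the bracketing-entropy estimate comes out clean; once these are in place, the invocation of the standard MLE machinery is routine.
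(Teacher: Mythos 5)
Your proposal is correct and follows essentially the same route as the paper: both bound the bracketing entropy of the conditional density class by $\mathcal{O}(\log(1/\varepsilon))$ via a parametric covering argument (compactness of $\Theta$ and $\mathcal{X}$ plus Lipschitz continuity of the softmax/logistic maps), and then feed this into a van de Geer--style MLE deviation theorem with $\delta_n\asymp\sqrt{\log(n)/n}$. The only cosmetic differences are that the paper works directly with conditional densities and invokes the $(G+G_*)/2$ convexification built into van de Geer's Theorem 7.4 rather than your uniform lower bound $m_0>0$ (both are valid here, and your lower-bound observation is indeed correct on this compact domain), and the paper splits the covering bound into a separate Lemma (\ref{lemma:empirical_processes}) before converting to brackets.
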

The proof of Proposition~\ref{prop:density_estimation} is in Appendix~\ref{appendix:density_estimation}. It is evident from the bound in equation~\eqref{eq:density_rate} that the rate for estimating $g_{G_*}$ is parametric of order $\widetilde{\mathcal{O}}(n^{-1/2})$. Therefore, if we are able to construct a metric $\mathcal{D}$ among parameters that satisfies the inequality $\bbE_X[h(g_{G}(\cdot|X),g_{G_*}(\cdot|X))]\gtrsim\mathcal{D}(G,G_*)$ for any mixing measure $G\in\mathcal{O}_k(\Theta)$, then the MLE $\widehat{G}_n$ will also converge to the true mixing measure $G_*$ at the parametric rate of order $\widetilde{\mathcal{O}}(n^{-1/2})$. 

\textbf{Voronoi-based Loss:} As the MLE $\widehat{G}_n$ is constrained to have more components than its true counterpart, there exists a true component approximated by at least two components while others could be fitted by exactly one component. This possibly leads to different estimation rates among the true parameters, which requires us to design novel loss functions tailored to this observation. To this end, let us recall a notion of Voronoi cells, which was previously studied in \cite{manole22refined}. In particular, for any mixing measure $G\in\mathcal{O}_k(\Theta)$, a Voronoi cell of $G$ generated by $\theta^*_j:=(\beta^*_{1j},a^*_{j},b^*_{j})$ is defined as  $\mathcal{C}_j\equiv\mathcal{C}_j(G):=\{i\in[k]:\|\theta_i-\theta^*_j\|\leq\|\theta_i-\theta^*_{\ell}\|,\ \forall\ell\neq j\}$ for any $j\in[k_*]$, where $\theta_i:=(\beta_{1i},a_{i},b_{i})$. Here, the cardinality of each Voronoi cell $\mathcal{C}_j$ indicates the number of fitted components approximating the true component $\theta^*_j$. Based on these Voronoi cells, we define a generic Voronoi-based metric of order $r\geq 1$ between any two mixing measures, denoted by $\mathcal{D}_r(G,G_*)$, as follows:
\begin{align}
\label{eq:Voronoi_loss_form}
&\mathcal{D}_r(G,G_*):=\sum_{j=1}^{k_*}\left|\sum_{i\in\mathcal{C}_j}\exp(\beta_{0i})-\exp(\beta^*_{0j})\right|,\nonumber\\
&+\sum_{\substack{j:|\mathcal{C}_j|>1, \\ i\in\mathcal{C}_j}}\exp(\beta_{0i})\Big[\|\Delta\beta_{1ij}\|^r+\sum_{\ell=1}^{K-1}(|\Delta a_{ij\ell}|^r+\|\Delta b_{ij\ell}\|^r)\Big]\nonumber\\
&+\sum_{\substack{j:|\mathcal{C}_j|=1, \\ i\in\mathcal{C}_j}}\exp(\beta_{0i})\Big[\|\Delta\beta_{1ij}\|+\sum_{\ell=1}^{K-1}(|\Delta a_{ij\ell}|+\|\Delta b_{ij\ell}\|)\Big],
\end{align}
where $\Delta\beta_{1ij}:=\beta_{1i}-\beta^*_{1j}$, $\Delta a_{ij\ell}:=a_{i\ell}-a^*_{j\ell}$ and $\Delta b_{ij\ell}:=b_{i\ell}-b^*_{j\ell}$. The above Voronoi loss function allows us to capture precisely the distinct convergence behaviors of parameter estimations under the multinomial logistic MoE model with (modified) softmax gating function.

\section{Standard Softmax Gating Multinomial Logistic MoE}
\label{sec:standard}

In this section, we present the convergence analysis of parameter estimation in the standard softmax gating multinomial logistic MoE model.

Given the parametric density estimation rate in Proposition~\ref{prop:density_estimation}, we observe that if the Hellinger lower bound
\begin{align*}
    \bbE_X[h(g_{\widehat{G}_n}(\cdot|X),g_{G_*}(\cdot|X))]\gtrsim\mathcal{D}_r(\widehat{G}_n,G_*),
\end{align*}
holds true for some $r\geq 1$, then the convergence rate of the MLE $\widehat{G}_n$ to the true mixing measure $G_*$ would also be parametric of order $\widetilde{\mathcal{O}}(n^{-1/2})$. Therefore, our goal is to establish the above bound. Since the Hellinger distance is lower bounded by the Total Variation distance, i.e., $h\geq V$, it is sufficient to show that
\begin{align}
    \label{eq:TV_bound}
    \bbE_X[V(g_{\widehat{G}_n}(\cdot|X),g_{G_*}(\cdot|X))]\gtrsim\mathcal{D}_r(\widehat{G}_n,G_*).
\end{align}
Then, we consider the term $V(g_{\widehat{G}_n}(\cdot|X),g_{G_*}(\cdot|X))$ which involves the density discrepancy $g_{\widehat{G}_n}(Y|X)-g_{G_*}(Y|X)$. We aim to decompose this discrepancy into a linear combination of linearly independent terms using Taylor expansion so that when $g_{\widehat{G}_n}$ converges to $g_{G_*}$, the associated coefficients involving the parameter differences $\widehat{\beta}^n_{1i}-\beta^*_{1j}$, $\widehat{a}^n_i-a^*_j$ and $\widehat{b}^n_i-b^*_j$ also tend to zero, where $(\widehat{\beta}^n_{1i},\widehat{a}^n_i,\widehat{b}^n_i)$ are components of $\widehat{G}_n$. For that purpose, we first rewrite the previous density discrepancy in terms of
\begin{align*}
    u(Y|X;\widehat{\beta}^n_{1i},\widehat{a}^n_i,\widehat{b}^n_i)-u(Y|X;\beta^*_{1j},a^*_j,b^*_j),
\end{align*}
where $u(Y|X;\beta_{1i},a_i,b_i)=\exp((\beta_{1i})^{\top}X)f(Y|X;a_i,b_i)$. Next, we apply the Taylor expansion to the function $u(Y|X;\widehat{\beta}^n_{1i},\widehat{a}^n_i,\widehat{b}^n_i)$ about the point $(\beta^*_{1j},a^*_j,b^*_j)$. In this step, we notice that if there exists $j\in[k_*]$ such that $b^*_{j\ell}=\zerod$ for any $\ell\in[K-1]$, then the gating parameters interact with the expert parameters via the following PDEs:
\begin{align}
    &\frac{\partial u}{\partial\beta_{1j}}(Y=s|X;\beta^*_{1j},a^*_{j},b^*_{j})\nonumber\\
    \label{eq:PDE_repeat}
    &\hspace{1.5cm}=C_{a^*_j}\cdot\frac{\partial u}{\partial b_{js}}(Y=s|X;\beta^*_{1j},a^*_{j},b^*_{j}),
\end{align}
for any $s\in[K]$, where $C_{a^*_j}>0$ is some constant depending only on $a^*_j$. This interaction indicates that there are several linearly dependent derivative terms in the previous Taylor expansion, which induces a number of challenges in representing the density discrepancy $g_{\widehat{G}_n}(Y|X)-g_{G_*}(Y|X)$ as a linear combination of linearly independent terms. On the other hand, if for any $j\in[k_*]$, we can find an index $\ell\in[K-1]$ such that $b^*_{j\ell}\neq\zerod$, then the PDEs inequation~\eqref{eq:PDE_repeat} no longer hold true and the previous interaction does not occur, which facilitates the decomposition of the density discrepancy. 

For those reasons, we will characterize the parameter estimation rates under two following complement regimes of expert parameters $b^*_j$ in Section~\ref{sec:regime1} and Section~\ref{sec:regime2}, respectively:
\begin{itemize}
    \item \textbf{Regime 1:} For any $j\in[k_*]$, there exists an index $\ell\in[K-1]$ such that $b^*_{j\ell}\neq\zerod$;
    \item \textbf{Regime 2:} There exists an index $j\in[k_*]$ such that $b^*_{j\ell}=\zerod$ for any $\ell\in[K-1]$.
\end{itemize}

\subsection{Regime 1 of Expert Parameters} 
\label{sec:regime1}
Under this regime, it can be verified that the PDEs in equation~\eqref{eq:PDE_repeat} are no longer valid for any $j\in[k_*]$. Then, we provide in the following theorem the convergence rate of the MLE $\widehat{G}_n$ to its true counterpart $G_*$ under the Voronoi loss function $\mathcal{D}_2$ in equation~\eqref{eq:Voronoi_loss_form}.
\begin{theorem}[Parameter Estimation Rate]
    \label{theorem:standard_one}
    Suppose that the assumption of Regime 1 holds true, then we achieve the Hellinger lower bound
    \begin{align*}
        \bbE_X[h(g_{G}(\cdot|X),g_{G_*}(\cdot|X))]\gtrsim\mathcal{D}_2(G,G_*)
    \end{align*}
    for any mixing measure $G\in\mathcal{O}_k(\Theta)$. Therefore, combining this bound with Proposition~\ref{prop:density_estimation} leads to the following convergence rate of the MLE $\widehat{G}_n$:
    \begin{align*}
        \bbP\Big(\mathcal{D}_{2}(\widehat{G}_n,G_*)>C_1\sqrt{\log(n)/n}\Big)\lesssim \exp(-c_1\log(n)),
    \end{align*}
    where $C_1>0$ is a constant depending on $\Theta$ and $G_*$, while the constant $c_1>0$ depends only on $\Theta$.
\end{theorem}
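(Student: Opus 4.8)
The plan is to establish the Hellinger lower bound $\bbE_X[h(g_{G}(\cdot|X),g_{G_*}(\cdot|X))]\gtrsim\mathcal{D}_2(G,G_*)$ uniformly over $G\in\mathcal{O}_k(\Theta)$; granting this, substituting $G=\widehat{G}_n$ and invoking Proposition~\ref{prop:density_estimation} gives $\bbP\big(\mathcal{D}_2(\widehat{G}_n,G_*)>C_1\sqrt{\log(n)/n}\big)\lesssim\exp(-c_1\log(n))$ immediately. Since $h\geq V$, it suffices to prove $\bbE_X[V(g_{G}(\cdot|X),g_{G_*}(\cdot|X))]\gtrsim\mathcal{D}_2(G,G_*)$, and I would split the infimum of the ratio $\bbE_X[V]/\mathcal{D}_2$ into a global regime $\{\mathcal{D}_2(G,G_*)\geq\varepsilon\}$ and a local regime $\{\mathcal{D}_2(G,G_*)<\varepsilon\}$ for a small $\varepsilon>0$. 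The global regime is routine: if the ratio infimum vanished there, a sequence $G_m$ with $\bbE_X[V(g_{G_m},g_{G_*})]\to0$ and $\mathcal{D}_2(G_m,G_*)$ bounded below would exist; extracting a weak limit $G'$ in the compact set $\mathcal{O}_k(\Theta)$, Fatou's lemma forces $g_{G'}=g_{G_*}$ a.s., hence $G'\equiv G_*$ by Proposition~\ref{prop:identifiability}, contradicting $\mathcal{D}_2(G_m,G_*)\not\to0$ together with the continuity of $\mathcal{D}_2$ at $G_*$.

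The heart is the local regime, attacked by contradiction. Suppose $G_m\to G_*$ with $\mathcal{D}_2(G_m,G_*)\to0$ but $\bbE_X[V(g_{G_m},g_{G_*})]/\mathcal{D}_2(G_m,G_*)\to0$. Passing to a subsequence, the Voronoi cells $\mathcal{C}_j=\mathcal{C}_j(G_m)$ are fixed with $|\mathcal{C}_j|\geq 1$. Using $u(Y|X;\beta_{1i},a_i,b_i)=\exp((\beta_{1i})^{\top}X)f(Y|X;a_i,b_i)$, writing out both softmax denominators, and factoring out the common denominator (a.s.\ bounded away from $0$ and $\infty$), I would write
\begin{align*}
&g_{G_m}(Y=s|X)-g_{G_*}(Y=s|X)\\
&\quad\propto\sum_{j=1}^{k_*}\sum_{i\in\mathcal{C}_j}\exp(\beta_{0i})\big[u(Y=s|X;\beta_{1i},a_i,b_i)-u(Y=s|X;\beta^*_{1j},a^*_j,b^*_j)\big]\\
&\qquad+\sum_{j=1}^{k_*}\Big(\sum_{i\in\mathcal{C}_j}\exp(\beta_{0i})-\exp(\beta^*_{0j})\Big)u(Y=s|X;\beta^*_{1j},a^*_j,b^*_j)+(\text{denominator correction}),
\end{align*}
then Taylor-expand $u(Y=s|X;\beta_{1i},a_i,b_i)$ about $(\beta^*_{1j},a^*_j,b^*_j)$ to first order when $|\mathcal{C}_j|=1$ and to second order when $|\mathcal{C}_j|>1$ (this is exactly why the loss carries exponent $r=2$ on over-specified cells). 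Collecting terms yields a finite sum $\sum_{\nu}E_{m,\nu}(X)\,H_{\nu}(Y=s|X)$, where the $H_{\nu}$ are $u(Y=s|X;\beta^*_{1j},a^*_j,b^*_j)$ and its partial derivatives in $(\beta_{1j},a_j,b_j)$ up to order $2$, and each $E_{m,\nu}$ is a polynomial in the scaled parameter gaps whose modulus is controlled by $\mathcal{D}_2(G_m,G_*)$.

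Dividing the identity by $\mathcal{D}_2(G_m,G_*)$, the normalized coefficients stay bounded, so along a further subsequence they converge to limits $\tau_\nu$; since $\mathcal{D}_2$ is by definition the sum of the building blocks appearing in equation~\eqref{eq:Voronoi_loss_form}, at least one block has ratio to $\mathcal{D}_2(G_m,G_*)$ bounded away from zero, and tracing it through the expansion shows some $\tau_\nu\neq0$. Meanwhile the hypothesis forces $\frac{g_{G_m}(Y=s|\cdot)-g_{G_*}(Y=s|\cdot)}{\mathcal{D}_2(G_m,G_*)}\to0$ in $L^1(X)$ for every $s$ (using a uniform-integrability/Fatou argument), so the limiting combination satisfies $\sum_{\nu}\tau_\nu H_{\nu}(Y=s|X)=0$ for a.e.\ $X$ and all $s$. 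The contradiction then follows from a linear-independence lemma: under Regime~1 the family $\{H_{\nu}\}$ is linearly independent as functions of $(X,s)$. This is precisely where the regime hypothesis is used: if some $b^*_{j\ell}=\zerod$ for all $\ell$, the PDE~\eqref{eq:PDE_repeat} makes $\partial u/\partial\beta_{1j}$ a scalar multiple of $\partial u/\partial b_{js}$, breaking independence, whereas Regime~1 supplies an $\ell\in[K-1]$ with $b^*_{j\ell}\neq\zerod$ so this collapse cannot occur.

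I expect the linear-independence lemma to be the main obstacle. After cancelling the common softmax denominator, each $H_{\nu}$ is $X^{p}\exp((\beta^*_{1j})^{\top}X)$ times a rational function of $\{\exp(a^*_{j\ell}+(b^*_{j\ell})^{\top}X)\}_{\ell\in[K]}$, and proving independence of the whole collection requires exploiting simultaneously the pairwise distinctness of the experts $(a^*_j,b^*_j)$, the distinctness of the gating directions $\beta^*_{1j}$, and the Regime~1 exclusion of \eqref{eq:PDE_repeat}; a clean route is to compare growth asymptotics along rays $X=t\omega$, $t\to\infty$, or to induct on $k_*$. The second delicate point is the over-specified bookkeeping: one must rule out that all second-order normalized coefficients cancel within a cell, which rests on a minimal-degree-monomial argument for the polynomial system in the scaled gaps, analogous to the over-specified analysis of Gaussian mixtures. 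The remaining pieces — the denominator-correction estimate, the uniform integrability used to pass to the $L^1$ limit, and the compactness step in the global regime — are standard.
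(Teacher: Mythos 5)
Your overall architecture matches the paper's proof (reduce $h$ to $V$, split into a global regime handled by compactness, Fatou and identifiability, and a local regime handled by Voronoi cells, Taylor expansion to order one on exact-specified cells and order two on over-specified cells, Fatou plus a linear-independence argument, with Regime~1 ruling out the PDE~\eqref{eq:PDE_repeat}). The genuine gap is in how you treat the gating denominator. After multiplying the discrepancy by the true denominator $D_*(X)=\sum_{j}\exp((\beta^*_{1j})^{\top}X+\beta^*_{0j})$, the exact identity is $D_*\,[g_{G_m}-g_{G_*}]=[N_m-N_*]-[D_m-D_*]\,g_{G_m}$, and the second term --- your ``denominator correction'' --- is \emph{first order} in the parameter gaps, hence of the same order as $\mathcal{D}_2(G_m,G_*)$; it cannot be disposed of by an ``estimate.'' It must itself be Taylor-expanded, i.e.\ one expands $v(Y{=}s|X;\beta_{1i}):=\exp(\beta_{1i}^{\top}X)\,g_{G_m}(Y{=}s|X)$ about $\beta^*_{1j}$, which produces additional basis functions of the form $X^{\gamma}\exp((\beta^*_{1j})^{\top}X)\,g_{G_*}(Y{=}s|X)$ in the limiting identity. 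Your list of $H_\nu$ (only $u$ and its partial derivatives up to order two) omits these, so the limiting equation you would obtain is not actually of the form $\sum_\nu\tau_\nu H_\nu=0$, and the linear-independence lemma you plan to prove is for the wrong (too small) family; the lemma needed is independence of the combined family $\mathcal{F}$ containing both the $u$-derivative terms and the $g_{G_*}$-weighted monomial-exponential terms. The paper flags exactly this two-expansion device as its main technical departure from the covariate-free gating analysis, and its independence proof proceeds by first separating the $j$-blocks via independence of $\{X^{\omega}\exp((\beta^*_{1j})^{\top}X)\}$ and then handling, within each block, the $f$-derivatives together with $g_{G_*}$.

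A secondary, fixable looseness: your claim that ``at least one block of $\mathcal{D}_2$ has ratio bounded away from zero, and tracing it through the expansion shows some $\tau_\nu\neq0$'' does not by itself rule out cancellation inside an over-specified cell, since generic coefficients are signed sums over $i\in\mathcal{C}_j$. The clean resolution (the paper's Step~2) is the contrapositive: assume all normalized coefficients vanish and read off the coefficients attached to even multi-indices ($q_1\in\{2e_i\}$, $q_2\in\{2e'_\ell\}$, etc.), which are sums of nonnegative terms and therefore reproduce exactly the blocks of $\mathcal{D}_2$; summing them forces $1=\mathcal{D}_2/\mathcal{D}_2\to0$, a contradiction. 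You gesture at a ``minimal-degree-monomial argument,'' which is in the right spirit, but without the nonnegativity observation this step is not yet a proof.
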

Proof of Theorem~\ref{theorem:standard_one} is in Appendix~\ref{appendix:standard_one}. A few comments regarding this result are in order. 

(i) Theorem~\ref{theorem:standard_one} suggests that the MLE $\widehat{G}_n$ converges to the true mixing measure $G_*$ under the loss function $\mathcal{D}_2$ at the parametric rate of order $\widetilde{\mathcal{O}}(n^{-1/2})$. Based on the formulation of $\mathcal{D}_2$ in equation~\eqref{eq:Voronoi_loss_form}, it follows that exact-specified parameters $\beta^*_{1j},a^*_j,b^*_j$, whose Voronoi cells $\mathcal{C}_j$ have exactly one element, share the same estimation rate of order $\widetilde{\mathcal{O}}(n^{-1/2})$. 

(ii) On the other hand, for over-specified parameters $\beta^*_{1j},a^*_j,b^*_j$ whose Voronoi cells $\mathcal{C}_j$ have more than one element, the rates for estimating them are slower, standing at order $\widetilde{\mathcal{O}}(n^{-1/4})$. Nevertheless, these rates are substantially faster than their counterparts under the softmax gating Gaussian MoE model studied in \cite{nguyen2023demystifying}, which decreases monotonically with the cardinality of Voronoi cells.

\subsection{Regime 2 of Expert Parameters} 
\label{sec:regime2}
Under this regime, we can find an index $j\in[k_*]$ that satisfies $b^*_{j\ell}=0$ for any $\ell\in[K-1]$. By simple calculations, we can validate that the PDEs in equation~\eqref{eq:PDE_repeat} hold true. This result leads to so many linearly dependent terms among the derivatives of the function $u$ w.r.t its parameters that the density discrepancy $g_{\widehat{G}_n}(Y|X)-g_{G_*}(Y|X)$ cannot be decomposed into a linear combination of linearly independent elements as our expectation. As a consequence, we demonstrate in the following proposition that the Total Variation lower bound in equation~\eqref{eq:TV_bound} no longer holds under Regime 2. 
\begin{proposition}
    \label{prop:TV_prop}
    Suppose that the assumption of Regime 2 is satisfied, then we obtain that
    \begin{align*}
        \inf_{\substack{G\in\mathcal{O}_k(\Theta), \\ \mathcal{D}_r(G,G_*)\leq\varepsilon}}\bbE_X[V(g_{G}(\cdot|X),g_{G_*}(\cdot|X))]/\mathcal{D}_r(G,G_*)\to0,
    \end{align*}
    as $\varepsilon\to0$, for any $r\geq 1$.
\end{proposition}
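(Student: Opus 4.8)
### Proof proposal for Proposition~\ref{prop:TV_prop}

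The plan is to exhibit an explicit sequence of mixing measures $G_\varepsilon \in \mathcal{O}_k(\Theta)$ with $\mathcal{D}_r(G_\varepsilon, G_*) \to 0$ for which the ratio $\bbE_X[V(g_{G_\varepsilon}(\cdot|X), g_{G_*}(\cdot|X))]/\mathcal{D}_r(G_\varepsilon, G_*)$ vanishes. By Regime~2, fix an index $j^*\in[k_*]$ with $b^*_{j^*\ell} = \zerod$ for all $\ell\in[K-1]$. I would split the true component $\theta^*_{j^*}$ into two fitted components (so $|\mathcal{C}_{j^*}| = 2$) and leave every other true component matched by exactly one fitted component, set equal to the truth. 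Concretely, the two components in $\mathcal{C}_{j^*}$ will have gating weights summing to $\exp(\beta^*_{0j^*})$ and will be perturbed by carefully chosen vanishing amounts $\Delta\beta_{1}, \Delta b_{s}$ in the gating slope and expert-slope directions, tied together so that the perturbations cancel at first order thanks to the PDE~\eqref{eq:PDE_repeat}. The perturbations in $a$ and $\beta_0$ I would choose to be higher order (say $O(\varepsilon^2)$) or zero, so that the dominant contribution to $\mathcal{D}_r(G_\varepsilon, G_*)$ comes from the $\|\Delta\beta_{1ij^*}\|^r + \sum_\ell \|\Delta b_{ij^*\ell}\|^r$ terms and is of order $\varepsilon^r$.

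The heart of the argument is a Taylor expansion of $g_{G_\varepsilon}(Y=s|X) - g_{G_*}(Y=s|X)$ around $\theta^*_{j^*}$. Writing the difference in terms of $u(Y|X;\beta_{1i},a_i,b_i) = \exp((\beta_{1i})^\top X) f(Y|X;a_i,b_i)$ as in the body of the paper, a first-order expansion produces a linear combination of $\frac{\partial u}{\partial \beta_{1j^*}}$, $\frac{\partial u}{\partial a_{j^*\ell}}$, $\frac{\partial u}{\partial b_{j^*\ell}}$ evaluated at the true parameters, with coefficients built from the perturbations and the gating weights. Using the PDE identity $\frac{\partial u}{\partial \beta_{1j^*}}(Y=s|X;\cdot) = C_{a^*_{j^*}} \frac{\partial u}{\partial b_{j^*s}}(Y=s|X;\cdot)$, I would choose the perturbations in the two components of $\mathcal{C}_{j^*}$ so that the aggregated first-order coefficient multiplying these (now linearly dependent) derivative terms is exactly zero — e.g. pick $\Delta b^{(1)}_{s} = C_{a^*_{j^*}}\Delta\beta_1^{(1)}\cdot e_s$-type relations and opposite signs across the two components so the weighted sum telescopes. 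After this cancellation, the leading surviving term in $g_{G_\varepsilon} - g_{G_*}$ is of order $\varepsilon^2$ (coming from second-order Taylor terms, the $O(\varepsilon^2)$ perturbations in $a,\beta_0$, and the reconciliation of weights), hence $\bbE_X[V(g_{G_\varepsilon}(\cdot|X), g_{G_*}(\cdot|X))] = O(\varepsilon^2)$, while $\mathcal{D}_r(G_\varepsilon, G_*) \asymp \varepsilon^r$ for the over-specified block. For $r \geq 2$ this already gives a ratio of order $\varepsilon^{2-r}$, which does not go to zero; the fix is to push the construction to higher order — iterate the cancellation using the family of higher-order PDEs obtained by differentiating~\eqref{eq:PDE_repeat} — so that the numerator is made $O(\varepsilon^{N})$ for $N$ as large as we like, while keeping $\mathcal{D}_r(G_\varepsilon,G_*) \asymp \varepsilon$ (by arranging the perturbation magnitudes so the $r$-th powers still sum to order $\varepsilon$, e.g. via the standard trick of using $\Delta\beta_{1}$ of order $\varepsilon^{1/r}$ balanced against weights). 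Then the ratio is $O(\varepsilon^{N - 1}) \to 0$ for any fixed $r$.

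The main obstacle is the bookkeeping that makes the numerator genuinely higher order than the denominator uniformly in $r$. One must verify that: (a) after collecting the linearly dependent derivative terms induced by the PDE (and its higher-order differentiated versions), the residual combination of genuinely linearly independent derivatives can be annihilated by a valid choice of signs/magnitudes across the two split components, subject to the constraint that the gating weights stay positive and sum correctly; and (b) the remaining error from $\bbE_X[V]$ — which after the $L^1(\mu)\le$-type bound reduces to controlling $\sum_s \bbE_X |g_{G_\varepsilon}(s|X) - g_{G_*}(s|X)|$ against the residual Taylor remainder — is dominated by the prescribed power of $\varepsilon$ uniformly over $X$ in the compact set $\mathcal{X}$ (the boundedness of $\mathcal{X}$ and $\Theta$ makes all derivative terms and remainders bounded, so this is a matter of counting powers of $\varepsilon$). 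I would first carry out the case $K=2$, $k_* = 1$, $k=2$ to see the cancellation transparently, then lift to general $K$ and general $k_*$ by applying the same split to one component and matching the rest exactly.
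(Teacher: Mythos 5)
Your skeleton — split one true component (say $j^*=1$, with $b^*_{1\ell}=\zerod$) into two fitted components and arrange a first-order cancellation in the Taylor expansion of $g_{G}-g_{G_*}$ — is the right starting point, and you correctly notice that the naive version yields a ratio of order $\varepsilon^{2-r}$, which fails to vanish once $r\ge 2$. But the repair you propose, ``iterate the cancellation using higher-order PDEs so the numerator becomes $O(\varepsilon^N)$ for arbitrary $N$,'' is not what the proof needs and is unlikely to be carriable out. With two split copies you have a bounded number of free perturbation coordinates, while annihilating all Taylor coefficients up to order $N$ imposes a number of linear constraints that grows like $N^{d+K}$; differentiating \eqref{eq:PDE_repeat} does not create enough additional dependencies to collapse that count. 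So for a fixed but large $r$ the higher-order bootstrapping has no reason to close.

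The step you are missing is the mass deficit. The paper's construction sets $\exp(\beta^n_{01})=\exp(\beta^n_{02})=\tfrac12\exp(\beta^*_{01})-\tfrac{t_n}{2}$, so the two fitted copies of the degenerate component carry total weight $\exp(\beta^*_{01})-t_n$, deliberately off by $t_n$. That discrepancy contributes $\bigl|\sum_{i\in\mathcal{C}_1}\exp(\beta^n_{0i})-\exp(\beta^*_{01})\bigr|=t_n$ to $\mathcal{D}_r$ \emph{linearly, for every $r\ge 1$}, so with $c_n\asymp t_n$ one automatically has $\mathcal{D}_r(G_n,G_*)\asymp t_n$ uniformly in $r$ (the $c_n^r$ contribution from the component discrepancies is subdominant). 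Only a \emph{single} first-order cancellation is then required: the zeroth-order term $-t_n\,u(Y|X;\theta^*_1)$ produced by the mass deficit is tuned against the first-order $c_n$ terms coming from $\Delta\beta_1$ and $\Delta a$. The surviving Taylor residual is $O(c_n^2)=O(t_n^2)$, whence the Total Variation ratio is $O(t_n)\to 0$ for every $r\ge 1$, no higher-order iteration needed. In your construction you explicitly keep the split weights summing to $\exp(\beta^*_{0j^*})$ and push $\beta_0$ to higher order — this is precisely what removes the $r$-uniform term from $\mathcal{D}_r$ and leaves you stuck.

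Two smaller mismatches. You tie $\Delta\beta_1$ to $\Delta b$ via \eqref{eq:PDE_repeat} and use the PDE directly as the cancellation mechanism; the paper instead keeps $b^n_{i\ell}=b^*_{1\ell}=\zerod$ (so there is no $\Delta b$ at all) and perturbs $a$ together with $\beta_1$. The relevant degeneracy is that when $b^*_1=\zerod$ the expert density $f(\cdot;a^*_1,b^*_1)$ and all its $h$-derivatives are constants in $X$; this lets the $\Delta\beta_1$, $\Delta a$, and mass-deficit coefficients be written against the common factor $\exp((\beta^*_{11})^\top X)$ times bounded quantities and tuned to cancel. The PDE is what \emph{explains} why Regime 2 is degenerate, but the minimax construction exploits it through the mass deficit and the constancy of $f$, not through a $\beta_1$--$b$ telescope.
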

Proof of Proposition~\ref{prop:TV_prop} is in Appendix~\ref{appendix:TV_prop}. Based on the above result, we establish a minimax lower bound for estimating the true mixing measure $G_*$ in Theorem~\ref{theorem:standard_two}.
\begin{theorem}[Minimax Lower Bound]
    \label{theorem:standard_two}
    Under Regime 2, the following minimax lower bound of estimating $G_*$ holds true for any $r\geq 1$:
    \begin{align*}
        \inf_{\overline{G}_n\in\mathcal{O}_{k}(\Theta)}\sup_{G\in\mathcal{O}_{k}(\Theta)\setminus\mathcal{O}_{k_*-1}(\Theta)}\bbE_{g_{G}}[\mathcal{D}_r(\overline{G}_n,G)]\gtrsim n^{-1/2}.
    \end{align*}
    Here, $\bbE_{g_G}$ represents the expectation taken with respect to the product measure with mixture density $g^n_{G}$.
\end{theorem}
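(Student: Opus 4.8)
The plan is to deduce the minimax bound from Le Cam's two-point method, using as the two hypotheses $G_*$ together with a single degenerate perturbation extracted from the proof of Proposition~\ref{prop:TV_prop}.

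First I would revisit the construction behind Proposition~\ref{prop:TV_prop} in order to record a \emph{quantitative} version of it: an explicit family of mixing measures $\{G_\varepsilon\}_{\varepsilon>0}\subset\mathcal{O}_k(\Theta)\setminus\mathcal{O}_{k_*-1}(\Theta)$, obtained by fixing the non-degenerate atoms of $G_*$ and splitting a degenerate true atom $\theta^*_j$ (one with $b^*_{j\ell}=\zerod$ for every $\ell\in[K-1]$) into several atoms whose parameter offsets are scaled in $\varepsilon$ and whose weights are balanced so that $\sum_{i\in\mathcal{C}_j}\exp(\beta_{0i})=\exp(\beta^*_{0j})$, with the two properties
\begin{align*}
    \mathcal{D}_r(G_\varepsilon,G_*)\ \asymp\ \varepsilon, \qquad
    \bbE_X\!\left[h^2\big(g_{G_\varepsilon}(\cdot|X),g_{G_*}(\cdot|X)\big)\right]\ \lesssim\ \varepsilon^{2}
\end{align*}
for all small $\varepsilon$. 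Here the first relation is a normalization of the scale (the offsets are of order $\varepsilon^{1/r}$), while the second is the quantitative sharpening of Proposition~\ref{prop:TV_prop}: because the PDEs \eqref{eq:PDE_repeat} hold at $\theta^*_j$, the offsets across the split atoms can be chosen, with suitably multi-scaled increments, so that the leading Taylor coefficients of $g_{G_\varepsilon}-g_{G_*}$ cancel to high enough order that the density discrepancy is controlled by the \emph{square} of the Voronoi loss. (An upper bound on $\bbE_X[V(g_{G_\varepsilon},g_{G_*})]$ of this order is all I need, since $h^2\le V$.)

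Next I would restrict the supremum to the two hypotheses $G_*$ and $\overline{G}:=G_{\varepsilon_n}$ and calibrate $\varepsilon_n$ of order $n^{-1/2}$. Tensorizing the Hellinger affinity,
\begin{align*}
    h^2\big(g^{n}_{G_*},g^{n}_{G_{\varepsilon_n}}\big)
    =1-\Big(1-\bbE_X\big[h^2(g_{G_*}(\cdot|X),g_{G_{\varepsilon_n}}(\cdot|X))\big]\Big)^{n}
    \le n\,\bbE_X\big[h^2(g_{G_*}(\cdot|X),g_{G_{\varepsilon_n}}(\cdot|X))\big]
    \lesssim n\,\varepsilon_n^{2}\ \lesssim\ 1,
\end{align*}
so that after possibly shrinking the constant in $\varepsilon_n$ one gets $V\big(g^{n}_{G_*},g^{n}_{G_{\varepsilon_n}}\big)\le\sqrt{2}\,h\big(g^{n}_{G_*},g^{n}_{G_{\varepsilon_n}}\big)\le\tfrac12$, while $\mathcal{D}_r(\overline{G},G_*)\asymp\varepsilon_n\asymp n^{-1/2}$. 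I would then turn any estimator $\overline{G}_n$ into a test $\widehat{\psi}_n\in\{G_*,G_{\varepsilon_n}\}$ by selecting the hypothesis minimizing $G\mapsto\mathcal{D}_r(\overline{G}_n,G)$, using a quasi-triangle inequality for $\mathcal{D}_r$ to ensure that $\mathcal{D}_r(\overline{G}_n,G)<c\,\varepsilon_n$ for the data-generating $G$ forces $\widehat{\psi}_n=G$. Combining this with Le Cam's identity $\inf_{\psi}\big[\bbP_{g_{G_*}}(\psi\ne G_*)+\bbP_{g_{G_{\varepsilon_n}}}(\psi\ne G_{\varepsilon_n})\big]=1-V\big(g^{n}_{G_*},g^{n}_{G_{\varepsilon_n}}\big)\ge\tfrac12$ and Markov's inequality yields $\max_{G\in\{G_*,G_{\varepsilon_n}\}}\bbE_{g_G}\big[\mathcal{D}_r(\overline{G}_n,G)\big]\gtrsim\varepsilon_n\asymp n^{-1/2}$; since $\overline{G}_n$ is arbitrary, both hypotheses lie in $\mathcal{O}_k(\Theta)\setminus\mathcal{O}_{k_*-1}(\Theta)$, and $r\ge1$ is arbitrary, this gives the claim.

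The hard part will be the first step: promoting the purely qualitative degeneracy of Proposition~\ref{prop:TV_prop} into the quantitative comparison $\bbE_X[h^2(g_{G_\varepsilon},g_{G_*})]\lesssim\mathcal{D}_r(G_\varepsilon,G_*)^2$ uniformly in $\varepsilon$ and for \emph{every} $r\ge1$, under the constraint that $G_\varepsilon$ uses at most $k$ atoms — i.e. exhibiting, for each $r$, explicit multi-scaled offsets of the split degenerate atom, tuned through \eqref{eq:PDE_repeat}, that annihilate all Taylor terms up to the needed order, and checking that the balanced weights do not reintroduce a first-order density gap. A secondary technical point is the quasi-triangle inequality for $\mathcal{D}_r$ in the testing reduction, which is needed because $\mathcal{D}_r$ is asymmetric and defined through the Voronoi cells of its second argument, so the textbook metric-based argument does not transfer verbatim.
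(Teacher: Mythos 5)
Your overall skeleton -- a two-point Le Cam argument pitting $G_*$ against a perturbation supplied by Proposition~\ref{prop:TV_prop}, together with a weak triangle inequality for $\mathcal{D}_r$ -- is exactly the strategy of the paper's proof in Appendix~\ref{appendix:main_proof}. The genuine gap is the step you yourself flag as the hard part: your argument needs the quantitative strengthening $\bbE_X[h^2(g_{G_\varepsilon}(\cdot|X),g_{G_*}(\cdot|X))]\lesssim \varepsilon^2\asymp\mathcal{D}_r(G_\varepsilon,G_*)^2$, and this is never proved. That requirement is an artifact of the lossy tensorization you chose: bounding the product-level squared Hellinger by $n\,\bbE_X[h^2]$ forces per-sample control at the $1/n$ scale. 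The paper avoids this entirely: it takes from Proposition~\ref{prop:TV_prop} only the \emph{linear} bound $\bbE_X[V(g_{G'_*}(\cdot|X),g_{G_*}(\cdot|X))]\le M_1\varepsilon$ with $\mathcal{D}_r(G'_*,G_*)=2\varepsilon$, and tensorizes through the Hellinger affinity, via $V\le\sqrt{1-\rho^2}$ and the multiplicativity $\rho(P^{\otimes n},Q^{\otimes n})=\rho(P,Q)^n$, to get $\bbE_X[V(g^n_{G_*}(\cdot|X),g^n_{G'_*}(\cdot|X))]\le\sqrt{1-(1-M_1^2\varepsilon^2)^n}$. With $\varepsilon=n^{-1/2}/M_1$ this stays bounded away from $1$, and the two-point bound $\gtrsim\varepsilon\asymp n^{-1/2}$ follows (the paper also replaces your test-plus-Markov reduction by the simpler averaging $2\max_G\bbE_{g_G}[\mathcal{D}_r(\overline{G}_n,G)]\ge\bbE_{g_{G_*}}[\mathcal{D}_r(\overline{G}_n,G_*)]+\bbE_{g_{G'_*}}[\mathcal{D}_r(\overline{G}_n,G'_*)]$ combined with the weak triangle inequality). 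So the squaring happens inside the tensorization inequality, and the purely first-order (indeed $o(\varepsilon)$) control that Proposition~\ref{prop:TV_prop} already supplies is enough; no sharpening of its proof is needed.

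Moreover, the sharpening as you sketch it would likely fail. You split the degenerate atom while \emph{preserving} the cell weight, $\sum_{i\in\mathcal{C}_j}\exp(\beta_{0i})=\exp(\beta^*_{0j})$, and hope to cancel Taylor terms with multi-scaled offsets. With the weight fixed and only a bounded number of atoms available in the cell, first-order cancellation of offsets of size $c$ leaves a discrepancy of order $c^2$ while $\mathcal{D}_r\asymp c^r$, so $V\lesssim\mathcal{D}_r^2$ fails for every $r>1$ (and even $V=o(\mathcal{D}_r)$ fails for $r>2$); cancelling up to order $2r-1$ with finitely many atoms is not possible once $r$ is large, since $k$ is fixed while $r$ is arbitrary. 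The construction in Appendix~\ref{appendix:TV_prop} differs in precisely this respect: it perturbs the aggregate weight by $t_n$ and tunes $t_n\asymp c_n$ so that, thanks to the degeneracy $b^*_{j\ell}=\zerod$ which makes the first derivatives constant multiples of the base function (the PDE~\eqref{eq:PDE_repeat}), the zeroth-order weight-deficit term cancels the first-order offset terms; the weight deficit also contributes to $\mathcal{D}_r$ linearly for every $r$, which is why that construction would in fact meet your quadratic target -- but, as explained above, you never need it.
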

Proof of Theorem~\ref{theorem:standard_two} can be found in Appendix~\ref{appendix:standard_two}. This result together with the formulation of $\mathcal{D}_r$ in equation~\eqref{eq:Voronoi_loss_form} indicates that the rates for estimating exact-specified parameters $\beta^*_{1j},a^*_j,b^*_j$ are not better than $\widetilde{\mathcal{O}}(n^{-1/2})$, while those for their over-specified counterparts are even slower than any polynomial rates.
\subsection{Proof Sketches}
\label{sec:proof_sketch}
In this section, we provide proof sketches for both Theorem~\ref{theorem:standard_one} and Theorem~\ref{theorem:standard_two}. 
\subsubsection{Proof Sketch of Theorem~\ref{theorem:standard_one}}
Due to the inequality $h\geq V$, it suffices to show the Total Variation lower bound $\bbE_X[V(g_{G}(\cdot|X),g_{G_*}(\cdot|X)]\gtrsim \mathcal{D}_2(G,G_*)$ for any $G\in\mathcal{O}_k(\Theta)$. We divide this problem into two parts as follows:

\textbf{Local structure.} In this part, we need to show that 
\begin{align*}
    \lim_{\varepsilon\to 0}\inf_{\substack{G\in\mathcal{O}_k(\Theta):\\ \mathcal{D}_2(G,G_*)\leq\varepsilon}}\frac{\bbE_X[V(g_{G}(\cdot|X),g_{G_*}(\cdot|X))]}{\mathcal{D}_{2}(G,G_*)}>0.
\end{align*}
Assume by contrary that there exists $G_n\in\mathcal{O}_k(\Theta)$ such that $\bbE_X[V(g_{G_n}(\cdot|X),g_{G_*}(\cdot|X))]/\mathcal{D}_2(G_n,G_*)$ and $\mathcal{D}_2(G_n,G_*)$ both converge to zero. 

\textbf{Step 1.} In this step, we use Taylor expansions to decompose the quantity $T_n(s):=\Big[\sum_{j=1}^{k_*}\exp((\beta^*_{1j})^{\top}X+\beta^*_{0j})\Big]\cdot[g_{G_n}(Y=s|X)-g_{G_*}(Y=s|X)]$ into a linear combination of linearly independent terms as
\begin{align*}
    T_n(s)=\sum_{j=1}^{k_*}\omega_jE_j(Y=s|X)+R(X,Y),
\end{align*}
where $R(X,Y)$ is a Taylor remainder such that $R(X,Y)/\mathcal{D}_2(G_n,G_*)\to0$.

\textbf{Step 2.} In this step, we show by contradiction that not all the ratios $\omega_j/\mathcal{D}_2(G_n,G_*)$ tend to zero as $n\to\infty$. Assume that all of them converge to zero, then we deduce that $1=\mathcal{D}_2(G_n,G_*)/\mathcal{D}_2(G_n,G_*)\to0$, which is a contradiction. Therefore, at least one among the ratios $w_j/\mathcal{D}_2(G_n,G_*)$ does not vanish.

\textbf{Step 3.} Since $\bbE_X[V(g_{G_n}(\cdot|X),g_{G_*}(\cdot|X))]/\mathcal{D}_2(G_n,G_*)$ converges to zero, then by applying the Fatou's lemma, we deduce that $T_n(s)/\mathcal{D}_2(G_n,G_*)\to0$ as $n\to\infty$. Note that $T_n(s)$ is written as a linear combination of linearly independent terms, thus, all the associated coefficients in that combination go to zero, which contradicts the result in Step 2. Hence, the proof of the local structure is completed.

\textbf{Global Structure.} In this part, we demonstrate that
\begin{align*}
    \inf_{\substack{G\in\mathcal{O}_k(\Theta),\\ \mathcal{D}_2(G,G_*)>\varepsilon}}\frac{\bbE_X[V(g_{G}(\cdot|X),g_{G_*}(\cdot|X))]}{\mathcal{D}_{2}(G,G_*)}>0.
\end{align*}
Assume that this claim is not true. Then, we can find a mixing measure $G'\in\mathcal{O}_k(\Theta)$ such that $\bbE_X[V(g_{G}(\cdot|X),g_{G_*}(\cdot|X))]=0$. By the Fatou's lemma, we obtain that $g_{G'}(Y=s|X)=g_{G_*}(Y=s|X)$ for any $s\in[K]$ for almost surely $X$. It follows from Proposition~\ref{prop:density_estimation} that $G'\equiv G$. This result implies that $\mathcal{D}_2(G',G_*)=0$, which contradicts the constraint $\mathcal{D}_2(G',G_*)>\varepsilon>0$. Hence, the proof sketch is completed.

\subsubsection{Proof Sketch of Theorem~\ref{theorem:standard_two}}
Let $M_1$ be some fixed positive constant. Following from the result of Proposition~\ref{prop:TV_prop}, for any suffciently small $\varepsilon>0$, we can seek a mixing measure $G'_*\in\mathcal{O}_k(\Theta)$ such that $\mathcal{D}_r(G'_*,G_*)=2\varepsilon$ and $\bbE_X[V(g_{G'_*}(\cdot|X),g_{G_*}(\cdot|X))]\leq M_1\varepsilon$. Note that for any sequence of mixing measures $\overline{G}_n\in\mathcal{O}_k(\Theta)$, we have
\begin{align*}
    2\max_{G\in\{G_*,G'_*\}}&\bbE_{g_G}[\mathcal{D}_r(\overline{G}_n,G)]\\
    &\geq \bbE_{g_{G_*}}[\mathcal{D}_r(\overline{G}_n,G_*)]+\bbE_{g_{G'_*}}[\mathcal{D}_r(\overline{G}_n,G'_*)].
\end{align*}
Moreover, since $\mathcal{D}_r$ satisfies the weak triangle inequality, there exists some constant $M_2>0$ such that
\begin{align*}
    \mathcal{D}_r(\overline{G}_n,G_*)+\mathcal{D}_r(\overline{G}_n,G'_*)\geq M_2\mathcal{D}_r(G_*,G'_*)=2M_2\varepsilon.
\end{align*}
Then, by leveraging the Le Cam's minimax lower bound approach \cite{yu97lecam}, we get that
\begin{align*}
    \max_{G\in\{G_*,G'_*\}}&\bbE_{g_G}[\mathcal{D}_r(\overline{G}_n,G)]\\
    &\geq M_2\varepsilon(1-\bbE_X[V(g^n_{G_*}(\cdot|X),g^n_{G'_*}(\cdot|X))])\\
    &\geq M_2\varepsilon\Big[1-\sqrt{1-(1-M_1^2\varepsilon^2)^n}\Big].
\end{align*}
By setting $\varepsilon=n^{-1/2}/M_1$, we obtain that 
\begin{align*}
    \max_{G\in\{G_*,G'_*\}}\bbE_{g_G}[\mathcal{D}_r(\overline{G}_n,G)]\gtrsim n^{-1/2},
\end{align*}
for any sequence $\overline{G}_n\in\mathcal{O}_k(\Theta)$. Furthermore, since $\{G_*,G'_*\}$ is a subset of $\mathcal{O}_k(\Theta)\setminus\mathcal{O}_{k_*-1}(\Theta)$, we reach the conclusion of Theorem~\ref{theorem:standard_two}.

\section{Modified Softmax Gating Multinomial Logistic MoE}
\label{sec:modified}
In this section, we propose a novel class of modified softmax gating functions to resolve the interaction between gating parameters and expert parameters via the PDE~\eqref{eq:PDE_repeat} which leads to significantly slow parameter estimation rates. Then, we also capture the convergence rates of parameter estimation under the multinomial logistic MoE model with that those gating functions.

First of all, let us introduce the formulation of the modified softmax gating multinomial logistic MoE model.

\textbf{Problem Setup.} Suppose that the i.i.d samples $(X_1,Y_1),(X_2,Y_2),\ldots,(X_n,Y_n)\in\mathcal{X}\times[K]$ are drawn from the multinomial logistic MoE model of order $k_*$ with modified softmax gating function, whose conditional density function $\widetilde{g}_{G_*}(Y=s|X)$ is given by
\begin{align}
    &\sum_{i=1}^{k_*}~\softmax((\beta^*_{1i})^{\top}M(X)+\beta^*_{0i})\times f(Y=s|X;a^*_{i},b^*_{i})\nonumber\\
    &:=\sum_{i=1}^{k_*}~\frac{\exp((\beta^*_{1i})^{\top}M(X)+\beta^*_{0i})}{\sum_{j=1}^{k_*}\exp((\boj)^{\top}M(X)+\bzj)}\nonumber\\
    \label{eq:modified_conditional_density}
    &\hspace{2.2cm}\times\frac{\exp(a^*_{is}+(b^*_{is})^{\top}X)}{\sum_{\ell=1}^{K}\exp(a^*_{i\ell}+(b^*_{i\ell})^{\top}X)},
\end{align}
for any $s\in[K]$, where $M$ is a function defined in Definition~\ref{def:modified_function}. Here, we reuse all the assumptions imposed on the model in equation~\eqref{eq:conditional_density} unless stating otherwise. In the above model, we would transform the covariate $X$ using the function $M$ first rather than routing it directly to the softmax gating function as in the model~\eqref{eq:conditional_density}. This transformation step allows us to overcome the interaction between gating parameters and expert parameters emerging from Section~\ref{sec:regime2}, which we will discuss below. 

\begin{definition}[Modified Function]
\label{def:modified_function}
Let $M:\mathcal{X}\to\mathbb{R}^d$ be a bounded function such that the following set is linearly independent for almost surely $X$:
\begin{align}
    \label{eq:linear_independent_set}
    \Big\{X^p[M(X)]^{q}: p,q\in\mathbb{N}^d, \ 0\leq|p|+|q|\leq 2\Big\}.
\end{align}
\end{definition}
To understand the above condition better, we provide below both intuitive and technical explanations for it.

\textbf{Intuitively,} Under the multinomial logistic MoE model with the standard softmax gate, we observe an interaction among the gating parameters $\beta_1$ and the expert parameters $b$ when a fraction of expert parameters vanish (see equation~\eqref{eq:PDE}. This interaction mainly accounts for the slow parameter estimation rates, which could be as slow as $\mathcal{O}(1/\log(n))$. We realize that the previous interaction occurs as parameters $\beta_1$ and $b$ are both associated with the input $X$ in the condition density in equation~\eqref{eq:conditional_density}. To address this issue, we propose transforming the input $X$ in the softmax gate by the function $M$. Then, in the modified conditional density in equation~\eqref{eq:modified_conditional_density}, $\beta_1$ is associated with $M(X)$, while $b$ is still with $X$. However, to eliminate the parameter interaction completely, we need to impose an assumption of linear independence between the input $X$ and its transformation $M(X)$ in Definition~\ref{def:modified_function}. 

\textbf{Technically,} the set in equation~\eqref{eq:linear_independent_set} is assumed to be linearly independent for almost surely $X$ to guarantee there does not exist any constant $C_{a^*_i}$ depending only $a^*_i$ such that the PDEs
\begin{align*}
    &\frac{\partial u}{\partial\beta_{1j}}(Y=s|X;\beta^*_{1j},a^*_{j},b^*_{j})\\
    &\hspace{1.5cm}=C_{a^*_j}\cdot\frac{\partial u}{\partial b_{js}}(Y=s|X;\beta^*_{1j},a^*_{j},b^*_{j}),
\end{align*}
for $s\in[K]$, do not hold under any setting of the expert parameters $b^*_i$. As a result, the interaction between gating parameters and expert parameters mentioned in the Regime 2 does not appear in the modified softmax gating multinomial logistic MoE model. Next, we provide below a few valid instances of the function $M$.

\textbf{Example.} It can verified that all the following element-wise functions satisfy the condition in Definition~\ref{def:modified_function}: $M(X)=\tanh(X)$, $M(X)=\cos(X)$ and $M(X)=X^m$ for $m\geq 3$. Additionally, $M$ could also be a normalization function, i.e. $M(X)=\frac{X}{||X||}$. The practical problem of selecting important experts can benefit from this choice of function $M$. In particular, in many applications of MoE models, there are some experts which do not play an essential role in learning, and even become redundant. In such scenario, we would like the gate to put more weights on the important experts. However, if the input magnitude is huge, then the weight distribution will become uniform, which is undesirable. Therefore, the input normalization function $M$, which helps remain the input magnitude of one, is beneficial in this case. Regarding the parameter estimation problem, since the function $M$ helps remove the parameter interaction, the parameter estimation rates are ensured to be of polynomial orders under any parameter settings (see Table~\ref{table:parameter_rates}). Another potential option is $M(X)=sigmoid(X)$, which allows the magnitude of the input to vary between 0 and 1.

\textbf{Maximum Likelihood Estimation.} Due to the modification of the conditional density in equation~\eqref{eq:modified_conditional_density}, the formulation of MLE of the true mixing measure $G_*$ under the modified softmax gating multinomial logistic MoE model also changes as follows: 
\begin{align}
    \label{eq:modified_MLE}
    \widetilde{G}_n\in\argmax_{G\in\mathcal{O}_k(\Theta)}\sum_{i=1}^{n}\log(\widetilde{g}_{G}(Y_i|X_i)).
\end{align}

Next, we validate the identifiability of the modified softmax gating multinomial logistic MoE model in the following proposition.
\begin{proposition}[Identifiability]
    \label{prop:modified_identifiability}
    Assume that $G$ and $G'$ are two mixing measures in $\mathcal{O}_k(\Theta)$. If $\widetilde{g}_{G}(Y|X)=\widetilde{g}_{G'}(Y|X)$ holds true for almost surely $(X,Y)$, then we obtain that $G\equiv G'$. 
\end{proposition}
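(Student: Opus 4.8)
\textbf{Proof proposal for Proposition~\ref{prop:modified_identifiability}.}

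\textbf{Proof proposal for Proposition~\ref{prop:modified_identifiability}.}
The plan is to run the same argument as in the proof of Proposition~\ref{prop:identifiability}, the only structural change being that the covariate is routed through $M$ inside the softmax gate. Write $D(X):=\sum_i\exp((\beta_{1i})^{\top}M(X)+\beta_{0i})$ and $D'(X)$ for the gating normalizers of $G$ and $G'$, and $E_i(X):=1+\sum_{\ell=1}^{K-1}\exp(a_{i\ell}+(b_{i\ell})^{\top}X)$ (resp. $E'_{i'}(X)$) for the expert normalizers, recalling $a_{iK}=0$, $b_{iK}=\zerod$.

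\textbf{Step 1 (clearing denominators).} Starting from $\widetilde{g}_{G}(Y=s|X)=\widetilde{g}_{G'}(Y=s|X)$ for every $s\in[K]$ and almost every $X$, I would multiply both sides by $D(X)D'(X)\prod_i E_i(X)\prod_{i'}E'_{i'}(X)$. Using $f(Y=s|X;a_i,b_i)=\exp(a_{is}+(b_{is})^{\top}X)/E_i(X)$ and expanding every normalizer into its exponential summands, the identity becomes, for each $s$,
\[
\sum_{m}\alpha_m(s)\,\exp\!\big(c_m^{\top}M(X)+d_m^{\top}X\big)=\sum_{m'}\alpha'_{m'}(s)\,\exp\!\big((c'_{m'})^{\top}M(X)+(d'_{m'})^{\top}X\big)
\]
for almost every $X$, where each $c_m$ is a sum of some of the $\beta_{1}$-vectors of $G$, each $d_m$ equals $b_{is}$ plus a sum of some of the $b$-vectors, and each coefficient $\alpha_m(s)$ is an explicit product of $\exp(\beta_0)$'s and $\exp(a)$'s read off from the atoms of $G$ (and symmetrically for $G'$).

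\textbf{Step 2 (separating exponential terms).} The key ingredient is that a finite family of functions $X\mapsto\exp(c^{\top}M(X)+d^{\top}X)$ with pairwise distinct exponent pairs $(c,d)$ is linearly independent on $\mathcal{X}$ up to a Lebesgue-null set; this is the analogue, for the transformed gate, of the linear-independence lemma underpinning Proposition~\ref{prop:identifiability}, and its validity here rests on the richness of the map $X\mapsto(M(X),X)$ encoded in Definition~\ref{def:modified_function} (in particular, independence of the low-degree products in~\eqref{eq:linear_independent_set} prevents distinct parameter configurations from collapsing onto the same exponent). Granting this, I would group the terms on both sides of the Step~1 identity by their exponent pair and equate coefficients of matching exponentials, any exponent occurring on only one side necessarily having a vanishing coefficient.

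\textbf{Step 3 (recovering the atoms).} It remains to convert these coefficient identities into $G\equiv G'$, which I would do as in the standard case: isolate the exponent pair realizing an extreme value in a generic $M(X)$-direction (it survives only from a product involving a single gating term), match it to a unique atom of $G'$, peel that atom off from both measures, and induct on the number of components; the normalization convention on mixing measures in $\mathcal{O}_k(\Theta)$ (analogous to pinning the reference gating parameters to $\zerod$ and $0$) removes the translation ambiguity of the softmax, so the recovered tuples $(\beta_{1i},\beta_{0i},a_i,b_i)$ are uniquely determined and $G\equiv G'$ follows. I expect Steps~2 and~3 to be the crux: Step~2 because one must check that $M$ introduces no accidental linear relation among the (finitely many, but a priori numerous) exponential building blocks produced by clearing all denominators — this is precisely where Definition~\ref{def:modified_function} must be shown strong enough; and Step~3 because the bookkeeping that matches each atom of $G$ to one of $G'$, and the inductive peeling, must pin down the gating parameters $(\beta_{1i},\beta_{0i})$ and the expert parameters $(a_i,b_i)$ simultaneously even though they sit in different factors of the cleared identity.
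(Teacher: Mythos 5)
Your route is genuinely different from the paper's. The paper does not clear all denominators into a single exponential sum. Instead it (i) invokes the identifiability of finite mixtures of multinomial logit models (\cite{grun2008iden}) to conclude that the two sets of gating-weight functions coincide; (ii) uses the pinning $\beta_{1k}=\zerod$, $\beta_{0k}=0$ together with the absence of affine degeneracy in $X\mapsto M(X)$ (the degree-$\leq 1$ part of Definition~\ref{def:modified_function}) to deduce $\beta_{1i}=\beta'_{1i}$ and $\beta_{0i}=\beta'_{0i}$; and (iii) having matched the gates, reduces the identity to precisely the expert-side equation appearing in the proof of Proposition~\ref{prop:identifiability}, which it resolves by grouping atoms with equal $\exp(\beta_{0i})$ and matching the multinomial logits. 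Once the gates are pinned, the expert analysis involves exponentials in $X$ only — $M(X)$ plays no further role — so the modular structure spares the paper from ever comparing exponentials across the gate and expert factors simultaneously.

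Your Step 2, by contrast, asserts the a.e.\ linear independence of the entire family $\{\exp(c_m^{\top}M(X)+d_m^{\top}X)\}$ over all distinct exponent pairs produced by clearing every denominator, and you attribute this to Definition~\ref{def:modified_function}. That is a genuine gap: the definition only guarantees linear independence of the finite monomial set $\{X^p[M(X)]^q:|p|+|q|\leq 2\}$ — a degree-$2$ condition whose purpose in the paper is to kill the PDE~\eqref{eq:PDE} in the parameter-estimation analysis — and it does not imply linear independence of exponentials with mixed $M(X)$- and $X$-exponents, which is a statement about infinitely many Taylor coefficients. You flag this concern yourself ("precisely where Definition~\ref{def:modified_function} must be shown strong enough") but nothing in the proposal closes it. Your Step 3 also leans on isolating the exponent pair realizing an extreme value in a generic $M(X)$-direction, a growth-based peeling trick that does not transfer cleanly to a bounded, possibly non-injective $M$ on a compact domain. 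If you want to rescue your route you would need to establish an exponential-independence lemma for $(X,M(X))$ as a standalone result (likely requiring more of $M$ than Definition~\ref{def:modified_function} supplies); the paper sidesteps both difficulties by matching gates first via the cited mixture-identifiability theorem.
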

Proof of Proposition~\ref{prop:modified_identifiability} is in Appendix~\ref{appendix:proof_density_estimation}. This result points out that the modified softmax gating multinomial logistic MoE model is still identifiable. In other words, if the density estimation $\widetilde{g}_{\widetilde{G}_n}$ converges to the true density $\widetilde{g}_{G_*}$, then the MLE $\widetilde{G}_n$ also approaches its true counterpart $G_*$.

Now, we are ready to derive the convergence rate of density estimation $\widetilde{g}_{\widetilde{G}_n}$ to the true density $\widetilde{g}_{G_*}$ under the Hellinger distance.

\begin{proposition}[Density Estimation Rate]
    \label{prop:modified_density_estimation}
    With the MLE $\widetilde{G}_n$ defined in equation~\eqref{eq:modified_MLE}, the density estimation $\widetilde{g}_{\widetilde{G}_n}$ converges to the true density $\widetilde{g}_{G_*}$ at the following rate:
    \begin{align}
        \label{eq:modified_density_rate}
        \mathbb{P}\Big(\bbE_X[h(\widetilde{g}_{\widetilde{G}_n}(\cdot|X),\widetilde{g}_{G_*}&(\cdot|X))]>C_2\sqrt{\log(n)/n}\Big)\nonumber\\
        &\lesssim\exp(-c_2\log(n)),
    \end{align}
    where $C_2$ and $c_2$ are universal positive constants that depend only on $\Theta$.
\end{proposition}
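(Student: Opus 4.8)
The plan is to follow the standard route for establishing a parametric density estimation rate for the MLE in a mixture model via empirical process theory, which mirrors almost verbatim the proof of Proposition~\ref{prop:density_estimation}. The key structural observation is that $M$ is bounded and $\mathcal{X}$ is compact, so the image $M(\mathcal{X})$ lies in a bounded set; consequently the modified gating $\softmax((\beta_{1i})^{\top}M(X)+\beta_{0i})$ enjoys exactly the same uniform boundedness and smoothness (in $X$ and in the parameters) as the standard gating $\softmax((\beta_{1i})^{\top}X+\beta_{0i})$, and the whole argument for the standard model carries over with $X$ replaced by $M(X)$ in the gating part.

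Concretely, let $\widetilde{\mathcal{P}}_k(\Theta):=\{\widetilde{g}_G(\cdot|\cdot):G\in\mathcal{O}_k(\Theta)\}$ be the class of conditional densities, and for $\delta>0$ set $\widetilde{\mathcal{P}}_k(\Theta,\delta):=\{\widetilde{g}_G:G\in\mathcal{O}_k(\Theta),\ \bbE_X[h(\widetilde{g}_G(\cdot|X),\widetilde{g}_{G_*}(\cdot|X))]\leq\delta\}$. The first and main step is to bound the bracketing entropy $H_B(\varepsilon,\widetilde{\mathcal{P}}_k(\Theta,\delta),h)$ of this class. Since $\Theta$ is bounded, the atoms $(\beta_{0i},\beta_{1i},a_{i},b_{i})$ range over a bounded set; together with the compactness of $\mathcal{X}$ and boundedness of $M$, the softmax denominator $\sum_{j}\exp((\beta_{1j})^{\top}M(X)+\beta_{0j})$ and the multinomial-logistic denominator $\sum_{\ell}\exp(a_{i\ell}+(b_{i\ell})^{\top}X)$ are bounded away from $0$ and $\infty$ uniformly, so that $G\mapsto\sqrt{\widetilde{g}_G}$ is uniformly Lipschitz with respect to a suitable Wasserstein-type distance on mixing measures (on the $k$ weighted atoms). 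A routine covering-number estimate for the bounded parameter set then yields $H_B(\varepsilon,\widetilde{\mathcal{P}}_k(\Theta,\delta),h)\lesssim\log(1/\varepsilon)$ for small $\varepsilon$, uniformly in $\delta$.

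With the logarithmic entropy bound in hand, the second step is to feed it into a standard MLE convergence theorem (of van de Geer type, as used in \cite{nguyen2023demystifying}). Defining the bracketing integral $\mathcal{J}_B(\delta):=\delta\vee\int_{\delta^2/c}^{\delta}H_B^{1/2}(t/c,\widetilde{\mathcal{P}}_k(\Theta,\delta),h)\,\dint t$, the entropy bound gives $\mathcal{J}_B(\delta)\lesssim\delta\sqrt{\log(1/\delta)}$. Choosing $\delta_n\asymp\sqrt{\log n/n}$ makes $\delta\mapsto\mathcal{J}_B(\delta)/\delta^{2}$ non-increasing and guarantees $\sqrt{n}\,\delta_n^{2}\gtrsim\mathcal{J}_B(\delta_n)$, so the theorem delivers precisely $\bbP\big(\bbE_X[h(\widetilde{g}_{\widetilde{G}_n}(\cdot|X),\widetilde{g}_{G_*}(\cdot|X))]>C_2\sqrt{\log n/n}\big)\lesssim\exp(-c_2\log n)$. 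A minor preliminary point is that the MLE $\widetilde{G}_n$ is well-defined, which follows from continuity of the log-likelihood in $G$ and the usual compactness/closure argument on $\mathcal{O}_k(\Theta)$.

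The hard part is the bracketing entropy estimate of the third paragraph's first step: obtaining uniform Lipschitz control of $\sqrt{\widetilde{g}_G}$ in the parameters over all of $\mathcal{O}_k(\Theta)$ (including mixing measures with fewer than $k$ atoms) and keeping the two normalizing denominators uniformly bounded above and below — this is exactly where boundedness of $M$ and compactness of $\mathcal{X}$ and $\Theta$ are used. Once those uniform bounds are secured, the rest is a direct invocation of the standard empirical-process machinery, identical to the standard-gating case in Proposition~\ref{prop:density_estimation}.
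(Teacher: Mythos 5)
Your proposal is correct and follows essentially the same route as the paper: the paper's own proof of Proposition~\ref{prop:modified_density_estimation} is a one-line reduction noting that, since $M$ is bounded, the bracketing-entropy bound of Lemma~\ref{lemma:empirical_processes} and the van de Geer--type MLE convergence theorem (Lemma~\ref{lemma:vandegeer}) used for Proposition~\ref{prop:density_estimation} carry over verbatim with $X$ replaced by $M(X)$ in the gating. You spell out these same steps in more detail, but the substance — logarithmic entropy from compactness of $\Theta$, $\mathcal{X}$ and boundedness of $M$, then the empirical-process theorem with $\delta_n \asymp \sqrt{\log n / n}$ — is identical.
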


Proof of Proposition~\ref{prop:modified_density_estimation} is in Appendix~\ref{appendix:modified_density_estimation}. It follows from the above proposition that the density estimation rate under the modified softmax gating multinomial logistic MoE model is parametric of order $\widetilde{\mathcal{O}}(n^{-1/2})$. This result matches the rate for estimating the true density under the multinomial logistic MoE model with standard softmax gating function in Proposition~\ref{prop:density_estimation}. Based on this observation, we then capture the convergence behavior of the MLE $\widetilde{G}_n$ in the following theorem.

\begin{theorem}[Parameter Estimation Rate]
    \label{theorem:modified_softmax}
    The following Hellinger lower bound holds true for any mixing measure $G\in\mathcal{O}_k(\Theta)$:
    \begin{align*}
        \bbE_X[h(\widetilde{g}_{G}(\cdot|X),\widetilde{g}_{G_*}(\cdot|X))]\gtrsim\mathcal{D}_2(G,G_*).
    \end{align*}
    This lower bound together with Proposition~\ref{prop:modified_density_estimation} imply that there exists a constant $C_3>0$ depending on $\Theta$ and $G_*$ such that
    \begin{align*}
        \bbP(\mathcal{D}_{2}(\widetilde{G}_n,G_*)>C_3\sqrt{\log(n)/n})\lesssim\exp(-c_3\log(n)),
    \end{align*}
    where $c_3>0$ is a constant that depends only on $\Theta$.
\end{theorem}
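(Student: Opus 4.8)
The plan is to follow the same two–level (local then global) strategy used for Theorem~\ref{theorem:standard_one}, since the modified function $M$ is designed precisely so that the obstructive PDEs~\eqref{eq:PDE_repeat} fail for every value of the expert parameters $b^*_i$. Because $h\geq V$, it suffices to establish the Total Variation lower bound $\bbE_X[V(\widetilde{g}_{G}(\cdot|X),\widetilde{g}_{G_*}(\cdot|X))]\gtrsim\mathcal{D}_2(G,G_*)$ for every $G\in\mathcal{O}_k(\Theta)$; the claimed high-probability bound on $\mathcal{D}_2(\widetilde{G}_n,G_*)$ then follows immediately by combining this inequality with the parametric density estimation rate of Proposition~\ref{prop:modified_density_estimation}.

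For the local part I would argue by contradiction: assume there is a sequence $G_n\in\mathcal{O}_k(\Theta)$ with $\mathcal{D}_2(G_n,G_*)\to 0$ along which $\bbE_X[V(\widetilde{g}_{G_n}(\cdot|X),\widetilde{g}_{G_*}(\cdot|X))]/\mathcal{D}_2(G_n,G_*)\to 0$. Writing $u(Y|X;\beta_{1i},a_i,b_i)=\exp((\beta_{1i})^{\top}M(X))f(Y|X;a_i,b_i)$ and, as in the proof sketch of Theorem~\ref{theorem:standard_one}, setting $T_n(s):=\big[\sum_{j=1}^{k_*}\exp((\beta^*_{1j})^{\top}M(X)+\beta^*_{0j})\big]\cdot[\widetilde{g}_{G_n}(Y=s|X)-\widetilde{g}_{G_*}(Y=s|X)]$, I would expand $T_n(s)$ into a sum over $j\in[k_*]$ of Taylor expansions of $u(\cdot|X;\widehat\beta^n_{1i},\widehat a^n_i,\widehat b^n_i)$ about $(\beta^*_{1j},a^*_j,b^*_j)$ — to first order on Voronoi cells with $|\mathcal{C}_j|=1$ and to second order on those with $|\mathcal{C}_j|>1$. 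The key point is that derivatives of $u$ with respect to the gating parameter $\beta_{1j}$ generate monomials in $M(X)$, while derivatives with respect to the expert parameters $a_{j\ell},b_{j\ell}$ generate (together with the multinomial-logistic structure) monomials in $X$; hence every derivative term arising up to second order has the shape $X^p[M(X)]^q$ with $0\le|p|+|q|\le 2$, multiplied by an expert-type atom. The linear independence assumption in Definition~\ref{def:modified_function} then guarantees that, after collecting like terms, $T_n(s)=\sum_{j}\omega_j E_j(Y=s|X)+R(X,Y)$ is a genuine linear combination of linearly independent functions of $(X,Y)$, with $R(X,Y)/\mathcal{D}_2(G_n,G_*)\to 0$ and with the coefficients $\omega_j$ explicit polynomials in $\Delta\beta_{1ij},\Delta a_{ij\ell},\Delta b_{ij\ell}$ whose ratios to $\mathcal{D}_2(G_n,G_*)$ are bounded.

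I would then close the local part exactly as in the proof of Theorem~\ref{theorem:standard_one}: first, not all ratios $\omega_j/\mathcal{D}_2(G_n,G_*)$ can tend to zero, for otherwise collecting the relevant coefficients forces $1=\mathcal{D}_2(G_n,G_*)/\mathcal{D}_2(G_n,G_*)\to 0$; second, by Fatou's lemma the contradiction hypothesis forces $T_n(s)/\mathcal{D}_2(G_n,G_*)\to 0$ for almost every $X$ and every $s\in[K]$, so since $T_n(s)$ is a linear combination of linearly independent terms every coefficient $\omega_j/\mathcal{D}_2(G_n,G_*)$ must vanish, contradicting the previous step. For the global part, if the bound failed there would exist $G'\in\mathcal{O}_k(\Theta)$ with $\mathcal{D}_2(G',G_*)>\varepsilon$ yet $\bbE_X[V(\widetilde{g}_{G'}(\cdot|X),\widetilde{g}_{G_*}(\cdot|X))]=0$, hence $\widetilde{g}_{G'}(Y=s|X)=\widetilde{g}_{G_*}(Y=s|X)$ for almost every $(X,Y)$; Proposition~\ref{prop:modified_identifiability} gives $G'\equiv G_*$, so $\mathcal{D}_2(G',G_*)=0$, a contradiction. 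Combining the local and global bounds yields $\bbE_X[V]\gtrsim\mathcal{D}_2(G,G_*)$, hence the Hellinger bound, and then Proposition~\ref{prop:modified_density_estimation} delivers the rate for $\widetilde{G}_n$.

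The main obstacle is the linear-independence bookkeeping in the Taylor step: one must verify that the full family of partial derivatives of $u$ up to order two, over all $j\in[k_*]$ and all labels $s\in[K]$, is linearly independent as a collection of functions of $(X,Y)$. This reduces — via the independence of the multinomial-logistic expert atoms $\{\exp(a^*_{is}+(b^*_{is})^{\top}X)\}$ already exploited for the standard gate — to the assumed linear independence of $\{X^p[M(X)]^{q}:0\le|p|+|q|\le 2\}$, which is exactly the content of Definition~\ref{def:modified_function} and is precisely what rules out any relation of the PDE form~\eqref{eq:PDE_repeat}. Once this independence is established, the coefficient analysis is identical to Regime~1, so over-specified parameters are estimated at rate $\widetilde{\mathcal{O}}(n^{-1/4})$ and exact-specified parameters at rate $\widetilde{\mathcal{O}}(n^{-1/2})$, uniformly in the values of the expert parameters $b^*_i$.
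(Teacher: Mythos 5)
Your proposal is correct and follows essentially the same route as the paper's proof in Appendix~\ref{appendix:modified_softmax}: multiply the density discrepancy by the modified softmax denominator, Taylor expand to first or second order according to the cardinality of each Voronoi cell, rule out simultaneous vanishing of the normalized coefficients, apply Fatou's lemma, and invoke the linear independence of $\{X^p[M(X)]^q:0\le|p|+|q|\le 2\}$ from Definition~\ref{def:modified_function} (together with the expert-atom independence used for the standard gate) to force all coefficients to zero, with the global part handled via Proposition~\ref{prop:modified_identifiability}. The only difference is cosmetic: the paper spells out the final independence check separately for the two regimes of $b^*_j$, which your "bookkeeping" paragraph subsumes.
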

Proof of Theorem~\ref{theorem:modified_softmax} is deferred to Appendix~\ref{appendix:modified_softmax}. This theorem reveals that the MLE $\widetilde{G}_n$ converges to the true mixing measure $G_*$ under the Voronoi loss function $\mathcal{D}_2$ at a rate of order $\widetilde{\mathcal{O}}(n^{-1/2})$. From the definition of $\mathcal{D}_2$ in equation~\eqref{eq:Voronoi_loss_form}, we deduce that the exact-specified parameters $\beta^*_{1j},a^*_j,b^*_j$, which are fitted by exactly one component, enjoy the same estimation rate of $\widetilde{\mathcal{O}}(n^{-1/2})$, whereas those for their over-specified counterparts, i.e. those fitted by at least two components, are of order $\widetilde{\mathcal{O}}(n^{-1/4})$. It is worth emphasizing that these rates remain stable regardless of any values of the expert parameters $b^*_i$. This highlights the benefits of modified softmax gating functions over the standard softmax gating in the multinomial logistic MoE model.

In summary, replacing the standard softmax gating function with its modified versions in the multinomial logistic MoE model remains the identifiability of that model and the parametric density estimation rate as well as ensures the stability of parameter estimation rates under any settings of the expert parameters (see Table~\ref{table:parameter_rates}). As a consequence, we can conclude that the modified softmax gating functions outperform their standard counterpart in the parameter estimation problem under the multinomial logistic MoE model. 

\vspace{0.5em}

\section{Discussion}
\vspace{0.5em}

\label{sec:conclusion}
In this paper, we investigate the convergence behavior of maximum likelihood estimation in the softmax gating multinomial logistic mixture of experts under the over-specified settings. For that purpose, we design a novel generic Voronoi loss function, and then discover that the rates for estimating the true density and exact-specified true parameters are all parametric on the sample size, while those for over-specified true parameters are slightly slower. However, when part of the expert parameters vanishes, the estimation rates for the over-specified true parameters experience a surprisingly slow rates due to an intrinsic interaction between the parameters of gating and expert functions. To tackle this issue, we propose a novel class of modified softmax gating functions that not only keep the identifiability of the multinomial logistic mixture of experts and the parametric density estimation rate unchanged, but also stabilize the parameter estimation rates irrespective of the collapse of expert parameters. This highlights the advantages of using modified softmax gating functions over the standard softmax gating function in the parameter estimation problem of the multinomial logistic MoE model.


\textbf{Technical novelty.} Compared to previous works, our paper is technically novel in terms of the following aspects:

\textbf{(1) Covariate-dependent gate:} We consider softmax gate whose value depends on the covariate $X$, while \cite{ho2022gaussian} use covariate-free weights, which are significantly simpler. Thus, in Step 1 of our proofs of Theorems~\ref{theorem:standard_one} and \ref{theorem:modified_softmax}, if we apply Taylor expansions directly to the density discrepancy $g_{G_n}(Y|X)-g_{G_*}(Y|X)$ as in \cite{ho2022gaussian}, we cannot represent that discrepancy as a combination of elements from some linearly independent set, which is a key step. Therefore, we have to take the product of the softmax's denominator and the density discrepancy, denoted by $T_n(s):=\Big[\sum_{j=1}^{k_*}\exp((\beta^*_{1j})^{\top}X+\beta^*_{0j})\Big]\cdot[g_{G_n}(Y=s|X)-g_{G_*}(Y=s|X)]$. Then, we decompose $T_n(s)$ such that it includes two functions $\exp((\beta_{1i}^n)^{\top}X)f(Y=s|X;a_i^n,b_i^n)$ and $\exp((\beta_{1i}^n)^{\top}X)g_{G_n}(Y=s|X)$ (see equation~\eqref{eq:Tn_decomposition}). Then, we need to apply two Taylor expansions to those functions rather than only one as in \cite{ho2022gaussian}. 

\textbf{(2) Minimax lower bound:} A key step to derive polynomial rates for estimating parameters (even in extreme cases) in \cite{nguyen2023demystifying,nguyen2024statistical} is to establish Total Variation lower bounds $\mathbb{E}_{X}[V(g_{G}(\cdot|X),g_{G_*}(\cdot|X))]\gtrsim\mathcal{D}(G,G_*)$,
where $\mathcal{D}$ is a loss function. However, we show in Proposition~\ref{prop:TV_prop} that such bound does not hold true under the Regime 2 due to the PDEs in equation~\eqref{eq:PDE}. Thus, we provide a minimax lower bound in Theorem~\ref{theorem:standard_two} to show that the parameter estimation rates are slower than any polynomial rates, and therefore, could be as slow as $\mathcal{O}(1/\log(n))$. As far as we are concerned, this phenomenon has never been observed in previous works. 

\textbf{(3) Non-trivial solution for rate improvement:} \cite{ho2022gaussian,nguyen2023demystifying,nguyen2024statistical} characterized the slowest rates of the same form $\mathcal{O}(n^{-1/2\bar{r}})$, for some $\bar{r}\geq 4$, without proposing any solutions to improve them. By contrast, although the slowest rate in our work is even slower than any polynomial rates, we provide a non-trivial modified gating function in Section 4 to alleviate that issue. More importantly, since that modified gate generally helps address interactions among gating and expert parameters, it can be applied to \cite{ho2022gaussian,nguyen2023demystifying,nguyen2024statistical} (up to some changes of the conditions of function $M(\cdot)$ in Definition~\ref{def:modified_function}) for rate improvement. As shown in Table~\ref{table:parameter_rates}, the modified softmax gate enhances the estimation rates from $\mathcal{O}(n^{-1/2\bar{r}})$ to at least $\mathcal{O}(n^{-1/4})$ under any parameter settings. To the best of our knowledge, such flexible and effective solution had never been proposed in the literature.

\vspace{0.5em}
\textbf{Limitation and future directions.} In this paper, we consider a well-specified setting when the data are assumed to be sampled from a softmax gating multinomial logistic MoE. However, in practice, the data are not necessarily generated from that model, which we refer to as the misspecified setting. Under the misspecified setting, the MLE converges to a mixing measure $\widetilde{G} \in \argmin_{G \in \mathcal{O}_{k}(\Theta)} \text{KL}(P(Y|X) || g_{G}(Y|X))$, where $P(Y|X)$ is the true conditional distribution of $Y$ given $X$ and KL stands for the Kullback-Leibler divergence. As the space $\mathcal{O}_{k}(\Theta)$ is non-convex, the existence of $\widetilde{G}$ is not unique. Furthermore, the current analysis of the MLE under the misspecified setting of statistical models is mostly conducted when the function space is convex \cite{Vandegeer-2000}. Thus, it is necessary to develop new technical tools to establish the convergence rate of the MLE under the non-convex misspecified setting. Since this is beyond the scope of our work, we leave it for future development. Another potential direction is to comprehend the effects of the temperature parameter \cite{nie2022evomoe,nguyen2024temperature}, which controls the softmax weight distribution and the
sparsity of the MoE, on the convergence of parameter estimation under the softmax gating multinomial logistic MoE. This direction has remained unexplored in the literature.

\section*{Acknowledgements}
NH acknowledges support from the NSF IFML 2019844 and the NSF AI Institute for Foundations of Machine Learning. 
\section*{Impact Statement}

This paper presents work whose goal is to advance the field of Machine Learning. 
Given the theoretical nature of the paper, we believe that there are no potential societal consequences of our work.

\bibliography{icml_references}
\bibliographystyle{icml2024}

\clearpage
\onecolumn
\appendix

\centering
\textbf{\Large{Supplementary Material for 
``A General Theory for Softmax Gating \\ \vspace{0.1cm} Multinomial Logistic Mixture of Experts''}}

\justifying
\setlength{\parindent}{0pt}

In this supplementary material, we first present rigorous proofs for results regarding the convergence rates of parameter estimation under the (modified) softmax gating multinomial logistic mixture of experts in Appendix~\ref{appendix:parameter_estimation}, while those for density estimation are then provided in Appendix~\ref{appendix:general_density_estimation}. Next, we theoretically verify the identifiability of the proposed models in Appendix~\ref{appendix:general_identifiability}.
Lastly, we conduct several experiments in Appendix~\ref{appendix:simulation} to empirically validate our theoretical results.

\section{Proof for Convergence Rates of Parameter Estimation}
\label{appendix:parameter_estimation}
In this appendix, we provide proofs for Theorem~\ref{theorem:standard_one} and Theorem~\ref{theorem:standard_two} in Appendix~\ref{appendix:standard_one} and Appendix~\ref{appendix:standard_two}, respectively. Meanwhile, the proof of Theorem~\ref{theorem:modified_softmax} can be found in Appendix~\ref{appendix:modified_softmax}.
\subsection{Proof of Theorem~\ref{theorem:standard_one}}
\label{appendix:standard_one}

To reach the conclusion in Theorem~\ref{theorem:standard_one}, we need to show that
\begin{align}
\label{eq:original_inequality_one}
\inf_{G\in\mathcal{O}_{k}(\Theta)}\bbE_X[V(g_{G}(\cdot|X),g_{G_*}(\cdot|X))]/\mathcal{D}_{2}(G,G_*)>0.
\end{align}
\textbf{Local Structure}: Firstly, we prove by contradiction the local structure of inequality~\eqref{eq:original_inequality_one}, which is given by
\begin{align}
    \label{eq:local_structure_one}
    \lim_{\varepsilon\to 0}\inf_{\substack{G\in\mathcal{O}_k(\Theta),\\ \mathcal{D}_2(G,G_*)\leq\varepsilon}}\bbE_X[V(g_{G}(\cdot|X),g_{G_*}(\cdot|X))]/\mathcal{D}_{2}(G,G_*)>0.
\end{align}
Assume that this claim does not hold true, then there exists a sequence of mixing measures $G_n:=\sum_{i=1}^{k_n}\exp(\bzin)\delta_{(\boin,a^n_{i},b^n_{i})}\in\mathcal{O}_k(\Theta)$ such that both $\bbE_X[V(g_{G_n}(\cdot|X),g_{G_*}(\cdot|X))]/\mathcal{D}_2(G_n,G_*)$ and $\mathcal{D}_2(G_n,G_*)$ approach zero as $n$ tends to infinity. In addition, we define the set of Voronoi cells generated by the support of $G_*$ for this sequence as follows:
\begin{align*}
    \mathcal{C}^n_j:=\{i\in[k_n]:\|\theta^n_{i}-\theta^*_{j}\|\leq\|\theta^n_{i}-\theta^*_{\ell}\|,\quad \forall\ell\neq j\},
\end{align*}
where $\theta^n_i:=(\boin,a^n_{i},b^n_{i})$ and $\theta^*_j:=(\boj,a^*_{i},b^*_{i})$ for any $j\in[k_*]$. Recall that $\beta^*_{1k_*}=\zerod$ is known, thus we set $\beta^n_{1i}=\zerod$ for any $i\in\mathcal{C}_{k_*}$. Similarly, note that $(a^*_{jK},b^*_{jK})=(0,\zerod)$ are known for any $j\in[k_*]$, we also let $(a^n_{iK},b^n_{iK})=(0,\zerod)$ for any $i\in k_n$. As our proof argument is asymptotic, we can assume without loss of generality (WLOG) that the above Voronoi cells are independent of $n$, that is, $\mathcal{C}_j=\mathcal{C}^n_j$ for any $j\in[k_*]$. Next, since $\mathcal{D}_2(G_n,G_*)\to 0$ as $n\to\infty$, it follows from the formulation of $\mathcal{D}_2$ in equation~\eqref{eq:Voronoi_loss_form} that $\beta^n_{1i}\to\boj$, $(a^n_{i\ell},b^n_{i\ell})\to(a^*_{j\ell},b^*_{j\ell})$ and $\sum_{i\in\mathcal{C}^n_j}\exp(\beta^n_{0i})\to\exp(\beta^*_{0j})$ for any $\ell\in[K]$, $i\in\mathcal{C}_j$ and $j\in[k_*]$ when $n\to\infty$. Subsequently, we divide our remaining arguments into three steps as below.

\textbf{Step 1.} In this step, we use Taylor expansions to decompose the following quantity:
\begin{align*}
    T_n(s)&:=\Big[\sum_{j=1}^{k_*}\exp((\boj)^{\top}X+\bzj)\Big]\cdot\Big[g_{G_n}(Y=s|X)-g_{G_*}(Y=s|X)\Big].
\end{align*}
Denote $u(Y=s|X;\beta_{1i},a_i,b_i):=\exp(\beta_{1i}^{\top}X)\cdot f(Y|X;a_{is},b_{is})$ and $v(Y=s|X;\beta_{1i})=\exp(\beta_{1i}^{\top}X)g_{G_n}(Y=s|X)$ for any $s
\in[K]$, we have
\begin{align}\label{eq:Tn_decomposition}
    T_n(s)&=\sum_{j=1}^{k_*}\sum_{i\in\mathcal{C}_j}\exp(\bzin)\Big[u(Y=s|X;\boin,\ain,\bin)-u(Y=s|X;\boj,\aj,\bj)\Big]\nonumber\\
    &-\sum_{j=1}^{k_*}\sum_{i\in\mathcal{C}_j}\exp(\bzin)\Big[v(Y=s|X;\boin)-v(Y=s|X;\boj)\Big]\nonumber\\
    &+\sum_{j=1}^{k_*}\Big(\sum_{i\in\mathcal{C}_j}\exp(\bzin)-\exp(\bzj)\Big)\Big[u(Y=s|X;\boj,\aj,\bj)-v(Y=s|X;\boj)\Big]\nonumber\\
    &:=A_n-B_n+E_n.
\end{align}
Given the above formulations of $A_n$ and $B_n$, we continue to decompose them into two smaller terms based on the cardinality of Voronoi cells $\mathcal{C}_j$. In particular, 
\begin{align*}
    A_n&=\sum_{j:|\mathcal{C}_j|=1}\sum_{i\in\mathcal{C}_j}\exp(\bzin)\Big[u(Y=s|X;\boin,\ain,\bin)-u(Y=s|X;\boj,\aj,\bj)\Big]\\
    &+\sum_{j:|\mathcal{C}_j|>1}\sum_{i\in\mathcal{C}_j}\exp(\bzin)\Big[u(Y=s|X;\boin,\ain,\bin)-u(Y=s|X;\boj,\aj,\bj)\Big]\\
    &=A_{n,1}+A_{n,2}.
\end{align*}
Next, let us denote $h_{\ell}(X,a_{\ell},b_{\ell}):=a_{\ell}+b_{\ell}^{\top}X$ for any $\ell\in[K]$, then 
\begin{align*}
    f(Y=s|X;a_{i},b_{i})=\dfrac{\exp(a_{is}+b_{is}^{\top}X)}{\sum_{\ell=1}^K\exp(a_{i\ell}+b_{i\ell}^{\top}X)}=\dfrac{\exp(h_s(X,a_{is},b_{is}))}{\sum_{\ell=1}^{K}\exp(h_{\ell}(X,a_{i\ell},b_{i\ell}))}.
\end{align*}
By means of the first-order Taylor expansion, $A_{n,1}$ can be represented as
\begin{align*}
    A_{n,1}&=\sum_{j:|\mathcal{C}_j|=1}\sum_{i\in\mathcal{C}_j}\exp(\bzin)\sum_{|\alpha|=1}\frac{1}{\alpha!}(\dboijn)\prod_{\ell=1}^{K-1}(\daijnl)^{\alpha_{2\ell}}(\dbijnl)^{\alpha_{3\ell}}\frac{\partial^{|\alpha|}u(Y=s|X;\boj,\aj,\bj)}{\partial\beta_{1j}^{\alpha_1}\prod_{\ell=1}^{K-1}\partial a^{\alpha_{2\ell}}_{j\ell}\partial b^{\alpha_{3\ell}}_{j\ell}}+R_1(X,Y).
\end{align*}
Here, $\alpha:=(\alpha_1,\alpha_{21},\ldots,\alpha_{2(K-1)},\alpha_{31},\ldots,\alpha_{3(K-1)})$, where $\alpha_1,\alpha_{3\ell}\in\mathbb{N}^d$ and $\alpha_{2\ell}\in\mathbb{N}$ for any $\ell\in[K-1]$. Additionally, $R_1(X,Y)$ is a Taylor remainder such that $R_1(X,Y)/\mathcal{D}_2(G_n,G_*)\to0$ as $n\to\infty$. From the formulation of $u$, we have 
\begin{align*}
    A_{n,1}&=\sum_{j:|\mathcal{C}_j|=1}\sum_{i\in\mathcal{C}_j}\exp(\bzin)\sum_{|\alpha|=1}\frac{1}{\alpha!}(\dboijn)\prod_{\ell=1}^{K-1}(\daijnl)^{\alpha_{2\ell}}(\dbijnl)^{\alpha_{3\ell}}\\
    &\times X^{\alpha_1+\sum_{\ell=1}^{K-1}\alpha_{3\ell}}\exp((\boj)^{\top}X)\frac{\partial^{\sum_{\ell=1}^{K-1}(\alpha_{2\ell}+|\alpha_{3\ell}|)} f}{\partial h_1^{\alpha_{21}+|\alpha_{31}|}\ldots\partial h_{K-1}^{\alpha_{2(K-1)}+|\alpha_{3(K-1)}|}}(Y=s|X;a^*_j,b^*_j) + R_1(X,Y).
\end{align*}
Let $q_1=\alpha_1+\sum_{\ell=1}^{K-1}\alpha_{3\ell}\in\mathbb{N}^d$, $q_2=(q_{2\ell})_{\ell=1}^{K-1}:=(\alpha_{2\ell}+|\alpha_{3\ell}|)_{\ell=1}^{K-1}\in\mathbb{N}^{K-1}$ and 
\begin{align}
    \label{eq:set_I}
    \mathcal{I}_{q_1,q_2}:=\left\{\alpha=(\alpha_1,\alpha_{21},\ldots,\alpha_{2(K-1)},\alpha_{31},\ldots,\alpha_{3(K-1)}):\alpha_1+\sum_{\ell=1}^{K-1}\alpha_{3\ell}=q_{1}, \ (\alpha_{2\ell}+|\alpha_{3\ell}|)_{\ell=1}^{K-1}=q_{2}\right\},
\end{align}
we can rewrite $A_{n,1}$ as 
\begin{align*}
    A_{n,1}&=\sum_{j:|\mathcal{C}_j|=1}\sum_{|q_1|+|q_2|=1}^{2}\sum_{i\in\mathcal{C}_j}\sum_{\alpha\in\mathcal{I}_{q_1,q_2}}\frac{1}{\alpha!}(\dboijn)^{\alpha_1}\prod_{\ell=1}^{K-1}(\daijnl)^{\alpha_{2\ell}}(\dbijnl)^{\alpha_{3\ell}}\\
    &\times X^{q_1}\exp((\boj)^{\top}X)\frac{\partial^{|q_2|}f}{\partial h_1^{q_{21}}\ldots\partial h_{K-1}^{q_{2(K-1)}}}(Y=s|X;a^*_j,b^*_j)+R_1(X,Y).
\end{align*}
Similarly, by means of second order Taylor expansion, $A_{n,2}$ is expressed as follows:
\begin{align*}
    A_{n,2}&=\sum_{j:|\mathcal{C}_j|>1}\sum_{|q_1|+|q_2|=1}^{4}\sum_{i\in\mathcal{C}_j}\sum_{\alpha\in\mathcal{I}_{q_1,q_2}}\frac{1}{\alpha!}(\dboijn)^{\alpha_1}\prod_{\ell=1}^{K-1}(\daijnl)^{\alpha_{2\ell}}(\dbijnl)^{\alpha_{3\ell}}\\
    &\times X^{q_1}\exp((\boj)^{\top}X)\frac{\partial^{|q_2|}f}{\partial h_1^{q_{21}}\ldots\partial h_{K-1}^{q_{2(K-1)}}}(Y=s|X;a^*_j,b^*_j)+R_2(X,Y),
\end{align*}
where $R_2(X,Y)$ is also a Taylor remainder such that $R_2(X,Y)/\mathcal{D}_2(G_n,G_*)\to0$ as $n\to\infty$. By employing the same arguments for decomposing $A_n$ to $B_n$, we get
\begin{align*}
    B_{n}&=\sum_{j:|\mathcal{C}_j|=1}\sum_{|\gamma|=1}\sum_{i\in\mathcal{C}_j}\frac{\exp(\bzin)}{\gamma!}(\dboijn)^{\gamma}\times X^{\gamma}\exp((\boj)^{\top}X)g_{G_n}(Y=s|X)+R_{3}(X,Y)\\
    &+\sum_{j:|\mathcal{C}_j|>1}\sum_{|\gamma|=1}^{2}\sum_{i\in\mathcal{C}_j}\frac{\exp(\bzin)}{\gamma!}(\dboijn)^{\gamma}\times X^{\gamma}\exp((\boj)^{\top}X)g_{G_n}(Y=s|X)+R_{4}(X,Y).
\end{align*}
Putting the above results together, we obtain that
\begin{align*}
    T_{n}(s)&=\sum_{j=1}^{k_*}\sum_{|q_1|+|q_2|=0}^{2+2\cdot\mathbf{1}_{\{|\mathcal{C}_j|>1\}}}U^n_{q_1,q_2}(j)\times X^{q_1}\exp((\boj)^{\top}X)\frac{\partial^{|q_2|}f}{\partial h_1^{q_{21}}\ldots\partial h_{K-1}^{q_{2(K-1)}}}(Y=s|X;a^*_j,b^*_j)\\
    &+\sum_{j=1}^{k_*}\sum_{|\gamma|=0}^{1+\mathbf{1}_{\{|\mathcal{C}_j|>1\}}}W^n_{\gamma}(j)\times X^{\gamma}\exp((\boj)^{\top}X)g_{G_n}(Y=s|X) + R(X,Y),
\end{align*}
where $R(X,Y)$ is the sum of Taylor remainders such that $R(X,Y)/\mathcal{D}_2(X,Y)\to0$ as $n\to\infty$ and
\begin{align*}
    U^n_{q_1,q_2}(j)&=\begin{cases}
        \sum_{i\in\mathcal{C}_j}\sum_{\alpha\in\mathcal{I}_{q_1,q_2}}\frac{\exp(\bzin)}{\alpha!}(\dboijn)^{\alpha_1}\prod_{\ell=1}^{K-1}(\daijnl)^{\alpha_{2\ell}}(\dbijnl)^{\alpha_{3\ell}}, \quad (q_1,q_2)\neq(\zerod,\mathbf{0}_{K-1}),\\
        \sum_{i\in\mathcal{C}_j}\exp(\bzin)-\exp(\bzj), \hspace{5.7cm} (q_1,q_2)=(\zerod,\mathbf{0}_{K-1)},
    \end{cases}
\end{align*}
and 
\begin{align*}
    W^n_{\gamma}(j)&=\begin{cases}
        -\sum_{i\in\mathcal{C}_j}\frac{\exp(\bzin)}{\gamma!}(\dboijn)^{\gamma}, \hspace{1cm} |\gamma|\neq \zerod,\\
        -\sum_{i\in\mathcal{C}_j}\exp(\bzin)+\exp(\bzj), \hspace{0.5cm} |\gamma|=\zerod,
    \end{cases}
\end{align*}
for any $j\in[k_*]$.

\textbf{Step 2.} In this step, we will prove by contradiction that at least one among $U^n_{q_1,q_2}(j)/\mathcal{D}_2(G_n,G_*)$ and $W^n_{\gamma}(j)/\mathcal{D}_2(G_n,G_*)$ does not vanish as $n$ tends to infinity. Assume that all of them approach zero, then by taking the summation of $|U^n_{q_1,q_2}(j)|/\mathcal{D}_2(G_n,G_*)$ for $j\in[k_*]:|\mathcal{C}_j|=1$, $q_1\in\{e_1,e_2,\ldots,e_d\}$ and $q_2=\mathbf{0}_{K-1}$, where $e_i:=(0,\ldots,0,\underbrace{1}_{\textit{i-th}},0,\ldots,0)\in\mathbb{R}^d$, we achieve that
\begin{align}
    \label{eq:beta1_vanish_one}
    \frac{1}{\mathcal{D}_2(G_n,G_*)}\cdot\sum_{j:|\mathcal{C}_j|=1}\sum_{i\in\mathcal{C}_j}\exp(\bzin)\cdot\|\dboijn\|_1\to 0.
\end{align}
Similarly, for $q_1=\zerod$ and $q_{2}\in\{e'_1,e'_2,\ldots,e'_{K-1}\}$, where $e'_{\ell}:=(0,\ldots,0,\underbrace{1}_{\ell\textit{-th}},0,\ldots,0)\in\mathbb{R}^{K-1}$, we have
\begin{align}
    \label{eq:a_vanish_one}
    \frac{1}{\mathcal{D}_2(G_n,G_*)}\cdot\sum_{j:|\mathcal{C}_j|=1}\sum_{i\in\mathcal{C}_j}\sum_{\ell=1}^{K-1}\exp(\bzin)\cdot|\daijnl|\to 0.
\end{align}
On the other hand, for $q_1\in\{e_1,e_2,\ldots,e_d\}$ and $q_{2}\in\{e'_1,e'_2,\ldots,e'_{K-1}\}$, we obtain that
\begin{align}
    \label{eq:b_vanish_one}
    \frac{1}{\mathcal{D}_2(G_n,G_*)}\cdot\sum_{j:|\mathcal{C}_j|=1}\sum_{i\in\mathcal{C}_j}\sum_{\ell=1}^{K-1}\exp(\bzin)\cdot\|\dbijnl\|_1\to 0.
\end{align}
Combine the limits in equations~\eqref{eq:beta1_vanish_one}, \eqref{eq:a_vanish_one} and \eqref{eq:b_vanish_one}, we get
\begin{align*}
    \frac{1}{\mathcal{D}_2(G_n,G_*)}\cdot\sum_{j:|\mathcal{C}_j|=1}\sum_{i\in\mathcal{C}_j}\exp(\bzin)\cdot\Big[\|\dboijn\|_1+\sum_{\ell=1}^{K-1}(|\daijnl|+\|\dbijnl\|_1)\Big]\to0.
\end{align*}
Due to the topological equivalence between $1$-norm and $2$-norm, the above limit is equivalent to
\begin{align}
    \label{eq:vanish_one}
    \frac{1}{\mathcal{D}_2(G_n,G_*)}\cdot\sum_{j:|\mathcal{C}_j|=1}\sum_{i\in\mathcal{C}_j}\exp(\bzin)\cdot\Big[\|\dboijn\|+\sum_{\ell=1}^{K-1}(|\daijnl|+\|\dbijnl\|)\Big]\to0.
\end{align}
Next, we consider the summation of $|U^n_{q_1,q_2}(j)|/\mathcal{D}_2(G_n,G_*)$ for $j\in[k_*]:|\mathcal{C}_j|>1$, $q_1\in\{2e_1,2e_2,\ldots,2e_d\}$ and $q_2=\mathbf{0}_{K-1}$, which leads to 
\begin{align}
    \label{eq:beta1_vanish_two}
    \frac{1}{\mathcal{D}_2(G_n,G_*)}\cdot\sum_{j:|\mathcal{C}_j|>1}\sum_{i\in\mathcal{C}_j}\exp(\bzin)\cdot\|\dboijn\|^2\to 0.
\end{align}
For $q_1=\zerod$ and $q_{2}\in\{2e'_1,2e'_2,\ldots,2e'_{K-1}\}$, we have
\begin{align}
    \label{eq:a_vanish_two}
     \frac{1}{\mathcal{D}_2(G_n,G_*)}\cdot\sum_{j:|\mathcal{C}_j|>1}\sum_{i\in\mathcal{C}_j}\exp(\bzin)\cdot\sum_{\ell=1}^{K-1}|\daijnl|^2\to0.
\end{align}
Meanwhile, for $q_1\in\{2e_1,2e_2,\ldots,2e_d\}$ and $q_{2}\in\{2e'_1,2e'_2,\ldots,2e'_{K-1}\}$, we get 
\begin{align}
    \label{eq:b_vanish_two}
    \frac{1}{\mathcal{D}_2(G_n,G_*)}\cdot\sum_{j:|\mathcal{C}_j|>1}\sum_{i\in\mathcal{C}_j}\exp(\bzin)\cdot\sum_{\ell=1}^{K-1}\|\dbijnl\|^2\to0.
\end{align}
It follows from equations~\eqref{eq:beta1_vanish_two}, \eqref{eq:a_vanish_two} and \eqref{eq:b_vanish_two} that
\begin{align}
    \label{eq:vanish_two}
    \frac{1}{\mathcal{D}_2(G_n,G_*)}\cdot\sum_{j:|\mathcal{C}_j|>1}\sum_{i\in\mathcal{C}_j}\exp(\bzin)\cdot\Big[\|\dboijn\|^2+\sum_{\ell=1}^{K-1}(|\daijnl|^2+\|\dbijnl\|^2)\Big]\to0.
\end{align}
Note that
\begin{align}
    \label{eq:zero_vanish}
    \sum_{j=1}^{k_*}\frac{|U^n_{\zerod,\mathbf{0}_{K-1}}(j)|}{\mathcal{D}_2(G_n,G_*)}=\frac{1}{\mathcal{D}_2(G_n,G_*)}\cdot\sum_{j=1}^{k_*}\left|\sum_{i\in\mathcal{C}_j}\exp(\bzin)-\exp(\bzj)\right|\to0.
\end{align}
By taking the sum of limits in equations~\eqref{eq:vanish_one}, \eqref{eq:vanish_two} and \eqref{eq:zero_vanish}, we deduce that $1=\mathcal{D}_2(G_n,G_*)/\mathcal{D}_2(G_n,G_*)\to 0$ as $n\to\infty$, which is a contradiction. Thus, at least one among the limits of $U^n_{q_1,q_2}(j)/\mathcal{D}_2(G_n,G_*)$ and $W^n_{\gamma}(j)/\mathcal{D}_2(G_n,G_*)$ is non-zero.

\textbf{Step 3.} Finally, we will leverage Fatou's lemma to point out a contradiction to the result in Step 2. 

Let us denote by $m_n$ the maximum of the absolute values of $U^n_{q_1,q_2}(j)/\mathcal{D}_2(G_n,G_*)$ and $W^n_{\gamma}(j)/\mathcal{D}_2(G_n,G_*)$ for $j\in[k_*]$, $0\leq|q_1|+|q_2|\leq 2+2\cdot\mathbf{1}_{\{|\mathcal{C}_j|>1\}}$ and $0\leq|\gamma|\leq 1+\mathbf{1}_{\{|\mathcal{C}_j|>1\}}$. Then, it follows from the Fatou's lemma that
\begin{align*}
    0=\lim_{n\to\infty}\frac{\bbE_X[2V(g_{G_n}(\cdot|X),g_{G_*}(\cdot|X))]}{m_n\mathcal{D}_2(G_n,G_*)}\geq \int\sum_{s=1}^{K}\liminf_{n\to\infty}\frac{|g_{G_n}(Y=s|X)-g_{G_*}(Y=s|X)|}{m_n\mathcal{D}_2(G_n,G_*)}~\dint X\geq 0.
\end{align*}
As a result, we get that $|g_{G_n}(Y=s|X)-g_{G_*}(Y=s|X)|/[m_n\mathcal{D}_2(G_n,G_*)]$ converges to zero, which implies that $T_n(s)/[m_n\mathcal{D}_2(G_n,G_*)]\to0$ as $n\to\infty$ for any $s\in[K]$ and almost surely $X$. Let $U^n_{q_1,q_2}(j)/[m_n\mathcal{D}_2(G_n,G_*)]\to \tau_{q_1,q_2}(j)$ and $W^n_{\gamma}(j)\to\eta_{\gamma}(j)$ as $n$ approaches infinity, then the previous result indicates that
\begin{align}
    \label{eq:linear_independent_equation}
    \sum_{j=1}^{k_*}\sum_{|q_1|+|q_2|=0}^{2+2\cdot\mathbf{1}_{\{|\mathcal{C}_j|>1\}}}\tau_{q_1,q_2}(j)\times X^{q_1}\exp((\boj)^{\top}X)\frac{\partial^{|q_2|}f}{\partial h_1^{q_{21}}\ldots\partial h_{K-1}^{q_{2(K-1)}}}(Y=s|X;a^*_{j},b^*_{j})\nonumber\\
    +\sum_{j=1}^{k_*}\sum_{|\gamma|=0}^{1+\mathbf{1}_{\{|\mathcal{C}_j|>1\}}}\eta_{\gamma}(j)\times X^{\gamma}\exp((\boj)^{\top}X)g_{G_*}(Y=s|X)=0.
\end{align}
for any $s\in[K]$ and almost surely $X$. Here, at least one among $\tau_{q_1,q_2}(j)$ and $\eta_{\gamma}(j)$ is different from zero. Assume the set
\begin{align}
    \mathcal{F}:&=\Bigg\{X^{q_1}\exp((\boj)^{\top}X)\frac{\partial^{|q_2|}f}{\partial h_1^{q_{21}}\ldots\partial h_{K-1}^{q_{2(K-1)}}}(Y=s|X;a^*_j,b^*_j):j\in[k_*],0\leq |q_1|+|q_2|\leq 2+2\cdot\mathbf{1}_{\{|\mathcal{C}_j|>1\}}\Bigg\}\nonumber\\
    \label{eq:F_set}
    &\cup \Big\{X^{\gamma}\exp((\boj)^{\top}X)g_{G_*}(Y=s|X):j\in[k_*], \ 0\leq|\gamma|\leq 1+\mathbf{1}_{\{|\mathcal{C}_j|>1\}}\Big\}
\end{align}
is linearly independent, we deduce that $\tau_{q_1,q_2}(j)=\eta_{\gamma}(j)=0$ for any $j\in[k_*]$, $0\leq|q_1|+|q_2|\leq 2+2\cdot\mathbf{1}_{\{|\mathcal{C}_j|>1\}}$ and $0\leq|\gamma|\leq 1+\mathbf{1}_{\{|\mathcal{C}_j|>1\}}$, which is a contradiction.

Thus, it suffices to show that $\mathcal{F}$ is a linearly independent set to attain the local inequality in equation~\eqref{eq:local_structure_one}. In particular, assume that equation~\eqref{eq:linear_independent_equation} holds true for any $s\in[K]$ and almost surely $X$, we will show that $\tau_{q_1,q_2}(j)=\eta_{\gamma}(j)=0$ for any $j\in[k_*]$, $0\leq|q_1|+|q_2|\leq 2+2\cdot\mathbf{1}_{\{|\mathcal{C}_j|>1\}}$ and $0\leq|\gamma|\leq 1+\mathbf{1}_{\{|\mathcal{C}_j|>1\}}$. Firstly, we rewrite this equation as follows:
\begin{align*}
    \sum_{j=1}^{k_*}\sum_{|\omega|=0}^{1+\mathbf{1}_{\{|\mathcal{C}_j|>1\}}}\Bigg[\sum_{|q_2|=0}^{2+2\cdot\mathbf{1}_{\{|\mathcal{C}_j|>1\}}-|\omega|}\tau_{q_1,q_2}(j)\frac{\partial^{|q_2|}f}{\partial h_1^{q_{21}}\ldots\partial h_{K-1}^{q_{2(K-1)}}}(Y=s|X;a^*_j,b^*_j)&+\eta_{\omega}(j)g_{G_*}(Y=s|X)\Bigg]\\
    &\times X^{\omega}\exp((\boj)^{\top}X) = 0,
\end{align*}
for any $s\in[K]$ and almost surely $X$. Note that $\beta^*_{11},\beta^*_{12},\ldots,\beta^*_{1k_*}$ are $k_*$ different values, therefore, it can be seen that $\{X^{\omega}\exp((\boj)^{\top}X):j\in[k_*],\ 0\leq|\omega|\leq 1 +\mathbf{1}_{\{|\mathcal{C}_j|>1\}}\}$ is a linearly independent set. As a result, we obtain that
\begin{align*}
    \sum_{|q_2|=0}^{2+2\cdot\mathbf{1}_{\{|\mathcal{C}_j|>1\}}-|\omega|}\tau_{q_1,q_2}(j)\frac{\partial^{|q_2|}f}{\partial h_1^{q_{21}}\ldots\partial h_{K-1}^{q_{2(K-1)}}}(Y=s|X;a^*_j,b^*_j)+\eta_{\omega}(j)g_{G_*}(Y=s|X)=0,
\end{align*}
for any $j\in[k_*]$, $0\leq|\omega|\leq 1+\mathbf{1}_{\{|\mathcal{C}_j|>1\}}$, $s\in[K]$ and almost surely $X$. Similarly, the following set is linearly independent:
\begin{align*}
    \left\{\dfrac{\partial^{|q_2|}f}{\partial h_1^{q_{21}}\ldots\partial h_{K-1}^{q_{2(K-1)}}}(Y=s|X;a^*_j,b^*_j), \ g_{G_*}(Y=s|X):0\leq|q_2|\leq 2+2\cdot\mathbf{1}_{\{|\mathcal{C}_j|>1\}}-|\omega|\right\},
\end{align*}
we achieve that $\tau_{q_1,q_2}(j)=\eta_{\gamma}(j)=0$ for any $j\in[k_*]$, $0\leq|q_1|+|q_2|\leq 2+2\cdot\mathbf{1}_{\{|\mathcal{C}_j|>1\}}$ and $0\leq|\gamma|\leq 1+\mathbf{1}_{\{|\mathcal{C}_j|>1\}}$, which completes the proof of local structure.

\textbf{Global Structure}: As the local inequality in equation~\eqref{eq:local_structure_one} holds true, there exists a positive constant $\varepsilon'$ that satisfies
\begin{align*}
    \inf_{\substack{G\in\mathcal{O}_k(\Theta),\\ \mathcal{D}_2(G,G_*)\leq\varepsilon'}}\bbE_X[V(g_{G}(\cdot|X),g_{G_*}(\cdot|X))]/\mathcal{D}_{2}(G,G_*)>0.
\end{align*}
Therefore, it is sufficient to demonstrate that
\begin{align}
    \label{eq:global_structure_one}
    \inf_{\substack{G\in\mathcal{O}_k(\Theta),\\ \mathcal{D}_2(G,G_*)>\varepsilon'}}\bbE_X[V(g_{G}(\cdot|X),g_{G_*}(\cdot|X))]/\mathcal{D}_{2}(G,G_*)>0.
\end{align}
Assume by contrary that this inequality does not hold, then we can find a sequence $G'_n\in\mathcal{O}_k(\Theta)$ such that $\mathcal{D}_2(G'_n,G_*)>\varepsilon'$ and $\bbE_X[V(g_{G'_n}(\cdot|X),g_{G_*}(\cdot|X))]\to0$ as $n$ tends to infinity. It is worth noting that $\Theta$ is a compact set, therefore, we can replace $G'_n$ by its subsequence which converges to some mixing measure $G'\in\mathcal{O}_k(\Theta)$. Thus, we get that $\mathcal{D}_2(G',G_*)>\varepsilon'$. On the other hand, according to Fatou's lemma,
\begin{align*}
    0=\lim_{n\to\infty}\frac{\bbE_X[2V(g_{G'_n}(\cdot|X),g_{G_*}(\cdot|X))]}{\mathcal{D}_{2}(G'_n,G_*)}\geq \int \sum_{s=1}^{K}\liminf_{n\to\infty}|g_{G'_n}(Y=s|X)-g_{G_*}(Y=s|X)|~\dint X.
\end{align*}
Consequently, it follows that 
\begin{align*}
    \int \sum_{s=1}^{K}|g_{G'}(Y=s|X)-g_{G_*}(Y=s|X)|~\dint X=0,
\end{align*}
which means that $g_{G'}(Y=s|X)=g_{G_*}(Y=s|X)$ for any $s\in[K]$ for almost surely $X$. Recall that the softmax gating multinomial logistic mixture of experts is identifiable, we deduce that $G'\equiv G$, which contradicts the results that $\mathcal{D}_2(G',G_*)>\varepsilon'$. Hence, we obtain the global inequality in equation~\eqref{eq:global_structure_one} and complete the proof.

\subsection{Proof of Theorem~\ref{theorem:standard_two}}
\label{appendix:standard_two}
In Appendix~\ref{appendix:main_proof}, we present the proof of Theorem~\ref{theorem:standard_two} given the result of Proposition~\ref{prop:TV_prop}. Then, we provide the proof of Proposition~\ref{prop:TV_prop} in Appendix~\ref{appendix:TV_prop}.
\subsubsection{Main Proof}
\label{appendix:main_proof}
Given a fixed constant $M_1>0$ and a sufficiently small $\varepsilon>0$ that we will choose later, it follows from the result of Proposition~\ref{prop:TV_prop} that there exists a mixing measure $G'_*\in\mathcal{O}_k(\Theta)$ that satisfies $\mathcal{D}_r(G'_*,G_*)=2\varepsilon$ and $\bbE_X[V(g_{G'_*}(\cdot|X),g_{G_*}(\cdot|X))\leq M_1\varepsilon$. Note that for any sequence $\overline{G}_n\in\mathcal{O}_k(\Theta)$, we have
    \begin{align*}
        2\max_{G\in\{G'_*,G_*\}}\bbE_{g_{G}}[\mathcal{D}_r(\overline{G}_n,G)]\geq \bbE_{g_{G_*}}[\mathcal{D}_r(\overline{G}_n,G_*)]+\bbE_{g_{G'_*}}[\mathcal{D}_r(\overline{G}_n,G'_*)],
    \end{align*}
    where $\bbE_{g_{G}}$ denotes the expectation taken with respect to the product measure with density $g^n_{G}$. Moreover, since the loss $\mathcal{D}_r$ satisfies the weak triangle inequality, we can find a constant $M_2>0$ such that
    \begin{align*}
        \mathcal{D}_r(\overline{G}_n,G_*)+\mathcal{D}_r(\overline{G}_n,G'_*)\geq M_2\mathcal{D}_r(G_*,G'_*)=2M_2\varepsilon.
    \end{align*}
    As a result, we observe that
    \begin{align*}
        \max_{G\in\{G_*,G'_*\}}\bbE_{g_{G}}[\mathcal{D}_r(\overline{G}_n,G)]&\geq\frac{1}{2}\Big( \bbE_{g_{G_*}}[\mathcal{D}_r(\overline{G}_n,G_*)]+\bbE_{g_{G'_*}}[\mathcal{D}_r(\overline{G}_n,G'_*)]\Big)\\
        &\geq M_2\varepsilon\cdot\inf_{f_1,f_2}\left(\bbE_{g_{G_*}}[f_1]+\bbE_{g_{G'_*}}[f_2]\right).
    \end{align*}
    Here, $f_1$ and $f_2$ in the above infimum are measurable functions in terms of $X_1,X_2,\ldots,X_n$ that satisfy $f_1+f_2=1$. By the definition of Total Variation distance, the above infimum value is equal to $1-\bbE_X[V(g^n_{G_*}(\cdot|X),g^n_{G'_*}(\cdot|X))]$. Therefore, we obtain that
    \begin{align*}
        \max_{G\in\{G_*,G'_*\}}\bbE_{p_{G}}[\mathcal{D}_r(\overline{G}_n,G)]&\geq M_2\varepsilon\Big(1-\bbE_X[V(g^n_{G_*}(\cdot|X),g^n_{G'_*}(\cdot|X))]\Big)\\
        &\geq M_2\varepsilon\Big[1-\sqrt{1-(1-M_1^2\varepsilon^2)^n}\Big].
    \end{align*}
    By choosing $\varepsilon=n^{-1/2}/M_1$, we have $M_1^2\varepsilon^2=\frac{1}{n}$, which implies that
    \begin{align*}
         \sup_{G\in\mathcal{O}_k(\Theta)\setminus\mathcal{O}_{k_*-1}(\Theta)}\bbE_{g_{G}}[\mathcal{D}_r(\overline{G}_n,G)]\geq\max_{G\in\{G_*,G'_*\}}\bbE_{g_{G}}[\mathcal{D}_r(\overline{G}_n,G)]&\gtrsim n^{-1/2},
    \end{align*}
    for any mixing measure $\overline{G}_n\in\mathcal{O}_k(\Theta)$. Hence, we reach the conclusion of Theorem~\ref{theorem:standard_two}, which says that
    \begin{align*}
        \inf_{\overline{G}_n\in\mathcal{O}_k(\Theta)}\sup_{G\in\mathcal{O}_k(\Theta)\setminus\mathcal{O}_{k_*-1}(\Theta)}\bbE_{g_{G}}[\mathcal{D}_r(\overline{G}_n,G)]\gtrsim n^{-1/2},
    \end{align*}
for any $r\geq 1$.
\subsubsection{Proof of Proposition~\ref{prop:TV_prop}}
\label{appendix:TV_prop}
From the conditions of Regime 2, we may assume without loss of generality that $b^*_{1\ell}=\zerod$ for any $\ell\in[K-1]$. To reach the conclusion of Proposition~\ref{prop:TV_prop}, we need to find a sequence $G_n\in\mathcal{O}_k(\Theta)$ such that $\mathcal{D}_r(G_n,G_*)$ and $V(p_{G_n},p_{G_*})/\mathcal{D}_r(G_n,G_*)$ both tend to zero when $n$ approaches infinity. For that purpose, we choose $G_n=\sum_{i=1}^{k_*+1}\exp(\bzin)\delta_{(\boin,a^n_{11},\ldots,a^n_{1(K-1)},b^n_{11},\ldots,b^n_{1(K-1)}})$ where 
\begin{itemize}
    \item $\exp(\beta^n_{01})=\exp(\beta^n_{02})=\frac{1}{2}\exp(\beta^*_{01})-\frac{t_n}{2}$, $\exp(\bzin)=\exp(\beta^*_{0(i-1)})$ for any $3\leq i\leq k_*+1$;
    \item $\beta^n_{11}=\beta^n_{12}=\beta^*_{11}+{c_n}\mathbf{1}_d$, $\beta^n_{1i}=\beta^*_{1(i-1)}$ for any $3\leq i\leq k_*+1$;
    \item $a^n_{1\ell}=a^n_{2\ell}=a^*_{1\ell}+{c_n}$, $a^n_{i\ell}=a^*_{(i-1)\ell}$ for any $\ell\in[K-1]$ and $3\leq i\leq k_*+1$;
    \item $b^n_{1\ell}=b^n_{2\ell}=b^*_{1\ell}$, $b^n_{i\ell}=b^*_{(i-1)\ell}$ for any $\ell\in[K-1]$ and $3\leq i\leq k_*+1$,
\end{itemize}
where $t_n,c_n>0$ will be chosen later such that ${t_n}\to0$ and $c_n=\mathcal{O}(t_n)$ as $n\to\infty$. Then, it can be verified that
\begin{align}
    \label{eq:D_r_formulation}
    \mathcal{D}_r(G_n,G_*)=(K-1)\Big[\exp(\beta^*_{01})-{t_n}\Big]c^{r}_n(1+d^{r/2})+{t_n}.
\end{align}
Now, we will show that $\bbE_X[V(g_{G_n}(\cdot|X),g_{G_*}(\cdot|X))]/\mathcal{D}_r(G_n,G_*)$ vanishes as $n\to\infty$. Let us reconsider the quantity $T_n(s)=\Big[\sum_{j=1}^{k_*}\exp((\boj)^{\top}X+\bzj)\Big]\cdot\Big[g_{G_n}(Y=s|X)-g_{G_*}(Y=s|X)\Big]$ in equation~\eqref{eq:Tn_decomposition} under the above setting of $G_n$ as follows:
\begin{align*}
    T_n(s)&=\sum_{i=1}^{2}\exp(\bzin)[u(Y=s|X;\beta^n_{1i},a^n_i,b^n_i)-u(Y=s|X;\beta^*_{11},a^*_1,b^*_1)]\\
    &-\sum_{i=1}^{2}\exp(\bzin)[v(Y=s|X;\boin)-v(Y=s|X;\beta^*_{11})]\\
    &+\left(\sum_{i=1}^{2}\exp(\bzin)-\exp(\beta^*_{01})\right)[u(Y=s|X;\beta^*_{11},a^*_1,b^*_1)-v(Y=s|X;\beta^*_{11})]\\
    &:=A_n-B_n+E_n,
\end{align*}
where $u(Y=s|X;\beta_1,a,b):=\exp(\beta_1^{\top}X)f(Y=s|X;a,b)$ and $v(Y=s|X;\beta_1)=\exp(\beta_1^{\top}X)g_{G_n}(Y=s|X)$ for any $s
\in[K]$. Recall that we denote $h_{\ell}(X,a_{\ell},b_{\ell}):=a_{\ell}+b_{\ell}^{\top}X$ for any $\ell\in[K]$ and 
\begin{align*}
    f(Y=s|X;a_{i},b_{i})=\dfrac{\exp(a_{is}+b_{is}^{\top}X)}{\sum_{\ell=1}^K\exp(a_{i\ell}+b_{i\ell}^{\top}X)}=\dfrac{\exp(h_s(X,a_{is},b_{is}))}{\sum_{\ell=1}^{K}\exp(h_{\ell}(X,a_{i\ell},b_{i\ell}))}.
\end{align*}
Then, by means of Taylor expansion up to order $r$, we have
\begin{align*}
    A_n&=\sum_{i=1}^{2}\exp(\bzin)\sum_{|\alpha|=1}^{r}\frac{1}{\alpha!}(\beta^n_{1i}-\beta^*_{11})^{\alpha_1}\prod_{\ell=1}^{K-1}(a^n_{i\ell}-a^*_{1\ell})^{\alpha_{2\ell}}(b^n_{i\ell}-b^*_{1\ell})^{\alpha_{3\ell}}\cdot X^{\alpha_1+\sum_{\ell=1}^{K-1}\alpha_{3\ell}}\exp((\beta^*_{11})^{\top}X)\\
    &\times\frac{\partial^{\sum_{\ell=1}^{K-1}(\alpha_{2\ell}+|\alpha_{3\ell}|)} f}{\partial h_1^{\alpha_{21}+|\alpha_{31}|}\ldots\partial h_{K-1}^{\alpha_{2(K-1)}+|\alpha_{3(K-1)}|}}(Y=s|h^*_{11},\ldots,h^*_{1K}) + R_5(X,Y),
\end{align*}
Here, $R_5(X,Y)$ is a Taylor remainder that satisfies $R_5(X,Y)/\mathcal{D}_r(G_n,G_*)\to0$ as $n\to\infty$. Since $b^*_{1\ell}=\zerod$ for any $\ell\in[K-1]$, the derivatives of $f$ with respect to $h_1,\ldots,h_{K-1}$ in the above representation of $A_n$ are constants depending on $(\alpha_{2\ell})_{\ell=1}^{K-1}$ and $(\alpha_{3\ell})_{\ell=1}^{K-1}$. Therefore, we can denote them as $C_{\alpha_2,\alpha_3}$. 

Additionally, since $\beta^n_{1i}-\beta^*_{11}=\zerod$ and $b^n_{i\ell}-b^*_{1\ell}=\zerod$ for any $\ell\in[K-1]$ and $i\in\{1,2\}$, we can let $\alpha_1=\zerod$, $\alpha_{3\ell}=\zerod$, and rewrite $A_n$ as follows:
\begin{align*}
    A_n&=\sum_{|\alpha_1|+|\alpha_2|=1}^{r}\sum_{i=1}^{2}\frac{\exp(\bzin)}{\alpha_1!\alpha_{2}!}(\beta^n_{1i}-\beta^*_{11})^{\alpha_1}\prod_{\ell=1}^{K-1}(a^n_{i\ell}-a^*_{1\ell})^{\alpha_{2\ell}}X^{\alpha_1}\exp((\beta^*_{11})^{\top}X)C_{\alpha_2,\mathbf{0}} + R_5(X,Y)\\
    &=\sum_{|\alpha_1|+|\alpha_2|=1}^{r}\frac{\exp(\beta^*_{01})-{t_n}}{\alpha_1!\alpha_{2}!}\cdot{c^{|\alpha_1|+|\alpha_2|}_n}\cdot X^{\alpha_1}\exp((\beta^*_{11})^{\top}X)C_{\alpha_2,\mathbf{0}} + R_5(X,Y).
\end{align*}
Similarly, by applying the Taylor expansion up to order $r$ to $B_n$, we have
\begin{align*}
    B_n&=\sum_{|\gamma|=1}^{r}\sum_{i=1}^{2}\frac{1}{\gamma!}\exp(\bzin)(\beta^n_{1i}-\beta^*_{11})^{\gamma}\cdot X^{\gamma}\exp((\beta^*_{11})^{\top}X)g_{G_n}(Y=s|X) + R_6(X,Y)\\
    &=\sum_{|\gamma|=1}^{r}\frac{\exp(\beta^*_{01})-{t_n}}{\gamma!}\cdot{c^{|\gamma|}_n}\cdot X^{\gamma}\exp((\beta^*_{11})^{\top}X)g_{G_n}(Y=s|X) + R_6(X,Y),
\end{align*}
where $R_6(X,Y)$ is a Taylor remainder such that $R_6(X,Y)/\mathcal{D}_r(G_n,G_*)\to0$ as $n\to\infty$. 

Now, we aim to demonstrate that $(A_n+E_{n,1})/\mathcal{D}_r(G_n,G_*)\to0$ and $(B_n+E_{n,2})/\mathcal{D}_r(G_n,G_*)\to0$ as $n\to\infty$, where we define
\begin{align*}
    E_{n,1}&:=\left(\sum_{i=1}^{2}\exp(\bzin)-\exp(\beta^*_{01})\right)u(Y=s|X;\beta^*_{11},a^*_1,b^*_1),\\
    E_{n,2}&:=\left(\sum_{i=1}^{2}\exp(\bzin)-\exp(\beta^*_{01})\right)v(Y=s|X;\beta^*_{11}),
\end{align*}
in which $u(Y=s|X;\beta^*_{11},a^*_1,b^*_1)=\exp((\beta^*_{11})^{\top}X)f(Y=s|X;a^*_1,b^*_1)=\exp((\beta^*_{11})^{\top}X)C_{\mathbf{0},\mathbf{0}}$, and $0<C_{\mathbf{0},\mathbf{0}}<1$.

\textbf{Part 1.} Prove that $(A_n+E_{n,1})/\mathcal{D}_r(G_n,G_*)\to0$ as $n\to\infty$.

Since $\mathcal{X}$ is a bounded set, we assume that $\|X\|\leq B$ for some positive constant $M$. Then, we can verify that
\begin{align*}
    0\leq A_n+E_{n,1}-R_5(X,Y)\leq \sum_{|\alpha_1|+|\alpha_2|=0}^{r}L^n_{\alpha_1,\alpha_2}\exp((\beta^*_{11})^{\top}X),
\end{align*}
where we denote
\begin{align*}
    L^n_{\alpha_1,\alpha_2}:=\begin{cases}
        \dfrac{\exp(\beta^*_{01})-{t_n}}{\alpha_1!\alpha_{2}!}\cdot{c^{|\alpha_1|+|\alpha_2|}_n}\cdot B^{|\alpha_1|}C_{\alpha_2,\mathbf{0}}, \hspace{1cm}|\alpha_1|+|\alpha_2|>0,\\
        -t_nC_{\mathbf{0},\mathbf{0}}, \hspace{5.41cm} |\alpha_1|+|\alpha_2|=0.
    \end{cases}
\end{align*}
Note that for $\alpha_1,\alpha_2$ such that $2\leq|\alpha_1|+|\alpha_2|\leq r$, as $c_n=\mathcal{O}(t_n)$, we have $L^n_{\alpha_1,\alpha_2}/\mathcal{D}_r(G_n,G_*)\to0$ as $n\to\infty$. Now, we show that $\sum_{|\alpha_1|+|\alpha_2|=0}^{1}L^n_{\alpha_1,\alpha_2}=0$. Indeed, we have
\begin{align*}
    \sum_{|\alpha_1|+|\alpha_2|=0}^{1}L^n_{\alpha_1,\alpha_2}=[\exp(\beta^*_{01})-t_n]\cdot(C_{\mathbf{0},\mathbf{0}}B+ N)c_n-C_{\mathbf{0},\mathbf{0}}t_n,
\end{align*}
where $N:=\sum_{|\alpha_2|=1}C_{\alpha_2,\mathbf{0}}$. By setting $t_n=\frac{B}{nN}$ and $c_n=\frac{1}{nN\exp(\beta^*_{01})-B}$, the above sum reduces to zero. Thus, we get that
\begin{align*}
    {[A_n+E_{n,1}-R_5(X,Y)]}/{\mathcal{D}_r(G_n,G_*)}\to0,
\end{align*}
as $n\to\infty$. Recall that $R_5(X,Y)/\mathcal{D}_r(G_n,G_*)\to0$, we deduce that $(A_n+E_{n,1})/\mathcal{D}_r(G_n,G_*)\to0$ as $n\to\infty$.

\textbf{Part 2.} Prove that $(B_n+E_{n,2})/\mathcal{D}_r(G_n,G_*)\to0$ as $n\to\infty$.

It is worth noting that
\begin{align*}
    0\leq B_n+E_{n,2}-R_6(X,Y)\leq \sum_{|\gamma|=0}^{r}J^n_{\gamma}\exp((\beta^*_{11})^{\top}X)g_{G_n}(Y=s|X),
\end{align*}
where we denote 
\begin{align*}
    J^n_{\gamma}:=\begin{cases}
        \dfrac{\exp(\beta^*_{01})-t_n}{\gamma!}\cdot c_n^{|\gamma|}B^{|\gamma|}, \hspace{1cm} |\gamma|>0,\\
        -t_n, \hspace{4.05cm} |\gamma|=0.
    \end{cases}
\end{align*}
For $2\leq|\gamma|\leq r$, we have that $J^n_{\gamma}/\mathcal{D}_r(G_n,G_*)\to0$ as $n\to\infty$. Additionally, we have
\begin{align*}
    \sum_{|\gamma|=0}^{1}J^n_{\gamma}=[\exp(\beta^*_{01})-t_n]\cdot c_n\cdot B - t_n=0.
\end{align*}
As a result, we get that
\begin{align*}
    {[B_n+E_{n,2}-R_6(X,Y)]}/{\mathcal{D}_r(G_n,G_*)}\to0,
\end{align*}
as $n\to\infty$. Since $R_6(X,Y)/\mathcal{D}_r(G_n,G_*)\to0$, we obtain that $(B_n+E_{n,2})/\mathcal{D}_r(G_n,G_*)\to0$ as $n\to\infty$.

Putting the results of Part 1 and Part 2 together, we obtain that 
\begin{align*}
    \frac{T_n(s)}{\mathcal{D}_r(G_n,G_*)}=\frac{A_n-B_n+E_n}{\mathcal{D}_r(G_n,G_*)}\to0,
\end{align*}
which indicates that $\bbE_X[V(g_{G_n}(\cdot|X),g_{G_*}(\cdot|X
))]/\mathcal{D}_r(G_n,G_*)\to0$ as $n\to\infty$. Furthermore, it follows from equation~\eqref{eq:D_r_formulation} that $\mathcal{D}_r(G_n,G_*)\to0$ as $n\to\infty$. Hence, we reach the conclusion of Proposition~\ref{prop:TV_prop}.

\subsection{Proof of Theorem~\ref{theorem:modified_softmax}}
\label{appendix:modified_softmax}
To reach the desired conclusion in Theorem~\ref{theorem:modified_softmax}, we need to show the following key inequality:
\begin{align}
    \label{eq:original_inequality_modified}
    \inf_{G\in\mathcal{O}_{k}(\Theta)}\bbE_X[V(\gmod_{G}(\cdot|X),\gmod_{G_*}(\cdot|X))]/\mathcal{D}_{2}(G,G_*)>0,
\end{align}
which is then divided into two parts named local structure and global structure. Since the global structure can be argued similarly to that in Appendix~\ref{appendix:standard_one} with a note that the modified softmax gating multinomial logistic MoE model is identifiable (see Proposition~\ref{prop:modified_identifiability}), the proof for it is omitted in this appendix.

\textbf{Local Structure:} For this part, we use the proof by contradiction method to show that
\begin{align}
    \label{eq:local_structure_modified}
    \lim_{\varepsilon\to 0}\inf_{\substack{G\in\mathcal{O}_k(\Theta),\\ \mathcal{D}_2(G,G_*)\leq\varepsilon}}\bbE_X[V(\gmod_{G}(\cdot|X),\gmod_{G_*}(\cdot|X))]/\mathcal{D}_{2}(G,G_*)>0.
\end{align}
Assume that this local inequality does not hold, then by utilizing some derivations in Appendix~\ref{appendix:standard_one}, we proceed the three-step framework as follows:

\textbf{Step 1.} First of all, by deriving in the same fashion as in equation~\eqref{eq:Tn_decomposition}, we get the following decomposition of $\tmod_{n}(s):=\Big[\sum_{j=1}^{k_*}\exp((\boj)^{\top}X+\bzj)\Big]\cdot\Big[\gmod_{G_n}(Y=s|X)-\gmod_{G_*}(Y=s|X)\Big]$:
\begin{align*}
    \tmod_n(s)&=\sum_{j=1}^{k_*}\sum_{i\in\mathcal{C}_j}\exp(\bzin)\Big[\umod(Y=s|X;\boin,\ain,\bin)-\umod(Y=s|X;\boj,\aj,\bj)\Big]\nonumber\\
    &-\sum_{j=1}^{k_*}\sum_{i\in\mathcal{C}_j}\exp(\bzin)\Big[\vmod(Y=s|X;\boin)-\vmod(Y=s|X;\boj)\Big]\nonumber\\
    &+\sum_{j=1}^{k_*}\Big(\sum_{i\in\mathcal{C}_j}\exp(\bzin)-\exp(\bzj)\Big)\Big[\umod(Y=s|X;\boj,\aj,\bj)-\vmod(Y=s|X;\boj)\Big]\nonumber\\
    &:=A_n-B_n+E_n,
\end{align*}
where we define 
\begin{align*}
    \umod(Y=s|X;\beta_{1i},a_i,b_i)&:=\exp(\beta_{1i}^{\top}M(X))\cdot f(Y=s|X;a_{i},b_{i}),\\
    \vmod(Y=s|X;\beta_{1i})&:=\exp(\beta_{1i}^{\top}M(X))\cdot g_{G_n}(Y=s|X).
\end{align*}
for any $s\in[K]$. Next, we will apply first order and second order Taylor expansions to two terms in the following sum:
\begin{align*}
    A_{n}&=\sum_{j:|\mathcal{C}_j|=1}\sum_{i\in\mathcal{C}_j}\exp(\bzin)\Big[\umod(Y=s|X;\boin,\ain,\bin)-\umod(Y=s|X;\boj,\aj,\bj)\Big]\\
    &+\sum_{j:|\mathcal{C}_j|>1}\sum_{i\in\mathcal{C}_j}\exp(\bzin)\Big[\umod(Y=s|X;\boin,\ain,\bin)-\umod(Y=s|X;\boj,\aj,\bj)\Big]\\
    &:=A_{n,1} + A_{n,2}.
\end{align*}
For the first term, we get that
\begin{align*}
    A_{n,1}&=\sum_{j:|\mathcal{C}_j|>1}\sum_{i\in\mathcal{C}_j}\exp(\bzin)\sum_{|\alpha|=1}\frac{1}{\alpha!}(\dboijn)^{\alpha_1}\prod_{\ell=1}^{K-1}(\daijnl)^{\alpha_{2\ell}}(\dbijnl)^{\alpha_{3\ell}}\cdot[M(X)]^{\alpha_1} X^{\sum_{\ell=1}^{K-1}\alpha_{3\ell}}\\
    &\times \exp((\boj)^{\top}M(X))\cdot\frac{\partial^{\sum_{\ell=1}^{K-1}(\alpha_{2\ell}+|\alpha_{3\ell}|)} f}{\partial h_1^{\alpha_{21}+|\alpha_{31}|}\ldots\partial h_{K-1}^{\alpha_{2(K-1)}+|\alpha_{3(K-1)}|}}(Y=s|X;a^*_j,b^*_j) + \rmod_1(X,Y),
\end{align*}
where $\rmod_1(X,Y)$ is a Taylor remainder such that $\rmod_1(X,Y)/\mathcal{D}_2(G_n,G_*)\to0$ as $n\to\infty$. Let $q_2=(\alpha_{2\ell}+|\alpha_{3\ell}|)_{\ell=1}^{K-1}\in\mathbb{N}^{K-1}$, $q_3=\sum_{\ell=1}^{K-1}\alpha_{3\ell}\in\mathbb{N}^d$ and $q_4=\alpha_1\in\mathbb{N}^d$, we rewrite $A_{n,1}$ as
\begin{align*}
    A_{n,1}&=\sum_{j:|\mathcal{C}_j|=1}\sum_{|q_4|+|q_2|=1}\sum_{|q_3|=0}^{|q_2|}\sum_{i\in\mathcal{C}_j}\sum_{\alpha\in\mathcal{I}_{q_2,q_3,q_4}}\frac{\exp(\bzin)}{\alpha!}(\dboijn)^{\alpha_1}\prod_{\ell=1}^{K-1}(\daijnl)^{\alpha_{2\ell}}(\dbijnl)^{\alpha_{3\ell}}\\
    &\times[M(X)]^{q_4}X^{q_3}\exp((\boj)^{\top}M(X))\cdot\frac{\partial^{|q_2|} f}{\partial h_1^{q_{21}}\ldots\partial h_{K-1}^{q_{2(K-1)}}}(Y=s|X;a^*_j,b^*_j) + \rmod_1(X,Y),
\end{align*}
where we define
\begin{align*}
    \mathcal{I}_{q_2,q_3,q_4}:=\Big\{\alpha=(\alpha_1,\alpha_{21},\ldots,\alpha_{2(K-1)},\alpha_{31},\ldots,\alpha_{3(K-1)}):\alpha_1=q_4,\sum_{\ell=1}^{K-1}\alpha_{3\ell}=q_3,(\alpha_{2\ell}+|\alpha_{3\ell}|)_{\ell=1}^{K-1}=q_2\Big\}.
\end{align*}
For the second term $A_{n,2}$, we have
\begin{align*}
    A_{n,2}&=\sum_{j:|\mathcal{C}_j|>1}\sum_{|q_4|+|q_2|=1}^{2}\sum_{|q_3|=0}^{|q_2|}\sum_{i\in\mathcal{C}_j}\sum_{\alpha\in\mathcal{I}_{q_2,q_3,q_4}}\frac{\exp(\bzin)}{\alpha!}(\dboijn)^{\alpha_1}\prod_{\ell=1}^{K-1}(\daijnl)^{\alpha_{2\ell}}(\dbijnl)^{\alpha_{3\ell}}\\
    &\times[M(X)]^{q_4}X^{q_3}\exp((\boj)^{\top}M(X))\cdot\frac{\partial^{|q_2|} f}{\partial h_1^{q_{21}}\ldots\partial h_{K-1}^{q_{2(K-1)}}}(Y=s|X;a^*_j,b^*_j) + \rmod_2(X,Y),
\end{align*}
where $\rmod_2(X,Y)$ is a Taylor remainder such that $\rmod_2(X,Y)/\mathcal{D}_2(G_n,G_*)\to0$ when $n\to\infty$. 

Meanwhile, by arguing similarly, we can decompose $B_n$ as
\begin{align*}
    B_n&=\sum_{j:|\mathcal{C}_j|=1}\sum_{|\gamma|=1}\sum_{i\in\mathcal{C}_j}\frac{\exp(\bzin)}{\gamma!}(\dboijn)^{\gamma}\times [M(X)]^{\gamma}\exp((\boj)^{\top}M(X))g_{G_n}(Y=s|X)+\rmod_3(X,Y)\\
    &+\sum_{j:|\mathcal{C}_j|>1}\sum_{|\gamma|=1}^{2}\sum_{i\in\mathcal{C}_j}\frac{\exp(\bzin)}{\gamma!}(\dboijn)^{\gamma}\times [M(X)]^{\gamma}\exp((\boj)^{\top}M(X))g_{G_n}(Y=s|X)+\rmod_4(X,Y),
\end{align*}
where $\rmod_3(X,Y)$ and $\rmod_4(X,Y)$ are Taylor remainders such that their ratios to $\mathcal{D}_2(G_n,G_*)$ vanish as $n$ approaches infinity. 

Combine the above results, we deduce that $\tmod_n(s)$ can be represented as follows:
\begin{align}
    \tmod_n(s)&=\sum_{j=1}^{k_*}\sum_{|q_4|+|q_2|=0}^{1+\mathbf{1}_{\{|\mathcal{C}_j|>1\}}}\sum_{|q_3|=0}^{|q_2|}Z^n_{q_2,q_3,q_4}(j)\times [M(X)]^{q_4}X^{q_3}\exp((\boj)^{\top}M(X))\frac{\partial^{|q_2|} f}{\partial h_1^{q_{21}}\ldots\partial h_{K-1}^{q_{2(K-1)}}}(Y=s|X;a^*_j,b^*_j)\nonumber\\
    \label{eq:Tn_coefficient}
    &\hspace{2cm}+\sum_{j=1}^{k_*}\sum_{|\gamma|=0}^{1+\mathbf{1}_{\{|\mathcal{C}_j|>1\}}}W^n_{\gamma}(j)\times [M(X)]^{\gamma}\exp((\boj)^{\top}M(X))g_{G_n}(Y=s|X) + \rmod(X,Y),
\end{align}
where $\rmod(X,Y)$ is the sum of Taylor remainders such that $\rmod(X,Y)/\mathcal{D}_2(G_n,G_*)\to0$ as $n\to\infty$ and
\begin{align*}
    Z^n_{q_2,q_3,q_4}(j)&=\begin{cases}
        \sum_{i\in\mathcal{C}_j}\sum_{\alpha\in\mathcal{I}_{q_2,q_3,q_4}}\frac{\exp(\bzin)}{\alpha!}(\dboijn)^{\alpha_1}\prod_{\ell=1}^{K-1}(\daijnl)^{\alpha_{2\ell}}(\dbijnl)^{\alpha_{3\ell}}, \ (q_2,q_3,q_4)\neq(\mathbf{0}_{K-1},\zerod,\zerod),\\
        \sum_{i\in\mathcal{C}_j}\exp(\bzin)-\exp(\bzj), \hspace{5.8cm} (q_2,q_3,q_4)=(\mathbf{0}_{K-1},\zerod,\zerod),
    \end{cases}
\end{align*}
and 
\begin{align*}
    W^n_{\gamma}(j)&=\begin{cases}
        -\sum_{i\in\mathcal{C}_j}\frac{\exp(\bzin)}{\gamma!}(\dboijn)^{\gamma}, \hspace{1cm} |\gamma|\neq \zerod,\\
        -\sum_{i\in\mathcal{C}_j}\exp(\bzin)+\exp(\bzj), \hspace{0.5cm} |\gamma|=\zerod,
    \end{cases}
\end{align*}
for any $j\in[k_*]$. 

\textbf{Step 2.} Subsequently, we will show that at least one among $Z^n_{q_2,q_3,q_4}(j)/\mathcal{D}_2(G_n,G_*)$ does not approach zero as $n$ tends to infinity. Assume by contrary that all of them vanish as $n\to\infty$, then we consider some typical tuples $(q_2,q_3,q_4)$. Firstly, by taking the sum of $Z^n_{q_2,q_3,q_4}(j)$ for $j\in[k_*]:|\mathcal{C}_j|>1$ (resp. $j\in[k_*]:|\mathcal{C}_j|=1$) and  $(q_2,q_3,q_4)\in\{(\mathbf{0}_{K-1},\zerod,2e_i):i\in[d]\}$ (resp. $(q_2,q_3,q_4)\in\{(\mathbf{0}_{K-1},\zerod,e_i):i\in[d]\}$) where $e_i:=(0,\ldots,0,\underbrace{1}_{\textit{i-th}},0,\ldots,0)\in\mathbb{R}^d$, we get
\begin{align}
    \label{eq:modified_vanish_beta}
    \frac{1}{\mathcal{D}_2(G_n,G_*)}\cdot\sum_{j:|\mathcal{C}_j|>1}\sum_{i\in\mathcal{C}_j}\exp(\bzin)\|\dboijn\|^2&\to0,\nonumber\\
    \frac{1}{\mathcal{D}_2(G_n,G_*)}\cdot\sum_{j:|\mathcal{C}_j|=1}\sum_{i\in\mathcal{C}_j}\exp(\bzin)\|\dboijn\|&\to0.
\end{align}
For $(q_2,q_3,q_4)\in\{(\mathbf{0}_{K-1},2e_i,\zerod):i\in[d]\}$ (resp. $(q_2,q_3,q_4)\in\{(\mathbf{0}_{K-1},e_i,\zerod):i\in[d]\}$) where , we obtain that
\begin{align}
    \label{eq:modified_vanish_b}
     \frac{1}{\mathcal{D}_2(G_n,G_*)}\cdot\sum_{j:|\mathcal{C}_j|>1}\sum_{i\in\mathcal{C}_j}\exp(\bzin)\sum_{\ell=1}^{K-1}\|\dbijnl\|^2&\to0,\nonumber\\
    \frac{1}{\mathcal{D}_2(G_n,G_*)}\cdot\sum_{j:|\mathcal{C}_j|=1}\sum_{i\in\mathcal{C}_j}\exp(\bzin)\sum_{\ell=1}^{K-1}\|\dbijnl\|&\to0.
\end{align}
On the other hand, for $(q_2,q_3,q_4)\in\{(e'_{\ell},\zerod,\zerod):\ell\in[K-1]\}$ (resp. $(q_2,q_3,q_4)\in\{(e'_{\ell},\zerod,\zerod):\ell\in[K-1]\}$) where $e'_{\ell}:=(0,\ldots,0,\underbrace{1}_{\ell\textit{-th}},0,\ldots,0)\in\mathbb{R}^{K-1}$, we have
\begin{align}
    \label{eq:modified_vanish_a}
    \frac{1}{\mathcal{D}_2(G_n,G_*)}\cdot\sum_{j:|\mathcal{C}_j|>1}\sum_{i\in\mathcal{C}_j}\exp(\bzin)\sum_{\ell=1}^{K-1}\|\daijnl\|^2&\to0,\\
    \frac{1}{\mathcal{D}_2(G_n,G_*)}\cdot\sum_{j:|\mathcal{C}_j|=1}\sum_{i\in\mathcal{C}_j}\exp(\bzin)\sum_{\ell=1}^{K-1}\|\daijnl\|&\to0.
\end{align}
Additionally, when $(q_2,q_3,q_4)=(\mathbf{0}_{K-1},\zerod,\zerod)$, it follows that
\begin{align}
    \label{eq:modified_vanish_zero}
    \sum_{j=1}^{k_*}\frac{|Z^n_{\mathbf{0}_{K-1},\zerod,\zerod}(j)|}{\mathcal{D}_2(G_n,G_*)}=\frac{1}{\mathcal{D}_2(G_n,G_*)}\cdot\sum_{j=1}^{k_*}\left|\sum_{i\in\mathcal{C}_j}\exp(\bzin)-\exp(\bzj)\right|\to0.
\end{align}
It is induced from the limits in equations~\eqref{eq:modified_vanish_beta}, \eqref{eq:modified_vanish_b}, \eqref{eq:modified_vanish_a} and \eqref{eq:modified_vanish_zero} that $1=\mathcal{D}_2(G_n,G_*)/\mathcal{D}_2(G_n,G_*)\to 0$ when $n\to\infty$, which is a contradiction. Thus, at least one among $Z^n_{q_2,q_3,q_4}(j)/\mathcal{D}_2(G_n,G_*)$ does not go to zero as $n\to\infty$.

\textbf{Step 3.} Now, we denote $\widetilde{m}_n$ as the maximum of the absolute values of $Z^n_{q_2,q_3,q_4}(j)/\mathcal{D}_2(G_n,G_*)$ and $W^n_{\gamma}(j)/\mathcal{D}_2(G_n,G_*)$ for any $j\in[k_*]$, $0\leq|q_2|+|q_4|\leq 1+\mathbf{1}_{\{|\mathcal{C}_j|>1\}}$, $0\leq|q_3|\leq|q_2|$ and $0\leq|\gamma|\leq 1+\mathbf{1}_{\{|\mathcal{C}_j|>1\}}$. Since at least one among $Z^n_{q_2,q_3,q_4}(j)/\mathcal{D}_2(G_n,G_*)$ does not go to zero as $n\to\infty$, we deduce that $\widetilde{m}_n\not\to0$, and therefore, $1/\widetilde{m}_n\not\to\infty$. Then, we denote
\begin{align*}
    Z^n_{q_2,q_3,q_4}(j)/[m_n\mathcal{D}_2(G_n,G_*)]&\to\widetilde{\tau}_{q_2,q_3,q_4}(j)\\
    W^n_{\gamma}(j)/\mathcal{D}_2(G_n,G_*)&\to\widetilde{\eta}_{\gamma}(j)
\end{align*}
as $n\to\infty$. Here, at least one among $\widetilde{\tau}_{q_2,q_3,q_4}(j)$ for $j\in[k_*]$, $0\leq|q_2|+|q_4|\leq 1+\mathbf{1}_{\{|\mathcal{C}_j|>1\}}$ and $0\leq|q_3|\leq|q_2|$ is non-zero. By invoking the Fatou's lemma, we have that
\begin{align*}
    0=\lim_{n\to\infty}\frac{\bbE_X[2V(\widetilde{g}_{G_n}(\cdot|X),\widetilde{g}_{G_*}(\cdot|X))]}{m_n\mathcal{D}_2(G_n,G_*)}\geq \int\sum_{s=1}^{K}\liminf_{n\to\infty}\frac{|\widetilde{g}_{G_n}(Y=s|X)-\widetilde{g}_{G_*}(Y=s|X)|}{m_n\mathcal{D}_2(G_n,G_*)}~\dint X\geq 0,
\end{align*}
which indicates that $[\widetilde{g}_{G_n}(Y=s|X)-\widetilde{g}_{G_*}(Y=s|X)]/[\widetilde{m}_n\mathcal{D}_2(G_n,G_*)]$ tends to zero as $n$ goes to infinity for any $s\in[K]$ and almost surely $X$. This result is equivalent to
\begin{align}
    \label{eq:zero_limit}
    \widetilde{T}_n(s)/[\widetilde{m}_n\mathcal{D}_2(G_n,G_*)]\to0,
\end{align}
as $n\to\infty$, for any $s\in[K]$. Putting the results in equations~\eqref{eq:Tn_coefficient} and \eqref{eq:zero_limit} together, we have
\begin{align}
    &\sum_{j=1}^{k_*}\sum_{|q_4|+|q_2|=0}^{1+\mathbf{1}_{\{|\mathcal{C}_j|>1\}}}\sum_{|q_3|=0}^{|q_2|}\widetilde{\tau}_{q_2,q_3,q_4}(j)\times [M(X)]^{q_4}X^{q_3}\exp((\boj)^{\top}M(X))\frac{\partial^{|q_2|} f}{\partial h_1^{q_{21}}\ldots\partial h_{K-1}^{q_{2(K-1)}}}(Y=s|X;a^*_j,b^*_j)\nonumber\\ 
    \label{eq:linear_combination}
    &\hspace{3cm}+\sum_{j=1}^{k_*}\sum_{|\gamma|=0}^{1+\mathbf{1}_{\{|\mathcal{C}_j|>1\}}}\widetilde{\eta}_{\gamma}(j)\times [M(X)]^{\gamma}\exp((\boj)^{\top}M(X))g_{G_*}(Y=s|X)=0.
\end{align}
\textbf{Regime 1:} For any $j\in[k_*]$, there exists an index $\ell\in[K-1]$ such that $b^*_{j\ell}\neq\zerod$.

By using the same arguments for proving the set $\mathcal{F}$ in equation~\eqref{eq:F_set} is linearly independent for almost surely $X$, we get that the following set also admits that property:
\begin{align*}
    &\Bigg\{[M(X)]^{q_4}X^{q_3}\exp((\boj)^{\top}M(X))\frac{\partial^{|q_2|} f}{\partial h_1^{q_{21}}\ldots\partial h_{K-1}^{q_{2(K-1)}}}(Y=s|X;a^*_j,b^*_j), \\
    &\hspace{.5cm}[M(X)]^{\gamma}\exp((\boj)^{\top}M(X))g_{G_*}(Y=s|X):j\in[k_*], \ 0\leq |q_2|+|q_4|, |\gamma|\leq 1+\mathbf{1}_{\{|\mathcal{C}_j|>1\}}, \ 0\leq |q_3|\leq |q_2|\Bigg\}.
\end{align*}
As a result, it follows that $\widetilde{\tau}_{q_2,q_3,q_4}(j)=\widetilde{\eta}_{\gamma}(j)=0$ for any $j\in[k_*]$, $0\leq |q_2|+|q_4|, |\gamma|\leq 1+\mathbf{1}_{\{|\mathcal{C}_j|>1\}}$ and $0\leq |q_3|\leq |q_2|$, which contradicts the fact that at least one among $\widetilde{\tau}_{q_2,q_3,q_4}(j)$ is different from zero. 

\textbf{Regime 2:} There exists an index $j\in[k_*]$ such that $b^*_{j\ell}=\zerod$ for any $\ell\in[K-1]$.

Note that equation~\eqref{eq:linear_combination} can be rewritten as
\begin{align*}
    \sum_{j=1}^{k_*}P^{(j)}(X)\exp((\beta^*_{1j})^{\top}M(X))+\sum_{j=1}^{k_*}Q^{(j)}(X)\exp((\beta^*_{1j})^{\top}M(X))g_{G_*}(Y=s|X)=0,
\end{align*}
where we define
\begin{align*}
    P^{(j)}(X)&:=\sum_{|q_4|+|q_2|=0}^{1+\mathbf{1}_{\{|\mathcal{C}_j|>1\}}}\sum_{|q_3|=0}^{|q_2|}\widetilde{\tau}_{q_2,q_3,q_4}(j)X^{q_3}[M(X)]^{q_4}\cdot\frac{\partial^{|q_2|}f}{\partial h_1^{q_{21}}\ldots\partial h_{K-1}^{q_{2(K-1)}}}(Y=s|X;a^*_j,b^*_j),\\
    Q^{(j)}(X)&:=\sum_{|\gamma|=0}^{1+\mathbf{1}_{\{|\mathcal{C}_j|>1\}}}\widetilde{\eta}_{\gamma}(j)[M(X)]^{\gamma}.
\end{align*}
Since the following set is linearly independent for almost surely $X$:
\begin{align*}
    \Big\{\exp((\beta^*_{1j})^{\top}M(X)), \ \exp((\beta^*_{1j})^{\top}M(X))g_{G_*}(Y=s|X):j\in[k_*]\Big\},
\end{align*}
we achieve that $P^{(j)}(X)=Q^{(j)}(X)=0$ for any $j\in[k_*]$ for almost surely $X$. Then, it follows from the formulation of $P^{(j)}(X)$ that
\begin{align*}
    \sum_{|q_4|+|q_2|=0}^{1+\mathbf{1}_{\{|\mathcal{C}_j|>1\}}}\sum_{|q_3|=0}^{|q_2|}\widetilde{\tau}_{q_2,q_3,q_4}(j)X^{q_3}[M(X)]^{q_4}=0,
\end{align*}
for any $j\in[k_*]$ for almost surely $X$. It can be seen from the above equation that $0\leq|q_3|+|q_4|\leq |q_2|+|q_4|\leq 1+\mathbf{1}_{\{|\mathcal{C}_j|>1\}}\leq 2$. Moreover, by definition of function $M$ (see Definition~\ref{def:modified_function}), the set $\Big\{X^p[M(X)]^{q}: p,q\in\mathbb{N}^d, \ 0\leq|p|+|q|\leq 2\Big\}$ is linearly independent for almost surely $X$. As a consequence, we achieve that $\widetilde{\tau}_{q_2,q_3,q_4}(j)=0$ for any $j\in[k_*]$, $0\leq |q_2|+|q_4|, |\gamma|\leq 1+\mathbf{1}_{\{|\mathcal{C}_j|>1\}}$ and $0\leq |q_3|\leq |q_2|$, contradicting the fact that at least one among $\widetilde{\tau}_{q_2,q_3,q_4}(j)$ is non-zero. 

Combine the results of the above two regimes, we reach the bound in equation~\eqref{eq:local_structure_modified}.
\section{Proofs for Convergence Rates of Density Estimation}
In this appendix, we present the proof of Propostion~\ref{prop:density_estimation} in Appendix~\ref{appendix:density_estimation}, while that for Proposition~\ref{prop:modified_density_estimation} is given in Appendix~\ref{appendix:modified_density_estimation}.
\label{appendix:general_density_estimation}
\subsection{Proof of Proposition~\ref{prop:density_estimation}}
\label{appendix:density_estimation}
In this appendix, we will firstly introduce key results on density estimation with MLE which are mainly based on \cite{Vandegeer-2000}, and then provide the proof of Proposition~\ref{prop:density_estimation} at the end.

\subsubsection{Key Results}
\label{appendix:key_results}
To begin with, it is necessary to define some notations that will be used in our presentation. First, we define $\mathcal{P}_k(\Theta):=\{g_{G}(Y|X):G\in\mathcal{O}_k(\Theta)\}$ as the set of conditional density functions of all mixing measures belonging to $\mathcal{O}_k(\Theta)$. In addition, let $N(\varepsilon,\mathcal{P}_k(\Theta),\|\cdot\|_{1})$ be the covering number \cite{Vandegeer-2000} of metric space $(\mathcal{P}_k(\Theta),\|\cdot\|_{1})$ while $H_B(\varepsilon,\mathcal{P}_k(\Theta),h)$ be the bracketing entropy \cite{Vandegeer-2000} of $\mathcal{P}_k(\Theta)$ under the Hellinger distance. Then, the following result gives us the upper bound of these quantities:
\begin{lemma}
    \label{lemma:empirical_processes}
    For any bounded set $\Theta$ and $\varepsilon\in(0,1/2)$, we have
    \begin{itemize}
        \item[(i)] $\log N(\varepsilon,\mathcal{P}_k(\Theta),\|\cdot\|_{1})\lesssim\log(1/\varepsilon)$;
        \item[(ii)] $H_B(\varepsilon,\mathcal{P}_k(\Theta),h)\lesssim \log(1/\varepsilon)$. 
    \end{itemize}
\end{lemma}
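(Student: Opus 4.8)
Both bounds are standard finite-dimensional entropy estimates, and the plan is to reduce them to a covering-number estimate for the Euclidean parameter space. Writing $D_{k'}:=k'(1+d+K+dK)$, every $G\in\mathcal{O}_k(\Theta)$ with exactly $k'\le k$ atoms is encoded by a point $\eta=(\beta_{0i},\beta_{1i},a_i,b_i)_{i=1}^{k'}$ in the bounded set $\Theta^{k'}\subset\mathbb{R}^{D_{k'}}$, and $\mathcal{P}_k(\Theta)$ is the finite union over $k'\in[k]$ of the images $g_{G_\eta}$ of these sets. Since $k$ is finite, a covering (resp. bracketing) of each piece yields one of the union at the cost of a multiplicative constant $k$, so it suffices to treat a single piece, i.e. fixed $k'$.

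\textbf{Uniform Lipschitzness and part (i).} The core step is to show that $\eta\mapsto g_{G_\eta}(\cdot|\cdot)$ is Lipschitz from $(\Theta^{k'},\|\cdot\|)$ into $L^\infty(\mathcal{X}\times[K])$, that is $\sup_{X,s}\left|g_{G_\eta}(Y=s|X)-g_{G_{\eta'}}(Y=s|X)\right|\le L\|\eta-\eta'\|$ for a constant $L$ depending only on $\Theta$ (and $k,d,K$). This follows from compactness: on the bounded domain $\mathcal{X}\times\Theta$ the gating logits $(\beta_{1i})^{\top}X+\beta_{0i}$ and the expert logits $a_{i\ell}+(b_{i\ell})^{\top}X$ are bounded, so both softmax normalizers are bounded above and below away from $0$; hence $g_{G_\eta}(Y=s|X)$ is a composition of real-analytic maps whose first-order partial derivatives in $\eta$ are uniformly bounded over $X\in\mathcal{X}$. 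Because $\mathcal{X}$ has finite measure, the same bound holds in $\|\cdot\|_1$. A bounded subset of $\mathbb{R}^{D_{k'}}$ admits a $\delta$-net of cardinality at most $(C/\delta)^{D_{k'}}$; taking $\delta=\varepsilon/L$ and pushing this net forward gives an $\varepsilon$-net of the $k'$-atom piece in $\|\cdot\|_1$. Summing over $k'\in[k]$ yields $N(\varepsilon,\mathcal{P}_k(\Theta),\|\cdot\|_1)\le k\,(CL/\varepsilon)^{D_k}$, whence $\log N(\varepsilon,\mathcal{P}_k(\Theta),\|\cdot\|_1)\lesssim D_k\log(1/\varepsilon)\lesssim\log(1/\varepsilon)$, which is (i).

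\textbf{Bracketing entropy and part (ii).} For (ii) I would upgrade the net to a bracketing. Given an $\varepsilon'$-net $\{\eta_1,\dots,\eta_N\}$ of $\Theta^{k'}$ in $\|\cdot\|$, define for each $j$ the functions $u_j:=g_{G_{\eta_j}}+L\varepsilon'$ and $l_j:=\max\{g_{G_{\eta_j}}-L\varepsilon',0\}$. By the uniform Lipschitz bound every $g_G$ in the $k'$-atom piece lies in some bracket $[l_j,u_j]$, and $0\le u_j-l_j\le 2L\varepsilon'$ pointwise. Since the conditional densities are uniformly bounded and the domain has finite measure, $h^2(u_j,l_j)\le\tfrac12\int|u_j-l_j|\,\dint\mu\lesssim\varepsilon'$, so the $N$ brackets have Hellinger width $O(\sqrt{\varepsilon'})$. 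Choosing $\varepsilon'\asymp\varepsilon^2$ therefore produces at most $O\big((1/\varepsilon)^{2D_k}\big)$ brackets of Hellinger width $\lesssim\varepsilon$; summing over $k'\in[k]$ gives $H_B(\varepsilon,\mathcal{P}_k(\Theta),h)\lesssim D_k\log(1/\varepsilon)\lesssim\log(1/\varepsilon)$, which is (ii).

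\textbf{Main obstacle.} The only genuine work is the uniform Lipschitz estimate: one must differentiate the nested softmax gating and multinomial logistic expert with respect to all parameters and bound these derivatives uniformly on $\mathcal{X}\times\Theta$, being slightly careful that the two softmax denominators stay bounded below so that no singularity arises. Everything else is the textbook reduction from a covering of the parameter space to a covering and bracketing of the induced function class (cf. \cite{Vandegeer-2000}).
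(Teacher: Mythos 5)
Your proof is correct and follows essentially the same approach as the paper's: both establish a uniform Lipschitz bound of $g_{G}(\cdot|\cdot)$ in its parameters over the compact set $\Theta\times\mathcal{X}$ (exploiting that the two softmax normalizers stay bounded away from zero there), and then reduce covering and bracketing of $\mathcal{P}_k(\Theta)$ to an $\varepsilon$-net of the finite-dimensional bounded parameter set. Your treatment of part (ii) is in fact slightly cleaner: you build the brackets $[l_j,u_j]$ directly from the pointwise (i.e.\ $L^\infty$) net and correctly use $h^2\lesssim\|\cdot\|_1$ (forcing $\varepsilon'\asymp\varepsilon^2$), whereas the paper constructs the brackets from an $\|\cdot\|_1$-cover and invokes ``$h\leq\|\cdot\|_1$'' — small imprecisions that do not affect the logarithmic bound — but the decomposition, the key Lipschitz lemma, and the final estimate are the same.
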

\begin{proof}[Proof of Lemma~\ref{lemma:empirical_processes}]
    \textbf{Part (i)} Firstly, we define $\Omega:=\{(a,b)\in\mathbb{R}^{K}\times\mathbb{R}^{d\times K}:(\beta_0,\beta_1,a,b)\in\Theta\}$. Since $\Theta$ is a compact set, then $\Omega$ is also compact. Therefore, $\Omega$ admits an $\varepsilon$-cover of size $T_2$ denoted by $\Omega_{\varepsilon}$. In addition, we also define $\Delta:=\{(\beta_0,\beta_1)\in\mathbb{R}\times\mathbb{R}^d:(\beta_0,\beta_1,a,b)\in\Theta\}$, and $\Delta_{\varepsilon}$ as an $\varepsilon$-cover of $\Delta$. It can be validated that
    \begin{align*}
        |\Omega_{\varepsilon}|\lesssim\mathcal{O}(\varepsilon^{-K(d+1)k}), \quad |\Delta_{\varepsilon}|\lesssim\mathcal{O}(\varepsilon^{-(d+1)k}).
    \end{align*}
    Next, given a mixing measure $G=\sum_{i=1}^{k'}\exp(\beta_{0i})\delta_{(\beta_{1i},a_i,b_i)}\in\mathcal{O}_k(\Theta)$, where $k'\in[k]$, we define $\widetilde{G}:=\sum_{i=1}^{k'}\exp(\beta_{0i})\delta_{(\beta_{1i},\overline{a}_{i},\overline{b}_{i})}$ in which $(\overline{a}_{i},\overline{b}_{i})\in\Omega_{\varepsilon}$ such that it is the closet point to $(a_{i},b_{i})$ for any $i\in[k']$. Additionally, we also consider the mixing measure $\overline{G}:=\sum_{i=1}^{k'}\exp(\overline{\beta}_{0i})\delta_{(\overline{\beta}_{1i},\overline{a}_{i},\overline{b}_{i})}$, where $(\overline{\beta}_{0i},\overline{\beta}_{1i})\in\Delta_{\varepsilon}$ is the closest point to $(\beta_{0i},\beta_{1i})$. By this construction, it can be justified that $g_{\overline{G}}\in\mathcal{H}$, where we define
    \begin{align*}
        \mathcal{H}:=\left\{g_{G}\in\mathcal{P}_k(\Theta):(\overline{\beta}_{0i},\overline{\beta}_{1i})\in\Delta_{\varepsilon}, \ (\overline{a}_{i},\overline{b}_{i})\in\Omega_{\varepsilon},\ \forall i\in[k]\right\}.
    \end{align*}
    Now, we show that $\mathcal{H}$ is an $\varepsilon$-cover of the metric space $(\mathcal{P}_k(\Omega),\|\cdot\|_{1})$ but not necessarily the smallest one. For that purpose, we aim to find a bound for the term $\|g_{G}-g_{\overline{G}}\|_1$. According to the triangle inequality, we have
    \begin{align*}
        \|g_{G}-g_{\overline{G}}\|_1\leq \|g_{G}-g_{\widetilde{G}}\|_1+\|g_{\widetilde{G}}-g_{\overline{G}}\|_1.
    \end{align*}
    Regarding the first term in the above right hand side, 
    \begin{align*}
        \|g_{\widetilde{G}}-g_{\overline{G}}\|_{1}&=\sum_{s=1}^{K}\int_{\mathcal{X}}\Big|g_{G}(Y|X)-g_{\widetilde{G}}(Y|X)\Big|\dint X\\
        &=\sum_{s=1}^{K}\int_{\mathcal{X}}\sum_{i=1}^{k'}\softmax(\beta_{1i}^{\top}X+\beta_{0i})\cdot \Big|f(Y=s|X;a_i,b_i)-f(Y=s|X;\overline{a}_i,\overline{b}_i)\Big|\dint X\\
        &\leq\sum_{s=1}^{K}\int_{\mathcal{X}} \sum_{i=1}^{k'}\Big|\softmax(a_{is}+b_{is}^{\top}X)-\softmax(\overline{a}_{is}+\overline{b}_{is}^{\top}X)\Big|\dint X.
    \end{align*}
    Since $\softmax$ is a differentiable function, it is also a $L$-Lipschitz function where $L>0$. Additionally, as $\mathcal{X}$ is a bounded function, we may assume that $\|X\|\leq B$ for some constant $B>0$. As a result, we have
    \begin{align*}
        \|g_{\widetilde{G}}-g_{\overline{G}}\|_{1}&\leq\sum_{s=1}^{K}\sum_{i=1}^{k'} L\cdot\Big(|a_{is}-\overline{a}_{is}|+\|X\|\cdot\|b_{is}-\overline{b}_{is}\|\Big)\\
        &\leq Kk'L\cdot(\varepsilon+B\varepsilon)\lesssim\varepsilon.
    \end{align*}
    Similarly, the second term is bounded as
    \begin{align*}
        \|g_{\widetilde{G}}-g_{\overline{G}}\|_{1}&= \sum_{s=1}^{K}\int_{\mathcal{X}}\sum_{i=1}^{k'}\Big|\softmax(\beta_{1i}^{\top}X+\beta_{0i})-\softmax(\overline{\beta}_{1i}^{\top}X+\overline{\beta}_{0i})\Big|\cdot f(Y|X;\overline{a}_{is},\overline{b}_{is})\dint X\\
        &\leq\sum_{s=1}^{K}\sum_{i=1}^{k'}L\cdot\Big(\|\beta_{1i}-\overline{\beta}_{1i}\|\cdot\|X\|+|\beta_{0i}-\overline{\beta}_{0i}|\Big)\\
        &\leq Kk'L(B\varepsilon+\varepsilon)\lesssim\varepsilon.
    \end{align*}
    Consequently, we get $\|g_{G}-g_{\overline{G}}\|_1\lesssim\varepsilon$.
    This result implies that $\mathcal{H}$ is an $\varepsilon$-cover of the metric space $(\mathcal{P}_k(\Theta),\|\cdot\|_1)$. Then, it follows from the definition of the covering number that 
    \begin{align*}
        N\left(\varepsilon,\mathcal{P}_k(\Theta),\|\cdot\|_{1}\right)\leq|\mathcal{H}|=|\Theta_{\varepsilon}|\times |\Delta_{\varepsilon}|= \mathcal{O}(\varepsilon^{-K(d+1)k})\times\mathcal{O}(\varepsilon^{-(d+1)k})=\mathcal{O}(\varepsilon^{-(K+1)(d+1)k}),
    \end{align*}
    which is equivalent to $\log N\left(\varepsilon,\mathcal{P}_k(\Theta),\|\cdot\|_{1}\right)\leq \log(1/\varepsilon)$. 

    \textbf{Part (ii)} Given $\varepsilon>0$ and let $\eta\leq\varepsilon$ that we will choose later. Assume that $\mathcal{P}_k(\Theta)$ has an $\eta$-cover denoted by $\{p_1,p_2,\ldots,p_N\}$ where $N:=N(\eta,\mathcal{P}_k(\Theta),\|\cdot\|_1)$. Next, we start to construct brackets of the form $[L_i(Y|X),U_i(Y|X)]$ for all $i\in[N]$ as below:
    \begin{align*}
        L_i(Y|X)&:=\max\{p_i(Y|X)-\eta,0\},\\
        U_i(Y|X)&:=\max\{p_i(Y|X)+\eta,1\}.
    \end{align*}
    By this construction, we can verify that $\mathcal{P}_k(\Theta)\subset\bigcup_{i=1}^N[L_i(Y|X),U_i(Y|X)]$ and $U_i(Y|X)-L_i(Y|X)\leq \min\{2\eta,1\}$. Additionally, we also have
    \begin{align*}
        \|U_i(\cdot|X)-L_i(\cdot|X)\|_1=\sum_{\ell=1}^{K}[U_i(Y=\ell|X)-L_i(Y=\ell|X)]\leq 2K\eta.
    \end{align*}
    By definition, since $H_B(2K\eta,\mathcal{P}_k(\Theta),\|\cdot\|_1)$ is the logarithm of the smallest number of brackets of size $2K\eta$ required for covering $\mathcal{P}_k(\Theta)$, we obtain that
    \begin{align*}
        H_B(2K\eta,\mathcal{P}_k(\Theta),\|\cdot\|_1)\leq \log N(\eta,\mathcal{P}_k(\Theta),\|\cdot\|_1)\leq\log(1/\eta),
    \end{align*}
    where the last inequality follows from the result of Part (i). Thus, by choosing $\eta=\varepsilon/(2K)$, we have $H_B(\varepsilon,\mathcal{P}_k(\Theta),\|\cdot\|_1)\lesssim\log(1/\varepsilon)$. Furthermore, as $h\leq \|\cdot\|_1$, we achieve the desired conclusion:
    \begin{align*}
        H_B(\varepsilon,\mathcal{P}_k(\Theta),h)\lesssim\log(1/\varepsilon).
    \end{align*}
    Hence, the proof is completed.
\end{proof}
Subsequently, we denote 
\begin{align*}
   \overline{\mathcal{P}}_k(\Theta)&:=\{g_{(G+G_*)/2}(Y|X):G\in\mathcal{O}_k(\Theta)\},\\
   \overline{\mathcal{P}}^{1/2}_k(\Theta)&:=\{g^{1/2}_{(G+G_*)/2}(Y|X):G\in\mathcal{O}_k(\Theta)\}.
\end{align*}
For any $\xi>0$, the Hellinger ball centered around the true conditional density $g_{G_*}(Y|X)$ and intersected with the set $\overline{\mathcal{P}}^{1/2}_k(\Theta)$ is defined as
\begin{align*}
    \overline{\mathcal{P}}^{1/2}_k(\Theta,\xi):=\{g^{1/2}\in\overline{\mathcal{P}}_k(\Theta):\bbE_X[h(g(\cdot|X),g_{G_*}(\cdot|X))]\leq \xi\}.
\end{align*}
Moreover, Geer et al. \cite{Vandegeer-2000} proposes the following term to capture the size of the Hellinger ball $\overline{\mathcal{P}}^{1/2}_k(\Theta,\xi)$:
\begin{align}
    \label{eq:Hellinger_ball_size}
    \mathcal{J}_B(\xi,\overline{\mathcal{P}}^{1/2}_k(\Theta,\xi)):=\int_{\xi^2/2^{13}}^{\xi}H_B^{1/2}(t,\overline{\mathcal{P}}^{1/2}_k(\Theta,t),\|\cdot\|)\dint t \vee \xi,    
\end{align}
where $t\vee \xi:=\max\{t,\xi\}$. Now, let us recall below an important result regarding the density estimation rate in \cite{Vandegeer-2000} with adapted notations of this paper. 
\begin{lemma}[Theorem 7.4, \cite{Vandegeer-2000}]
    \label{lemma:vandegeer}
    Take $\Psi(\xi)\geq \mathcal{J}_B(\xi,\overline{\mathcal{P}}^{1/2}_k(\Theta,\xi))$ such that $\Psi(\xi)/\xi^2$ is a non-increasing function of $\xi$. Then, given a universal constant $c$ and a sequence $(\xi_n)$ that satisfies $\sqrt{n}\xi^2_n\geq c\Psi(\xi_n)$, we get
    \begin{align*}
        \bbP\Big(\bbE_X[h(g_{\widehat{G}_n}(\cdot|X),g_{G_*}(\cdot|X))]>\xi\Big)\leq c\exp\Big(-\frac{n\xi^2}{c^2}\Big),
    \end{align*}
    for any $\xi\geq\xi_n$.
\end{lemma}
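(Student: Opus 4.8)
My plan is to recognize this lemma as a verbatim transcription of Theorem~7.4 of \cite{Vandegeer-2000} specialized to the conditional-density class $\mathcal{P}_k(\Theta)$, so that to establish it I would only need to check its structural hypotheses in our setting: boundedness of $\Theta$, existence and measurability of the MLE $\widehat{G}_n$ over $\mathcal{O}_k(\Theta)$, and the bracketing-entropy control governing $\mathcal{J}_B(\cdot,\overline{\mathcal{P}}^{1/2}_k(\Theta,\cdot))$ in equation~\eqref{eq:Hellinger_ball_size}, the last of which is already provided by Lemma~\ref{lemma:empirical_processes}(ii). For completeness I would also be prepared to reproduce the underlying argument, sketched next.

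I would start from the \emph{basic inequality} forced by the optimality of the MLE: since $\widehat{G}_n$ maximizes $\sum_i\log g_G(Y_i|X_i)$ over $\mathcal{O}_k(\Theta)$, one has $\tfrac1n\sum_i\log\tfrac{g_{\widehat{G}_n}(Y_i|X_i)}{g_{G_*}(Y_i|X_i)}\ge 0$; passing to the half-mixture $\bar g:=\tfrac12(g_{\widehat{G}_n}+g_{G_*})\in\overline{\mathcal{P}}_k(\Theta)$ preserves the sign by concavity of $\log$, and then combining $\log x\le 2(\sqrt x-1)$ with the Hellinger identity $\sum_{s=1}^K\sqrt{\bar g(Y=s|X)\,g_{G_*}(Y=s|X)}=1-h^2(\bar g(\cdot|X),g_{G_*}(\cdot|X))$ and averaging over $X$ yields
\begin{align*}
    \bbE_X\!\left[h^2(\bar g(\cdot|X),g_{G_*}(\cdot|X))\right]\ \le\ (\mathbb{P}_n-\mathbb{P})\!\left[\sqrt{\bar g/g_{G_*}}-1\right],
\end{align*}
with $\mathbb{P}_n$ the empirical measure of $(X_i,Y_i)$ and $\mathbb{P}$ the true joint law. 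I would then control the empirical process on the right by a peeling device: partition $\{\bbE_X[h(\bar g,g_{G_*})]>\xi\}$ into dyadic shells $2^{j-1}\xi<\bbE_X[h(\bar g,g_{G_*})]\le 2^j\xi$, on which $\sqrt{\bar g/g_{G_*}}-1$ ranges over a class of envelope and $L_2(\mathbb{P})$-radius $O(2^j\xi)$ whose bracketing numbers are controlled through $\mathcal{J}_B$ via Lemma~\ref{lemma:empirical_processes}(ii); a chaining-plus-Bernstein maximal inequality for bracketing classes then bounds $\sup(\mathbb{P}_n-\mathbb{P})[\cdot]$ on the shell by a constant multiple of $n^{-1/2}\mathcal{J}_B(2^j\xi,\cdot)$ outside an event of probability $\lesssim\exp(-n(2^j\xi)^2/c^2)$. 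The assumptions $\sqrt n\,\xi_n^2\ge c\Psi(\xi_n)\ge c\,\mathcal{J}_B(\xi_n,\cdot)$ and the non-increase of $\Psi(\xi)/\xi^2$ make this fluctuation strictly dominated by $(2^{j-1}\xi)^2$ on every shell with $2^j\xi\ge\xi_n$, contradicting the basic inequality; summing the exponential bounds over $j\ge 1$ as a geometric series gives $\bbP(\bbE_X[h(\bar g(\cdot|X),g_{G_*}(\cdot|X))]>\xi)\le c\exp(-n\xi^2/c^2)$ for all $\xi\ge\xi_n$, and the pointwise comparison $h(g_{\widehat{G}_n}(\cdot|X),g_{G_*}(\cdot|X))\le(2+\sqrt2)\,h(\bar g(\cdot|X),g_{G_*}(\cdot|X))$ — the very reason for introducing $\overline{\mathcal{P}}_k(\Theta)$ — transfers the tail bound to $g_{\widehat{G}_n}$ after adjusting the constant.

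The hard part will be the uniform empirical-process control in the second step: the peeling, chaining and Bernstein estimates have to be orchestrated so that the fluctuation beats the ``signal'' $h^2$ on all shells simultaneously, which demands careful bookkeeping of the constants entering $\Psi$ and the bracketing-entropy bound. In the write-up, however, all of this is packaged in \cite[Theorem~7.4]{Vandegeer-2000}, so the actual task reduces to invoking that theorem with $\Psi(\xi)\asymp\xi\sqrt{\log(1/\xi)}$, obtained by plugging the estimate $H_B(\varepsilon,\mathcal{P}_k(\Theta),h)\lesssim\log(1/\varepsilon)$ of Lemma~\ref{lemma:empirical_processes}(ii) into $\mathcal{J}_B$ and checking that $\Psi(\xi)/\xi^2$ is non-increasing; this is then used with $\xi_n\asymp\sqrt{\log(n)/n}$ to deduce Proposition~\ref{prop:density_estimation}.
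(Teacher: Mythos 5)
Your proposal is correct and follows the same route as the paper: the paper's entire ``proof'' of this lemma is the sentence ``The proof of this Lemma is in \cite{Vandegeer-2000},'' i.e.\ it treats Theorem~7.4 of van de Geer as a black box whose hypotheses (boundedness of $\Theta$, the bracketing-entropy control from Lemma~\ref{lemma:empirical_processes}(ii), and the non-increase of $\Psi(\xi)/\xi^2$) are verified downstream in the proof of Proposition~\ref{prop:density_estimation}. Your additional sketch of the underlying mechanism --- the basic inequality from MLE optimality passed through the convex half-mixture, the inequality $\log x\le 2(\sqrt{x}-1)$, the Hellinger identity, and the dyadic peeling with chaining/Bernstein bounds on bracketing balls --- is a faithful condensation of van de Geer's argument and goes beyond what the paper supplies, but it does not change the fundamental strategy of citing that theorem.
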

The proof of this Lemma is in \cite{Vandegeer-2000}. 

\subsubsection{Main Proof}
\label{appendix:proof_density_estimation}

It is worth noting that $H_B(t,\overline{\mathcal{P}}^{1/2}_k(\Theta,t),\|\cdot\|)\leq H_B(t,\mathcal{P}_k(\Theta),h)$ for any $t>0$. Then, equation~\eqref{eq:Hellinger_ball_size} indicates that
\begin{align*}
    \mathcal{J}_B(\xi,\overline{\mathcal{P}}^{1/2}_k(\Theta,\xi))\leq \int_{\xi^2/2^{13}}^{\xi}H_B^{1/2}(t,\overline{\mathcal{P}}^{1/2}_k(\Theta,t),h)\dint t\vee\xi \lesssim \int_{\xi^2/2^{13}}^{\xi}\log(1/t)\dint t\vee\xi,
\end{align*}
where the second inequality follows from part (ii) of Lemma~\ref{lemma:empirical_processes}. By setting $\Psi(\xi)=\xi\sqrt{\log(1/\xi)}$ such that $\Psi(\xi)\geq \mathcal{J}_B(\xi,\overline{\mathcal{P}}^{1/2}_k(\Theta,\xi))$ and $\xi_n=\xi\sqrt{\log(1/\xi)}$, Lemma~\ref{lemma:vandegeer} gives us that
\begin{align*}
    \bbP\Big(\bbE_X[h(g_{\widehat{G}_n}(\cdot|X),g_{G_*}(\cdot|X))]>\xi\Big)\leq c\exp\Big(-\frac{n\xi^2}{c^2}\Big),
\end{align*}
where $C$ and $c$ are universal positive constants depending only on $\Theta$. Hence, the proof is completed.

\subsection{Proof of Proposition~\ref{prop:modified_density_estimation}}
\label{appendix:modified_density_estimation}
From Definition~\ref{def:modified_function}, since $M(X)$ is a bounded function of $X$, the arguments presented in Appendix~\ref{appendix:density_estimation} still hold under the modified softmax gating multinomial logistic mixture of experts.
\section{Proofs for the Identifiablity of the (Modified) Softmax Gating Multinomial Logistic MoE}
\label{appendix:general_identifiability}
In this appendix, we provide the proofs of Proposition~\ref{prop:identifiability} and Proposition~\ref{prop:modified_identifiability} in Appendix~\ref{appendix:identifiability} and Appendix~\ref{appendix:modified_identifiability}, respectively.
\subsection{Proof of Proposition~\ref{prop:identifiability}}
\label{appendix:identifiability}
From the assumption of Proposition~\ref{prop:identifiability}, the following equation holds for any $s\in[K]$ and almost surely $X\in\mathcal{X}$:
\begin{align}
    \sum_{i=1}^{k}\frac{\exp((\beta_{1i})^{\top}X+\beta_{0i})}{\sum_{j=1}^{k}\exp((\beta_{1i})^{\top}X+\beta_{0i})}\cdot&\frac{\exp(a_{is}+(b_{is})^{\top}X)}{\sum_{\ell=1}^{K}\exp(a_{i\ell}+(b_{i\ell})^{\top}X)}\nonumber\\
    \label{eq:identifiable_equation}
    &=\sum_{i=1}^{k'}\frac{\exp((\beta'_{1i})^{\top}X+\beta'_{0i})}{\sum_{j=1}^{k'}\exp((\beta'_{1i})^{\top}X+\beta'_{0i})}\cdot\frac{\exp(a'_{is}+(b'_{is})^{\top}X)}{\sum_{\ell=1}^{K}\exp(a'_{i\ell}+(b'_{i\ell})^{\top}X)}.
\end{align}
According to \cite{grun2008iden}, the multinomial logistic mixtures are identifiable, which implies that two mixing measures $G$ and $G'$ share the number of experts and the gating set of the mixing measure, i.e. $k=k'$ and
\begin{align*}
    \left\{\frac{\exp((\beta_{1i})^{\top}X+\beta_{0i})}{\sum_{j=1}^{k}\exp((\beta_{1i})^{\top}X+\beta_{0i})}:i\in[k]\right\}\equiv\left\{\frac{\exp((\beta'_{1i})^{\top}X+\beta'_{0i})}{\sum_{j=1}^{k}\exp((\beta'_{1i})^{\top}X+\beta'_{0i})}:i\in[k]\right\},
\end{align*}
for almost surely $X\in\mathcal{X}$. WLOG, we assume that
\begin{align*}
    \frac{\exp((\beta_{1i})^{\top}X+\beta_{0i})}{\sum_{j=1}^{k}\exp((\beta_{1i})^{\top}X+\beta_{0i})}=\frac{\exp((\beta'_{1i})^{\top}X+\beta'_{0i})}{\sum_{j=1}^{k}\exp((\beta'_{1i})^{\top}X+\beta'_{0i})},
\end{align*}
for any $i\in[k]$. As $\beta_{1k}=\beta'_{1k}=\zerod$ and $\beta_{0k}=\beta'_{0k}=0$, the above result leads to $\beta_{1i}=\beta'_{1i}$ and $\beta_{0i}=\beta'_{0i}$ for all $i\in[k]$. Thus, the equation~\eqref{eq:identifiable_equation} becomes
\begin{align}
    \label{eq:simplified_equation}
    \sum_{i=1}^{k}\exp(\beta_{0i})u(Y=s|X;\beta_{1i},a_i,b_i)=\sum_{i=1}^{k}\exp(\beta_{0i})u(Y=s|X;\beta_{1i},a'_i,b'_i),
\end{align}
for any $s\in[K]$ and almost surely $X\in\mathcal{X}$, where $u(Y=s|X;\beta_{1i},a_i,b_i):=\exp(\beta_{1i}^{\top}X)\cdot\dfrac{\exp(a_{is}+(b_{is})^{\top}X)}{\sum_{\ell=1}^{K}\exp(a_{i\ell}+(b_{i\ell})^{\top}X)}$ and $a_i=(a_{i1},a_{i2},\ldots,a_{iK})$, $b_i=(b_{i1},b_{i2},\ldots,b_{iK})$. 

Subsequently, we will consider $r$ subsets of the set $[k]$, denoted by $S_1,S_2,\ldots,S_r$ that satisfy the following property: $\exp(\beta_{0i})=\exp(\beta_{0i'})$ for any $i,i'\in S_{t}$ for $t\in[r]$. Therefore, we can rewrite equation~\eqref{eq:simplified_equation} as
\begin{align*}
    \sum_{t=1}^{r}\sum_{i\in S_{t}}\exp(\beta_{0i})u(Y=s|X;\beta_{1i},a_i,b_i)=\sum_{t=1}^{r}\sum_{i\in S_{t}}\exp(\beta_{0i})u(Y=s|X;\beta_{1i},a'_i,b'_i),
\end{align*}
for any $s\in[K]$ and almost surely $X\in\mathcal{X}$. It follows from the above equation that for each $t\in[r]$, we get $\{(a_{i\ell}+(b_{i\ell})^{\top}X)_{\ell=1}^{K}:i\in S_t\}\equiv\{(a'_{i\ell}+(b'_{i\ell})^{\top}X)_{\ell=1}^{K}:i\in S_t\}$ for almost surely $X\in\mathcal{X}$. This leads to
\begin{align*}
    \Big\{(a_{i1},\ldots,a_{iK},b_{i1},\ldots,b_{iK}):i\in S_t\Big\}\equiv \Big\{(a'_{i1},\ldots,a'_{iK},b'_{i1},\ldots,b'_{iK}):i\in S_t\Big\}.
\end{align*}
Again, we may assume WLOG that $(a_{i1},\ldots,a_{iK},b_{i1},\ldots,b_{iK})=(a'_{i1},\ldots,a'_{iK},b'_{i1},\ldots,b'_{iK})$ for any $i\in S_t$. As a result, we obtain that
\begin{align*}
    \sum_{t=1}^{r}\sum_{i\in S_t}\exp(\beta_{0i})\delta_{(\beta_{1i},a_{i1},\ldots,a_{iK},b_{i1},\ldots,b_{iK})}=\sum_{t=1}^{r}\sum_{i\in S_t}\exp(\beta'_{0i})\delta_{(\beta'_{1i},a'_{i1},\ldots,a'_{iK},b'_{i1},\ldots,b'_{iK})}.
\end{align*}
In other words, we achieve that $G\equiv G'$, which completes the proof.
\subsection{Proof of Proposition~\ref{prop:modified_identifiability}}
\label{appendix:modified_identifiability}
According to the assumption of Proposition~\ref{prop:modified_identifiability}, the following equation holds for any $s\in[K]$ and almost surely $X\in\mathcal{X}$:
\begin{align}
    \sum_{i=1}^{k}\frac{\exp((\beta_{1i})^{\top}M(X)+\beta_{0i})}{\sum_{j=1}^{k}\exp((\beta_{1i})^{\top}M(X)+\beta_{0i})}\cdot&\frac{\exp(a_{is}+(b_{is})^{\top}X)}{\sum_{\ell=1}^{K}\exp(a_{i\ell}+(b_{i\ell})^{\top}X)}\nonumber\\
    \label{eq:modified_identifiable_equation}
    &=\sum_{i=1}^{k'}\frac{\exp((\beta'_{1i})^{\top}M(X)+\beta'_{0i})}{\sum_{j=1}^{k'}\exp((\beta'_{1i})^{\top}M(X)+\beta'_{0i})}\cdot\frac{\exp(a'_{is}+(b'_{is})^{\top}X)}{\sum_{\ell=1}^{K}\exp(a'_{i\ell}+(b'_{i\ell})^{\top}X)}.
\end{align}
Since the multinomial logistic mixtures are identifiable (see \cite{grun2008iden}), two mixing measures $G$ and $G'$ admit the same number of experts and the same gating set, i.e. $k=k'$ and
\begin{align*}
    \left\{\frac{\exp((\beta_{1i})^{\top}M(X)+\beta_{0i})}{\sum_{j=1}^{k}\exp((\beta_{1i})^{\top}M(X)+\beta_{0i})}:i\in[k]\right\}\equiv\left\{\frac{\exp((\beta'_{1i})^{\top}M(X)+\beta'_{0i})}{\sum_{j=1}^{k}\exp((\beta'_{1i})^{\top}M(X)+\beta'_{0i})}:i\in[k]\right\},
\end{align*}
for almost surely $X\in\mathcal{X}$. WLOG, we assume that
\begin{align*}
    \frac{\exp((\beta_{1i})^{\top}X+\beta_{0i})}{\sum_{j=1}^{k}\exp((\beta_{1i})^{\top}X+\beta_{0i})}=\frac{\exp((\beta'_{1i})^{\top}X+\beta'_{0i})}{\sum_{j=1}^{k}\exp((\beta'_{1i})^{\top}X+\beta'_{0i})},
\end{align*}
for any $i\in[k]$. From the Definition~\ref{def:modified_function}, we know that $M(X)$ is a bounded function of $X$. Moreover, since $\beta_{1k}=\beta'_{1k}=\zerod$ and $\beta_{0k}=\beta'_{0k}=0$, the above result implies that $\beta_{1i}=\beta'_{1i}$ and $\beta_{0i}=\beta'_{0i}$ for all $i\in[k]$. Therefore, the equation~\eqref{eq:modified_identifiable_equation} can be reformulated as follows:
\begin{align*}
    \sum_{i=1}^{k}\exp(\beta_{0i})u(Y=s|X;\beta_{1i},a_i,b_i)=\sum_{i=1}^{k}\exp(\beta_{0i})u(Y=s|X;\beta_{1i},a'_i,b'_i),
\end{align*}
for any $s\in[K]$ and almost surely $X\in\mathcal{X}$, where $u(Y=s|X;\beta_{1i},a_i,b_i):=\exp(\beta_{1i}^{\top}X)\cdot\dfrac{\exp(a_{is}+(b_{is})^{\top}X)}{\sum_{\ell=1}^{K}\exp(a_{i\ell}+(b_{i\ell})^{\top}X)}$ and $a_i=(a_{i1},a_{i2},\ldots,a_{iK})$, $b_i=(b_{i1},b_{i2},\ldots,b_{iK})$. Then, we can apply the arguments used in Appendix~\ref{appendix:identifiability} to deduce that $G\equiv G'$.

\section{Simulation Studies}
\label{appendix:simulation}
In this appendix, we carry out several numerical experiments to empirically verify our theoretical results regarding the convergence rates of maximum likelihood estimation in the standard softmax gating multinomial logistic MoE model under the Regime 1 in Appendix~\ref{appendix:regime_1}. Meanwhile, under the Regime 2, we aim to empirically demonstrate the benefits of using modified softmax gating functions over the standard softmax gating function in the parameter estimation problem of the multinomial logistic MoE model in Appendix~\ref{appendix:regime_2}.
\subsection{Regime 1}
\label{appendix:regime_1}
%
\textbf{Synthetic Data.} We first sample the covariates $X$ from the uniform distribution over $[0,1]$. Then, we draw the response $Y$ from the following conditional density $g_{G_{*}}(Y = s | X)$ of a softmax gating binomial logistic mixture of $k_*=2$ experts: 
\begin{align}
    \label{eq:new_density}
    g_{G_{*}}(Y = s | X):=\sum_{i=1}^{2}~\frac{\exp((\beta^*_{1i})^{\top}X+\beta^*_{0i})}{\sum_{j=1}^{2}\exp((\boj)^{\top}X+\bzj)}\times\frac{\exp(a^*_{is}+(b^*_{is})^{\top}X)}{\sum_{\ell=1}^{K}\exp(a^*_{i\ell}+(b^*_{i\ell})^{\top}X)},
\end{align}
for $s\in[K]$, where $K=2$. Here, the true mixing measure $G_{*} = \sum_{i = 1}^{2} \exp(\beta_{0i}^{*}) \delta_{(\beta_{1i}^{*}, a_{i1}^{*},a_{i2}^{*}, b_{i1}^{*}, b_{i2}^{*})}$ consists of $k_*=2$ components with parameters given in Table \ref{tab:true-regime1}, which satisfy the assumptions of Regime 1.
\begin{table}[ht]
    \centering
    \begin{tabular}{c c  c c}
    \toprule
                    & Gating parameters & 
                    \multicolumn{2}{c}{Expert parameters}\\
                {}  & {} & {\small Class 1} & {\small Class 2}\\
    \midrule
        $i=1$  & $(\beta_{01}^{*}, \beta_{11}^{*}) = (1, 3)$ & 
        
                   $(a_{11}^{*}, b_{11}^{*}) = (-1, 2)$ & $(a_{12}^{*}, b_{12}^{*}) = (0, 0)$\\

        $i=2$ & $(\beta_{02}^{*}, \beta_{12}^{*}) = (0, 0)$ & $ (a_{21}^{*}, b_{21}^{*}) = (1, -1)$ & $(a_{22}^{*}, b_{22}^{*}) = (0, 0)$\\
    \bottomrule
    \end{tabular}
    \caption{True parameters under the Regime 1.}
    \label{tab:true-regime1}
\end{table}


\textbf{Initialization.}  We then compute the MLE $\widehat{G}_n$ \wrt with the number of components $k \in \left\{k_*+1,k_*+2\right\}$ for each sample using the EM algorithm \cite{dempster_maximum_1977} with convergence criterion $\varepsilon=10^{-6}$ and $2000$ maximum EM iterations. 
For each $k\in\{k_*+1,k_*+2\}$, we randomly assign elements of the set $\{1, 2, ..., k\}$ into $k_*$ distinct sets $S_1, S_2, \ldots, S_{k_*}$, ensuring that each set contains at least one element. Moreover, we repeat this process for each replication. Following this, for each $i\in[k_*]$, we initialize the parameters by sampling from a Gaussian distribution with a mean centered around its true counterpart and a small variance. For each of $200$ different choices of sample size $n$ between $n_{\min}=10^4$ and $n_{\max} = 10^5$, we generate $40$ samples of size $n$.
All the code for our simulation study was written in Python 3.9 on a standard Unix machine.  
%
%

\textbf{Empirical Convergence Rates.} Subsequently, we report the empirical means of the discrepancy $\mathcal{D}_2$ between $\widehat{G}_n$ and $G_*$, and the choices of $k$ under the Regime 1 in Figure~\ref{fig_plot_model_regime1}.
%
It can be observed from Figures \ref{fig_plot_model_regime1_k3} and \ref{fig_plot_model_regime1_over_fitted_k4} that the empirical vanishing rates of the average discrepancy $\mathcal{D}_2(\widehat{G}_n,G_*)$ are of orders $\widetilde{\mathcal{O}}(n^{-0.48})$ and $\widetilde{\mathcal{O}}(n^{-0.43})$ when $k=3$ and $k=4$, respectively. These rates are slightly slower than the theoretical rate of order $\widetilde{\mathcal{O}}(n^{-0.5})$ in Theorem~\ref{theorem:standard_one}.
The main reason is that there has been only theoretical guarantee of global convergence for the parameter estimation under the mixture of experts with covariate-free gating function (see \cite{kwon2021minimax,kwon2020em,kwon2019global}), while that for the softmax gating mixture of experts has remained missing in the literature. 
In order for the empirical vanishing rate to match the theoretical one, the sample size $n$ must be large enough to compensate for the global convergence problem.

%
\begin{figure}[!ht]
    \centering
    \begin{subfigure}{.47\textwidth}
    \centering
    \includegraphics[scale = .49]{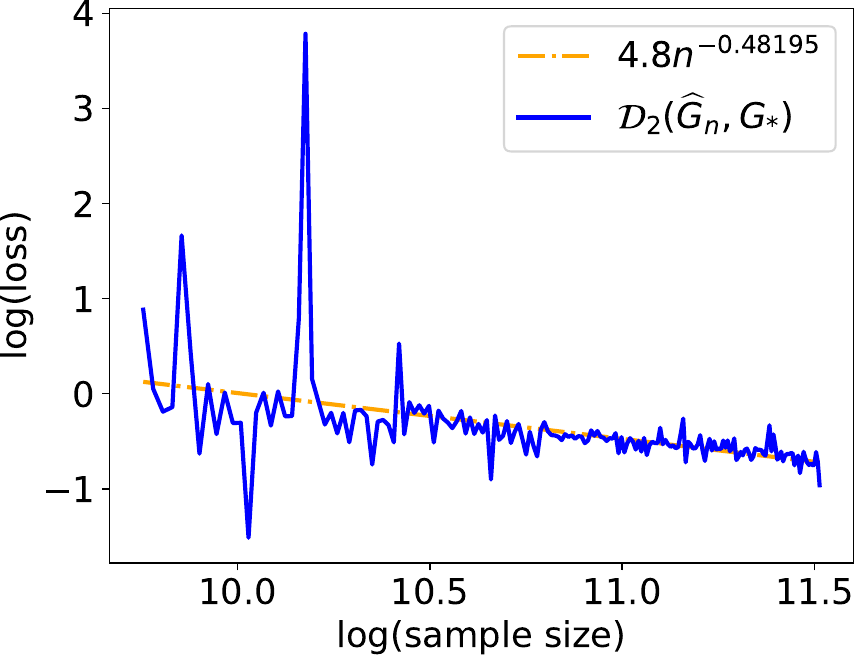}
    \caption{Over-specified setting with $k=3$ experts
    }
    \label{fig_plot_model_regime1_k3}
\end{subfigure}
    \hspace{0.6cm}
        \begin{subfigure}{.47\textwidth}
    \centering
    \includegraphics[scale = .49]{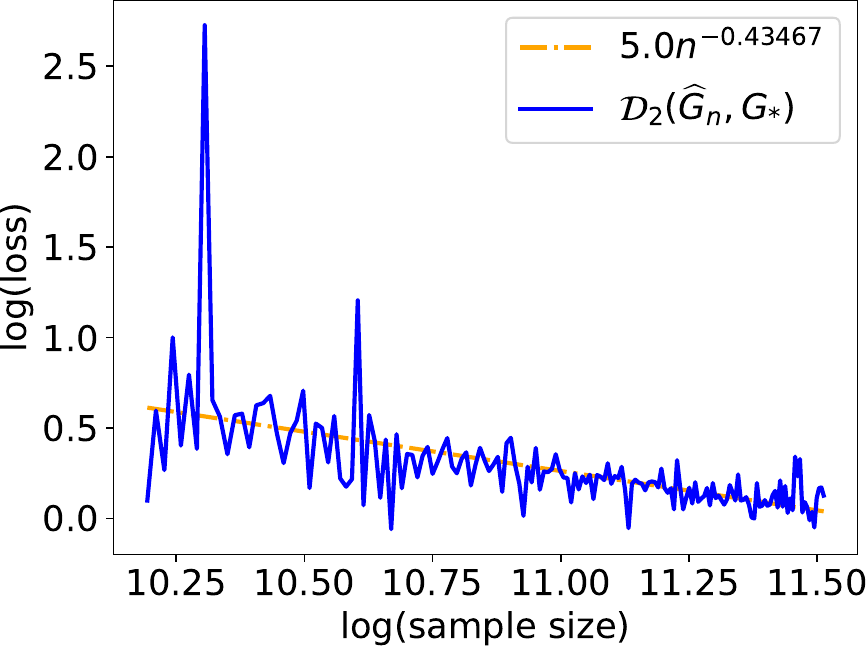}
    \caption{Over-specified setting with $k=4$ experts
    }	
    \label{fig_plot_model_regime1_over_fitted_k4}
\end{subfigure}
    %
    %
    \caption{Two log-log scaled plots for the empirical convergence rates of the MLE $\widehat{G}_n$ when the true model in equation~\eqref{eq:new_density} is over-specified by a softmax gating binomial logistic mixture of $k=3$ and $k=4$ experts, respectively.
    In these figures, the empirical means of the discrepancy $\mathcal{D}_2(\widehat{G}_n,G_*)$ are illustrated by the blue curves, while the oranges dash-dotted lines represent for the least-squares fitted linear regression lines. \label{fig_plot_model_regime1}}
\end{figure}

\subsection{Regime 2}
\label{appendix:regime_2}
\textbf{Synthetic Data.} We also generate the covariates $X$ from the uniform distribution over $[0,1]$. For the multinomial logistic MoE model with standard softmax gating function, we draw the response $Y$ from the conditional density $g_{G_*}(Y=s|X)$ given in equation~\eqref{eq:new_density}, while for the modified softmax gating function, we sample $Y$ from the following conditional density:
\begin{align}
    \widetilde{g}_{G_*}(Y=s|X):=\sum_{i=1}^{2}~\frac{\exp((\beta^*_{1i})^{\top}M(X)+\beta^*_{0i})}{\sum_{j=1}^{2}\exp((\boj)^{\top}M(X)+\bzj)}\times\frac{\exp(a^*_{is}+(b^*_{is})^{\top}X)}{\sum_{\ell=1}^{K}\exp(a^*_{i\ell}+(b^*_{i\ell})^{\top}X)},\label{eq:new_density2}
\end{align}
where we consider the standard softmax gating function $M(X) = X$ and $M(X) = \mathrm{sigmoid}(X)$. Next, while we keep the formulation of the true mixing measure $G_*=\sum_{i = 1}^{2} \exp(\beta_{0i}^{*}) \delta_{(\beta_{1i}^{*}, a_{i1}^{*},a_{i2}^{*}, b_{i1}^{*}, b_{i2}^{*})}$, the parameter values are slightly changed to satisfy the assumptions of Regime 2. In particular, we set $b^*_{21}=b^*_{22}=0$, while other parameters remains the same. More details can be found in Table~\ref{tab:true-regime2}.
\begin{table}[ht!]
    \centering
    \begin{tabular}{c c  c c}
    \toprule
                    & Gating parameters & 
                    \multicolumn{2}{c}{Expert parameters}\\
                {}  & {} & {\small Class 1} & {\small Class 2}\\
    \midrule
        $i=1$  & $(\beta_{01}^{*}, \beta_{11}^{*}) = (1, 3)$ & 
        
                   $(a_{11}^{*}, b_{11}^{*}) = (-1, 2)$ & $(a_{12}^{*}, b_{12}^{*}) = (0, 0)$\\

        $i=2$ & $(\beta_{02}^{*}, \beta_{12}^{*}) = (0, 0)$ & $ (a_{21}^{*}, b_{21}^{*}) = (1, 0)$ & $(a_{22}^{*}, b_{22}^{*}) = (0, 0)$\\
    \bottomrule
    \end{tabular}
    \caption{True parameters under the Regime 2.}
    \label{tab:true-regime2}
\end{table}

\subsubsection{Empirical Convergence Rates of the Voronoi-based Loss}
\begin{figure}[!ht]
    \centering
    \begin{subfigure}{.47\textwidth}
    \centering
    \includegraphics[scale = .46]{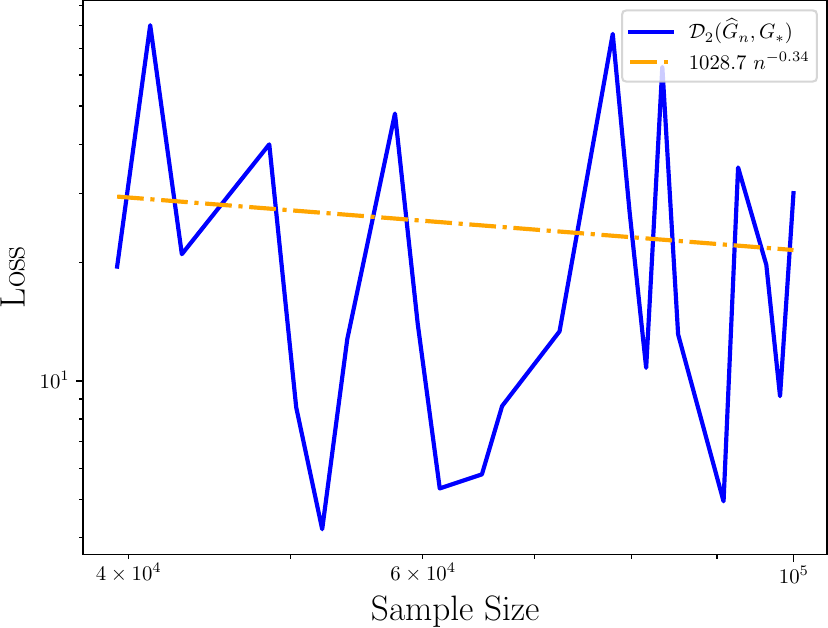}
    \caption{Over-specified setting with $k=3$ experts,\\ $M(X) = X$
    }
    \label{fig_plot_model_regime2_k3}
\end{subfigure}
    \hfill
\begin{subfigure}{.47\textwidth}
    \centering
    \includegraphics[scale = .46]{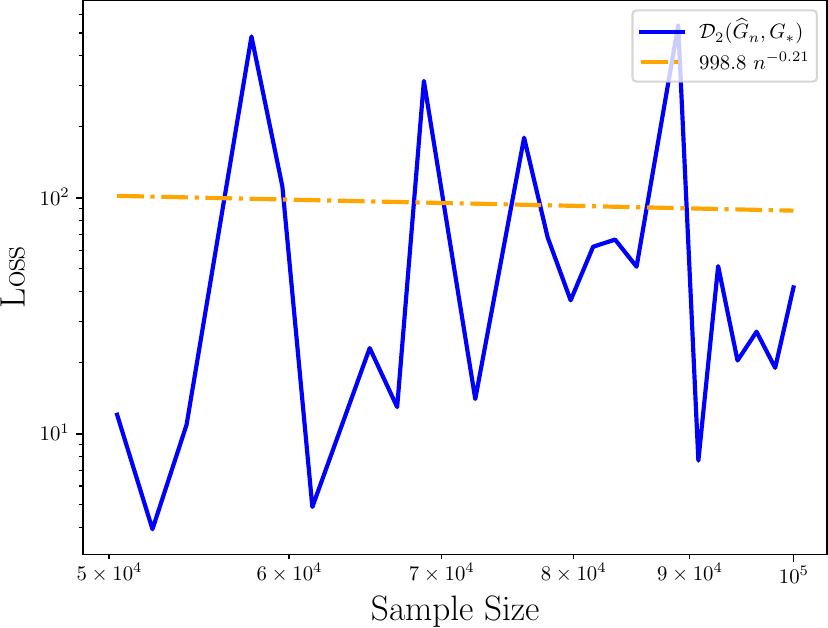}
    \caption{Over-specified setting with $k=4$ experts,\\ $M(X) = X$
    }
    \label{fig_plot_model_regime2_over_fitted_k4}
\end{subfigure}
\vskip\baselineskip
\begin{subfigure}{.47\textwidth}
    \centering
    \includegraphics[scale = .49]{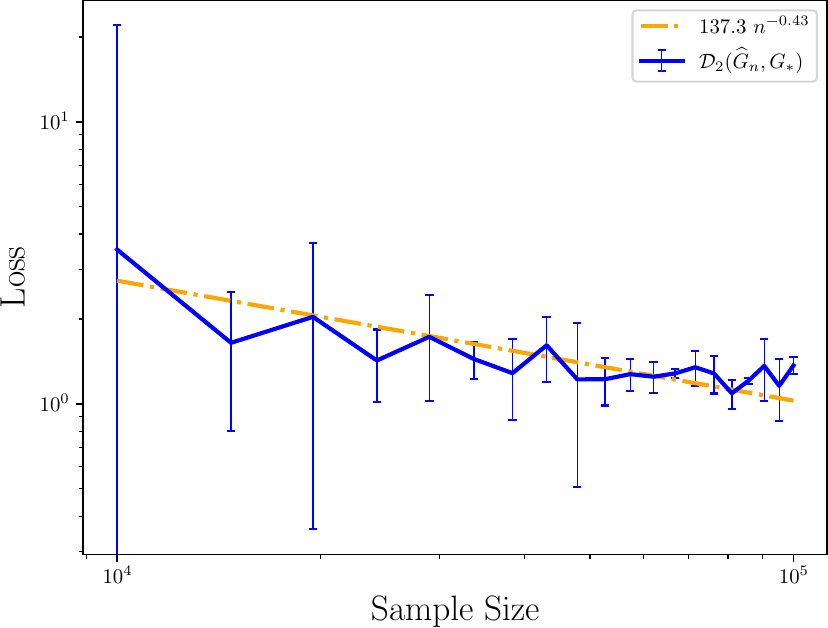}
    \caption{Over-specified setting with $k=3$ experts, $M(X) = \mathrm{sigmoid}(X)$.
    }
    \label{fig:regime2-k3-sigmoid}
\end{subfigure}
\hfill
\begin{subfigure}{.47\textwidth}
    \centering
    \includegraphics[scale = .49]{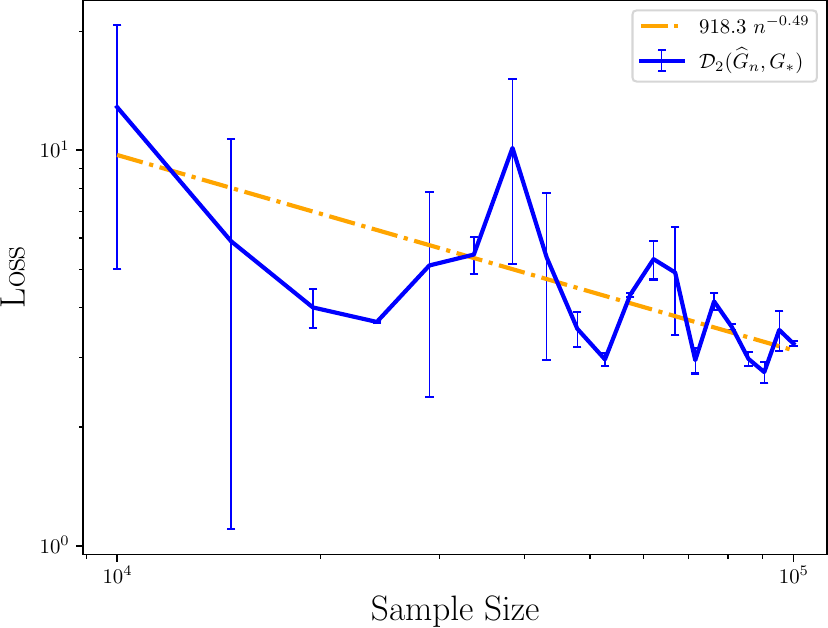}
    \caption{Over-specified setting with $k=4$ experts, $M(X) = \mathrm{sigmoid}(X)$.
    }	
    \label{fig:regime2-k4-sigmoid}
\end{subfigure}
    \caption{log-log scaled plots for the empirical convergence rates of the MLE $\widehat{G}_n$ when the true model in equation~\eqref{eq:new_density2} is over-specified by a softmax gating binomial logistic mixture with $M(X) = X$ and $M(X) = \mathrm{sigmoid}(X)$ of $k=3$ and $k=4$ experts, respectively.
    In these figures, the empirical means of the discrepancy $\mathcal{D}_2(\widehat{G}_n,G_*)$ are illustrated by the blue curves, while the oranges dash-dotted lines represent for the least-squares fitted linear regression lines. \label{fig_plot_model_regime2}}
\end{figure}

\textbf{Initialization.}
We first initialize fitted parameters in the same fashion as those in Appendix~\ref{appendix:regime_1}.

\textbf{Standard Gating Function.}
For $k=3$ we choose $35$ different values of sample size $n$ between $n_{\min}\approx 37\times10^3$ and $n_{\max} = 10^5$, while for $k = 4$ we select $28$ different choices of sample size $n$ between $n_{\min}\approx 50\times10^3$ and $n_{\max} = 10^5$. In both cases we generate the corresponding samples of size $n$.

\textbf{Modified Gating Function.}
We employ $M(X) = \mathrm{sigmoid}(X)$ and conduct 40 experiments for each sample size, covering a spectrum of 20 different sample sizes ranging from $10^4$ to $10^5$.

\textbf{Empirical Convergence Rates.} Subsequently, we report the empirical means of the discrepancy $\mathcal{D}_2$ between $\widehat{G}_n$ and $G_*$, and the choices of $k$ under the Regime 2 for standard gating function and modified gating function in Figure~\ref{fig_plot_model_regime2}.
It can be observed from Figures \ref{fig_plot_model_regime2_k3} and \ref{fig_plot_model_regime2_over_fitted_k4} that the empirical vanishing rates of the average discrepancy $\mathcal{D}_2(\widehat{G}_n,G_*)$ are of orders $\widetilde{\mathcal{O}}(n^{-0.34})$ and $\widetilde{\mathcal{O}}(n^{-0.21})$ when $k=3$ and $k=4$, respectively. These rates are slightly slower than the theoretical rate of order $\widetilde{\mathcal{O}}(n^{-1/2r})$ for some $ r\geq 1$ in Theorem~\ref{theorem:standard_two}.
Moreover, for $M(X) = \mathrm{sigmoid}(X)$ as illustrated in Figures~\ref{fig:regime2-k3-sigmoid} and \ref{fig:regime2-k4-sigmoid} the utilization of $M(X) = \mathrm{sigmoid}(X)$ in the gating function resulted in an enhanced convergence rate of $\widetilde{\mathcal{O}}(n^{-1/2})$ for both $k=3$ and $k=4$.
The main reasons are that the literature lacks a theoretical guarantee of global convergence for parameter estimation under the softmax mixture of experts, and that the MLE $\widehat{G}_n$ (which is the standard softmax mixture function) takes a very long time to converge to the true mixture measure $G_*$ even when we use $2000$ maximum number of EM iterations, see more in Appendices~\ref{appendix:regime_1} and ~\ref{sec_ECEM}. 


\subsubsection{Empirical Convergence Rates of the EM Algorithm} \label{sec_ECEM}
\textbf{Initialization.} We assume that the MLEs $\widehat{G}_n$ (\wrt the standard softmax gating function) and $\widetilde{G}_n$ (\wrt the modified softmax gating functions) have $k=3$ components. Next, we initialize fitted parameters in the same fashion as those in Appendix~\ref{appendix:regime_1}. With the sample size $n=10^4$, we run the EM algorithm for $N=200$ iterations, and compute the negative log-likelihood value at each iteration.
\begin{figure}[!ht]
    \centering
    \includegraphics[scale=.49]{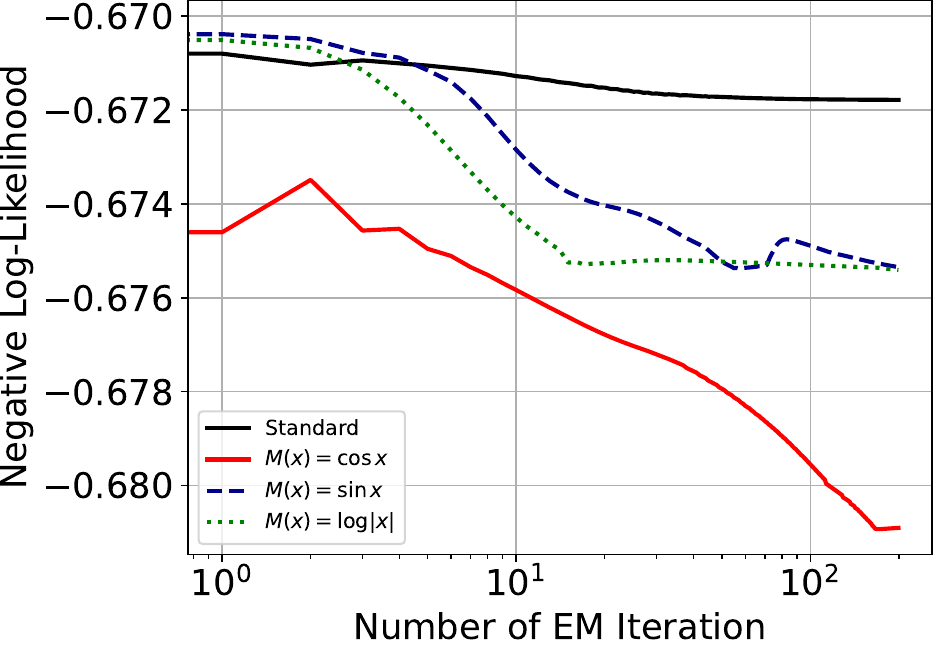}
    
    \caption{Empirical convergence rates of the EM algorithm with the standard softmax gating function and three different modified softmax gating functions with $M(X)\in\{\sin(X),\cos(X),\log(|X|)\}$. The y-axis indicates the negative log-likelihood, while the x-axis illustrates the number of EM iterations.}
    \label{fig:modified_softmax}
\end{figure}

\textbf{Negative Log-likelihood.} It can be seen from Figure~\ref{fig:modified_softmax} that the negative log-likelihood corresponding to the modified softmax gating function with $M(X)=\cos(X)$ experiences a sharp drop after 200 iterations. Meanwhile, those for $M(X)=\sin(X)$ and $M(X)=\log(|X|)$ decline at a nearly same rate but slower than that for $M(X)=\cos(X)$. On the other hand, the negative log-likelihood corresponding to the standard softmax gating function remains almost unchanged. Those observations suggest that it would take a very long time for the MLE $\widehat{G}_n$ (\wrt the standard softmax gating function) to converge to the true mixing measure $G_*$. By contrast, if we use the modified softmax gating functions with $M(X)\in\{\sin(X),\cos(X),\log(|X|)\}$, the convergence rates of the corresponding MLE $\widetilde{G}_n$ to $G_*$ would be substantially faster. As a consequence, this figure highlights the advantages of using the modified softmax gating functions over the standard softmax gating function in the parameter estimation of the multinomial logistic MoE model.

\end{document}